\title{\LARGE \bf  Finite-Sample Analysis of Stochastic Approximation Using Smooth Convex Envelopes}
\author{
{\normalsize Zaiwei Chen}\thanks{Georgia Institute of Technology, {\color{blue}zchen458@gatech.edu}}
\and
{\normalsize Siva Theja Maguluri}\thanks{Georgia Institute of Technology, {\color{blue}siva.theja@gatech.edu }}
\and
{\normalsize Sanjay Shakkottai\thanks{The University of Texas at Austin, {\color{blue}sanjay.shakkottai@utexas.edu}}}
\and
{\normalsize Karthikeyan Shanmugam}\thanks{IBM Research NY, {\color{blue}Karthikeyan.Shanmugam2@ibm.com}}
}
\date{}
\begin{document}
\maketitle

\begin{abstract}
    Stochastic Approximation (SA) is a popular approach for solving fixed-point equations where the information is corrupted by noise. In this paper, we consider an SA involving a contraction mapping with respect to an arbitrary norm, and show its finite-sample error bounds while using different stepsizes. The idea is to construct a smooth Lyapunov function using the generalized Moreau envelope, and show that the iterates of SA have negative drift with respect to that Lyapunov function.  Our result is applicable in Reinforcement Learning (RL). In particular, we use it to establish the first-known convergence rate of the V-trace algorithm for off-policy TD-learning \cite{espeholt2018impala}. Moreover, we also use it to study TD-learning in the on-policy setting, and recover the existing state-of-the-art results for $Q$-learning. Importantly, our construction results in only a logarithmic dependence of the convergence bound on the size of the state-space.
\end{abstract}

\section{Introduction}\label{sec:intro}
Reinforcement Learning (RL) captures an important facet of machine learning going beyond prediction and regression: sequential decision making, and has had great impact in various problems of practical interest \cite{silver2016mastering,mirowski2018learning,shalev2016safe}. At the heart of RL is the problem of iteratively solving the Bellman's equation using noisy samples, i.e. solving a fixed-point equation of the form $\mathcal{H}(x)=x$. Here, $\mathcal{H}$ is a contractive operator with respect to a suitable norm, where we only have access to samples from noisy versions of the operator. Such fixed-point equations, more broadly, are solved through the framework of Stochastic Approximation (SA) algorithms \cite{robbins1951stochastic}, with several RL algorithms such as $Q$-learning and TD-learning being examples there-of. This paper focuses on understanding the evolution of such a noisy fixed-point iteration through the lens of SA, and providing finite-sample convergence results.

More formally, the SA algorithm for solving the fixed-point equation $\mathcal{H}(x)=x$ is of the form $x_{k+1}=x_k+\epsilon_k\left(\mathcal{H}(x_k)-x_k+w_k\right)$, where $\{\epsilon_k\}$ is the stepsize sequence, and $\{w_k\}$ is the noise sequence. To derive finite-sample bounds, three conditions are pertinent: (a) The norm in which the operator $\mathcal{H}$ contracts, (b) The mean zero noise when conditioned on the past, and (c) The nature of the bound on the conditional second moment of the noise.

In prior literature, if the conditional second moment of the noise $\{w_k\}$ is uniformly bounded by a constant, then the norm with respect to which $\mathcal{H}$ being a contraction becomes irrelevant, and it is possible to derive finite-sample convergence guarantees \cite{beck2012error,beck2013improved,even2003learning,dvoretzky1956}. When the second moment of the noise is not uniformly bounded, then finite-sample bounds can  be derived in the case where the norm for contraction of $\mathcal{H}$ is the Euclidean norm \cite{bertsekas1996neuro,bottou2018optimization}.
However, in many RL problems, the contraction of $\mathcal{H}$ occurs with respect to a different norm (e.g. the $\ell_{\infty}$-norm \cite{watkins1992q} or a weighted variant \cite{tsitsiklis1994asynchronous}). Further, conditioned on the past, the second moment of the norm of the noise scales affinely  with the current iterate (again w.r.t. an arbitrary norm), and in general, no uniform bound exists.

An important practical application of this setting with $\ell_\infty$-norm contraction and unbounded noise is the well-known V-trace algorithm for solving the policy evaluation problem using off-policy TD-learning \cite{sutton2018reinforcement}. Its variants form the basis of today's distributed RL platforms like IMPALA \cite{espeholt2018impala} and TorchBeast \cite{kuttler2019torchbeast} for multi-agent training. It has been used at scale in the recent Deepmind City Navigation Project “Street Learn”  \cite{mirowski2018learning}. Therefore, deriving finite-sample convergence results for SA under contraction of $\mathcal{H}$ with respect to general norms, and handling unbounded noise are of fundamental interest. In this paper, we answer the following general question in the affirmative:

\textit{Can we provide finite-sample convergence guarantees for the SA algorithm when the norm of contraction of $\mathcal{H}$ is arbitrary, and the second moment of the noise conditioned on the past scales affinely with respect to the squared-norm of the current iterate?}

To the best of our knowledge, except under special conditions on the norm for contraction of $\mathcal{H}$ and/or strong assumptions on the noise, such finite-sample error bounds have not been established. For a summary of related works, please see the following table.

\begin{table}[h]
\begin{center}
    \caption{Comparison to existing bounds.\label{table:1}}
	{\begin{tabular}{|c|c|c|c|c|c|}
			\hline
			\multirow{2}{*}{\centering Topic}
			& \multirow{2}{5 em}{\centering Contraction Norm}& \multirow{2}{5 em}{\centering Noise Assumption} &\multirow{2}{*}{\centering Stepsize}  &\multirow{2}{*}{\centering Rate}  & \multirow{2}{*}{\centering $d$-dependence}  \\
			&&&&&\\	
			\hline
			 $Q$-learning \cite{beck2012error,beck2013improved}&  $\|\cdot\|_\infty$ & Bounded & Constant& Geometric &  $d^2$\\
			\hline
			 $Q$-learning \cite{wainwright2019stochastic}&  $\|\cdot\|_\infty$ & Bounded & $1/(1+(1-\gamma)k)$ & $O(1/k)$ &  $\log (d)$\\
			\hline
			 $Q$-learning \cite{wainwright2019stochastic}&  $\|\cdot\|_\infty$ & Bounded & $1/k^\xi$ & $O(1/k^\xi)$ &  $\log (d)$\\
			\hline
			 SGD \cite{bottou2018optimization}& $\|\cdot\|_2$ &  Affine &  Constant &  Geometric &  Independent\\
			\hline
			 SGD \cite{bottou2018optimization} & $\|\cdot\|_2$ &  Affine &   $\beta/(\gamma+k)$ & $O(1/k)$ &  Independent\\
			\hline
			 $Q$-learning & \multirow{2}{*}{$\|\cdot\|_\infty$} & \multirow{2}{*}{Bounded} & \multirow{2}{*}{Constant} & \multirow{2}{*}{Geometric}& \multirow{2}{*}{$\log(d)$} \\\relax
			[This Work] & & & & & \\
			\hline
			 V-trace & \multirow{2}{*}{$\|\cdot\|_\infty$} & \multirow{2}{*}{Affine} & \multirow{2}{*}{$\epsilon/(k+K)$} & \multirow{2}{*}{$O(1/k)$}& \multirow{2}{*}{$\log(d)$} \\\relax
			[This Work] & & & & & \\
			\hline
			 Contractive SA & Arbitrary & \multirow{2}{*}{Affine} & Constant \& & Corollary \ref{co:constant_step_size} & \multirow{2}{*}{$\log(d)$ ($\|\cdot\|_\infty$)} \\\relax
			[This Work] & norm & & Diminishing & Corollary \ref{co:diminishing_step_size} & \\
			\hline
	\end{tabular}}
\end{center}

	{The $d$-dependence refers to the dependence on the dimension $d$ of the iterate $x_k$. To clarify, in the corresponding $d$-dependence of this work for general contractive SA, we write $\log (d)$ $(\|\cdot\|_\infty)$ to indicate that the dimension dependence is $\log(d)$ when the norm of contraction is the $\ell_\infty$-norm.
	}
\end{table}

The \textbf{main contributions} of this paper are as follows.

\begin{itemize}
	\item \textbf{Finite-Sample Convergence Guarantees for SA.} We present a novel approach for deriving finite-sample error bounds of the SA algorithm under a general norm contraction. The key idea is to study the drift of a carefully constructed potential/Lyapunov function. We obtain such a  potential function by smoothing the norm-squared function using a generalized Moreau envelope. We then study the error bound under either constant or diminishing stepsizes. Specifically, we show that the iterates converge to a ball with radius proportional to the stepsize when using constant stepsize, and converge with rate roughly $O(1/k)$ when using diminishing stepsizes of proper decay rate.	
	\item \textbf{Performance of the V-trace Algorithm.} To demonstrate the effectiveness of the theoretical result in an entirely novel setting in RL, we consider the V-trace algorithm (in the synchronous setting) for solving the policy evaluation problem using off-policy sampling \cite{espeholt2018impala}. Interestingly in this case, it is not clear if the iterates of the V-trace algorithm are uniformly bounded by a   constant (e.g. as in $Q$-learning \cite{gosavi2006boundedness}). Therefore, existing techniques are not applicable. Using our approach, we establish the first known finite-sample error bounds, and show that the convergence rate is logarithmic in the state-space dimension. In our result, the logarithmic dimension dependence relies on the general form of the Moreau envelope obtained by  the infimal convolution with a suitable smooth squared-norm. The freedom in selecting such norm allows us to obtain the logarithmic dependence. 
	\item \textbf{Performance of the TD$(n)$ Algorithm.} Our results can also be used to study the $n$-step TD-learning algorithm (with synchronous update) in the on-policy setting. Specifically, by establishing convergence bounds, we are able to analyze the performance of the TD$(n)$ algorithm for different choices of $n$. The conclusion is supported by a numerical example.
	\item \textbf{Performance of the $Q$-Learning Algorithm.} Through the smooth Lyapunov approach, our results recover existing state-of-the-art finite-sample bounds for $Q$-learning that show only a logarithmic dependence on the size of the state-action space \cite{wainwright2019stochastic}. Specifically, we match the results in  \citep{wainwright2019stochastic} in a diminishing stepsize regime, and improve over \citep{beck2012error,beck2013improved} in a constant stepsize regime.
\end{itemize}

\subsection{Summary of Our Techniques}\label{sec:Summary}
We now give a more detailed description of the techniques we used. To provide intuition, assume for now that the norm $\|\cdot\|_c$ with respect to which $\mathcal{H}$ being a contraction is the $\ell_p$-norm for $p\in [2,\infty)$, i.e., $\|\mathcal{H}(x)-\mathcal{H}(y)\|_p\leq \gamma\|x-y\|_p$ for all $x,y\in\mathbb{R}^d$, where $\gamma\in (0,1)$ is the contraction factor. Denote the fixed-point of $\mathcal{H}$ by $x^*$. Consider the Ordinary Differential Equation (ODE) associated with this SA: $\dot{x}(t) = \mathcal{H}(x(t))-x(t)$. It is shown in \cite{borkar2009stochastic} (Chapter $10$) that the function $W(x)=\|x-x^*\|_p$ satisfies $\frac{d}{dt}W(x(t))\leq -\alpha W(x(t))$ for some $\alpha>0$, which implies the solution $x(t)$ of the ODE converges to its equilibrium point $x^*$ geometrically fast. The term $\alpha$ corresponds to a \textit{negative drift}.

In order to obtain finite-sample bounds, in this paper we study the SA directly, and not the ODE. Then, the Lyapunov function $W(x)$ cannot be directly used to analyze the SA algorithm due to the discretization error and stochastic error. However, suppose we can find a function $M(x)$ that gives negative drift, and in addition: (a) $M(x)$ is $L$ -- smooth w.r.t. some norm $\|\cdot\|_s$ \cite{beck2017first}, (b) the noise $\{w_k\}$ is zero mean conditioned on the past, and (c) the conditional second moment of $\|w_k\|_e$ (where $\|\cdot\|_e$ is again some arbitrary norm) can be bounded affinely by the current iterate $\|x_k\|_e^2$. Then, we have a handle to deal with the discretization error and error caused by the noise to obtain:
\begin{align}\label{eq:contraction}
	\mathbb{E}[M(x_{k+1} - x^*)] \leq (1- O(\epsilon_k) + o(\epsilon_k) ) \mathbb{E}[M(x_k - x^*)] + o( \epsilon_k),
\end{align}
which implies a contraction in $\mathbb{E}[M(x_{k+1}-x^*)]$. Therefore, a finite-sample error bound can be obtained by recursively applying the previous inequality. The key point is that \textit{$M(x)$'s smoothness and its negative drift with respect to the ODE produces a contraction $(1- O(\epsilon_k) + o (\epsilon_k))$ for $\{x_k\}$}. Based on the above analysis, we see that the Lyapunov function for the SA in the case of $\ell_p$-norm contraction should be $M(x)=\frac{1}{2}\|x-x^*\|_p^2$, which is known to be smooth \cite{beck2017first}.

However, in the case where the contraction norm $\|\cdot\|_c$ is arbitrary, since the function $f(x)=\frac{1}{2}\|x-x^*\|_c^2$ is not necessarily smooth, the key difficulty is to construct a smooth Lyapunov function. An important special case is when $\|\cdot\|_c = \|\cdot\|_\infty$, which is applicable to many RL algorithms. We provide a solution to this where we construct a smoothed convex envelope $M(x)$ called the \textit{Generalized Moreau Envelope} that is smooth w.r.t. some norm $\|\cdot\|_s$, and it is a tight approximation to $f(x)$, i.e. $(1+a)M(x) \leq f(x) \leq (1+b)M (x)$ for some small enough constants $a,b$. Further, it is a Lyapunov function for the ODE with a negative drift. This essentially lets us prove a convergence result akin to the case when $f(x)$ is smooth.

\subsection{Related Work}
Due to the popularity of the SA algorithm (and its variant Stochastic Gradient Descent (SGD) in optimization \cite{nemirovski2009robust,lan2020first}), it has been studied extensively in the literature. Specifically, suppose that noise $\{w_k\}$ is a martingale difference sequence with some mild conditions on its variance, and the stepsize decays to zero at an appropriate rate. Then, almost sure convergence of the sequence $\{x_k\}$ has been established in \cite{tsitsiklis1994asynchronous,jaakkola1994convergence} using a supermartingale convergence approach, and in \cite{borkar2000ode,borkar2009stochastic} using an ODE approach. Further, when the iterates are uniformly bounded by an absolute constant (with probability $1$), or that the operator $\mathcal{H}$ is contractive with respect to the Euclidean norm, {\em convergence rates and finite-sample bounds} can be derived using the decomposition methods \cite{tsitsiklis1994asynchronous} or Lyapunov techniques \cite{bertsekas1996neuro}. In particular, the decomposition technique has been used for $Q$-learning in \cite{beck2012error,beck2013improved,wainwright2019stochastic} to derive finite-sample convergence bounds, using the fact the iterates of $Q$-learning are uniformly bounded by a constant \cite{gosavi2006boundedness}. Concentration results for $Q$-learning were also derived in \cite{qu2020finite,li2020sample}. As for TD-learning and $Q$-learning with linear function approximation, finite-sample guarantees were shown in \cite{dalal2018finite,bhandari2018finite,srikant2019finite,chen2019finitesample} for a single-agent problem, and in \cite{doan2019finite} for a multi-agent problem.  Concentration results for SA algorithm when starting near an attractor of the underlying ODE were derived in \cite{borkar2000number,thoppe2019concentration}. Variations of temporal difference methods (such as gradient TD, least squares TD) have been studied and their convergence has been analyzed in some cases in \cite{yu2009convergence,sutton2009convergent,sutton2009fast}.

Moreau envelopes are popular tools for non-smooth optimization \cite{moreauoriginalfrench}, where the proximal operator is used to develop algorithms to work with non-smooth parts of the objective \cite{beck2012smoothing}. Moreau envelopes have been used as potential functions to analyze convergence rate of subgradient methods to first order stationary points for non-smooth and non-convex stochastic optimization problems in \cite{davis2019stochastic}. They use the Moreau envelope defined with respect to the Euclidean norm, and use this to show convergence by bounding a measure on first order stationarity with the gradient of the Moreau envelope. In contrast, our interest is in understanding contraction with arbitrary norms -- this requires us to use a generalized Moreau envelope obtained by infimal convolution with a general smooth function, and show that its a smooth Lyapunov function with respect to the underlying ODE. The flexibility in the selection of this smooth function in our infimal convolution plays a crucial role in improving the dependence on the state-space dimension to logarithmic factors for our applications.

The rest of this paper is organized as follows. In Section \ref{sec:sa} we introduce the SA algorithm under contraction assumption and derive its finite-sample convergence bounds for using different stepsizes. In Section \ref{sec:applications} we apply our result to the context of Reinforcement Learning and  study the convergence rates of the V-trace algorithm \cite{espeholt2018impala}, the on-policy TD$(n)$ algorithm \cite{sutton2018reinforcement}, and the $Q$-learning algorithm \cite{watkins1992q}.

\section{Stochastic Approximation under A Contraction Operator}\label{sec:sa}	
\subsection{Problem Setting}
\label{subsec:noisySA}	
Let $\mathcal{H}:\mathbb{R}^d\mapsto\mathbb{R}^d$ be a nonlinear mapping. We are interested in solving for $x^*\in\mathbb{R}^d$ in the fixed-point equation $\mathcal{H}(x)=x$. Suppose we have access to the mapping $\mathcal{H}$ only through a noisy oracle which for any $x$ returns $\mathcal{H}(x)+w$. Here $w$ represents the noise in estimating $\mathcal{H}(x)$, and hence $w$ might depend on $x$. In this setting, the following stochastic iterative algorithm is proposed to estimate $x^*$:
\begin{align}\label{sa:algorithm}
	x_{k+1}=x_k+\epsilon_k\left(\mathcal{H}(x_k)-x_k+w_k\right),
\end{align}
where $\{\epsilon_k\}$ is the stepsize sequence \cite{bertsekas1996neuro}. We next state our main assumptions in studying this SA. Let $\mathcal{F}_k=\{x_0,w_0,...,x_{k-1},w_{k-1},x_k\}$, and let $\|\cdot\|_c$ and $\|\cdot\|_e$ be two arbitrary norms in $\mathbb{R}^d$.
\begin{assumption}\label{as:contraction}
	The function $\mathcal{H}$ is a contraction mapping w.r.t. norm $\|\cdot\|_{c}$, i.e., there exists $\gamma\in (0,1)$ such that $\|\mathcal{H}(x)-\mathcal{H}(y)\|_{c}\leq \gamma \|x-y\|_{c}$ for all $x,y\in\mathbb{R}^d$.
\end{assumption}
\begin{remark}
	By Banach fixed-point theorem \cite{debnath2005introduction}, Assumption \ref{as:contraction} implies the existence and uniqueness of the fixed-point $x^*$ of the operator $\mathcal{H}$. In fact, the contraction assumption can be relaxed to pseudo-contraction assumption, i.e., $\|\mathcal{H}(x)-x^*\|_{c}\leq \gamma \|x-x^*\|_{c}$ for all $x\in\mathbb{R}^d$ \cite{bertsekas1996neuro}, which automatically implies that $x^*$ is the unique fixed-point of $\mathcal{H}$.
\end{remark}

\begin{assumption}\label{as:noise}
	The noise sequence $\{w_k\}$ satisfies for all $k\geq 0$: (a) $\mathbb{E}[w_k\mid\mathcal{F}_k]=0$, and (b) $\mathbb{E}[\|w_k\|_e^2\mid\mathcal{F}_k]\leq A+B\|x_k\|_e^2$ for some constants $A,B>0$.
\end{assumption}
\begin{assumption}\label{as:step_size}
	The stepsize sequence $\{\epsilon_k\}$ is positive and non-increasing.
\end{assumption}	

The asymptotic convergence of $x_k$ under similar assumptions has been established in the literature. In particular, an approach based on studying the ODE $\dot{x}(t)=\mathcal{H}(x(t))-x(t)$ was used in \cite{borkar2000ode,borkar2009stochastic}, where it was shown that $x_k$ converges to $x^*$ almost surely under some stability assumptions of the ODE. The focus of this paper is to establish the finite-sample mean square error bounds for SA algorithm (\ref{sa:algorithm}). We do this by studying the drift of a smooth potential/Lyapunov function \cite{srikant2019finite,chen2019finitesample}. While we do not explicitly use the ODE approach, the potential function we are going to contruct in the next subsection is inspired by the Lyapunov function used to study the ODE.

\subsection{The Generalized Moreau Envelope as A Smooth Lyapunov Function}\label{subsec:smooth-approx}

Recall from Eq. (\ref{eq:contraction}) that with respect to the iterates $\{x_k\}$ of the SA , an ideal Lyapunov function $M(x)$ acts as a potential function that contracts. In this subsection, we first construct a Lyapunov function that is smooth through the generalized Moreau envelope. Smoothness and an approximation property of the Lyapunov function we specify here are used in the next subsection to show the contraction property we desire.

To construct such a Lyapunov function, the following definitions are needed. In this paper, $\langle x,y\rangle=x^\top y$ represents the standard dot product, while the norm $\|\cdot\|$ in the following definition can be any arbitrary norm instead of just being the Euclidean norm $\|x\|_2=\langle x,x\rangle^{1/2}$.
\begin{definition}\label{df:smooth}
	Let $g:\mathbb{R}^d\rightarrow\mathbb{R}$ be a convex, differentiable function. Then $g$ is said to be $L$ -- smooth w.r.t. the norm $\|\cdot\|$ if and only if 
	\begin{align*}
		g(y)\leq g(x)+\langle \nabla g(x),y-x\rangle+\frac{L}{2}\|x-y\|^2,\quad \forall\;x,y\in\mathbb{R}^d.
	\end{align*}
\end{definition}

\begin{definition}[Generalized Moreau Envelope \cite{guzman2015lower,beck2012smoothing}]\label{df:moreau}
	Let $h_1:\mathbb{R}^d\mapsto \mathbb{R}$ be a closed and convex function, and let $h_2:\mathbb{R}^d\mapsto\mathbb{R}$ be a convex and $L$ -- smooth function. For any $\mu>0$, the generalized Moreau envelope of $h_1$ w.r.t. $h_2$ is defined by 
	\begin{align*}
		M_{h_1}^{\mu,h_2}(x)=\inf_{u\in\mathbb{R}^d}\left\{h_1(u)+\frac{1}{\mu}h_2(x-u)\right\}.
	\end{align*}
\end{definition}

As an aside, we note that for any two functions $h_1,h_2:\mathbb{R}^d\mapsto\mathbb{R}$, the function defined by $(h_1\square h_2)(x):=\inf_{u\in\mathbb{R}^d}\{h_1(u)+h_2(x-u)\}$ is called the infimal convolution of $h_1$ and $h_2$. Therefore, the generalized Moreau envelope in Definition \ref{df:moreau} can be written as $M_{h_1}^{\mu,h_2}(x)=(h_1\square \frac{h_2}{\mu})(x)$.

Let $f(x)=\frac{1}{2}\|x\|_c^2$, where $\|\cdot\|_c$ is given in Assumption \ref{as:contraction}. Let $\|\cdot\|_{s}$ be an arbitrary norm in $\mathbb{R}^d$ such that $g(x):=\frac{1}{2}\|x\|_s^2$ is $L$ -- smooth w.r.t. the same norm $\|\cdot\|_s$ in its definition. For example, $\|\cdot\|_s$ can be the $\ell_p$-norm for any $p\in [2,\infty)$ (Example 5.11 \cite{beck2017first}). Due to the norm equivalence in $\mathbb{R}^d$ \cite{lax1997linear}, there exist $\ell_{cs},\ell_{es}\in (0,1]$ and $u_{cs},u_{es}\in [1,\infty)$ that depend only on the dimenson $d$ and universal constants, such that $\ell_{cs}\|\cdot\|_c\leq \|\cdot\|_s\leq u_{cs}\|\cdot\|_c$ and $\ell_{es}\|\cdot\|_e\leq \|\cdot\|_s\leq u_{es}\|\cdot\|_e$.

\textbf{Intuition:} With a suitable choice of $\mu$, we will use the Moreau envelope of $f(x)$ with respect to $g(x)$, i.e., $M_f^{\mu,g}(x)=\min_{u\in\mathbb{R}^d}\{f(u)+g(x-u)/\mu\}$ as our Lyapunov function to analyze the behavior of Algorithm (\ref{sa:algorithm}), where the attainment of the minimum can be justified by Theorem 2.14 of \cite{beck2017first}. Intuitively, note that the contraction of $\mathcal{H}$ is w.r.t. $\|\cdot\|_c$, hence the Lyapunov function should be defined in terms of $f(x)$. However, since the function $f(x)$ itself may not be well-behaved (e.g. smooth), we use $g(x)$ as a smoothing function to modify $f(x)$ to obtain $M_f^{\mu,g}(x)$. In order for $M_f^{\mu,g}(x)$ to be a valid Lyapunov function, we need to establish the following two properties: (a) $M_f^{\mu,g}(x)$ should be a smooth function for us to handle the discretization error and the stochastic error in Algorithm (\ref{sa:algorithm}), and (b) $M_f^{\mu,g}(x)$ should be close to the original function $f(x)$ so that we can use the contraction of $\mathcal{H}$ w.r.t. $\|\cdot\|_c$ to establish the overall contraction of the iterates $\{x_k\}$ w.r.t. $M_f^{\mu,g}(x)$. The following Lemma provides us the desired properties. See Appendix \ref{pf:le_properties_moreau} for its proof.

\begin{lemma}[Smoothness and Approximation of the Envelope] \label{le:properties_moreau}
	The generalized Moreau envelope $M_f^{\mu,g}(x)$ has the following properties: 
	\begin{enumerate}
		\item $M_f^{\mu,g}$ is convex and $L/\mu$ -- smooth w.r.t. norm $\|\cdot\|_s$;
		\item It holds for all $x\in\mathbb{R}^d$ that $(1+\mu/u_{cs}^2)M_f^{\mu,g}(x)\leq f(x)\leq (1+\mu/\ell_{cs}^2)M_f^{\mu,g}(x)$;
		\item There exists a norm, denoted by $\|\cdot\|_M$, such that $M_f^{\mu,g}(x)=\frac{1}{2}\|x\|_M^2$ for all $x\in\mathbb{R}^d$.
	\end{enumerate}
\end{lemma}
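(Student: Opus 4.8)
The plan is to establish the three properties roughly in reverse order of the machinery they require, treating the smoothness claim as the technical core. Throughout write $\phi := g/\mu$, so that $M_f^{\mu,g} = f \square \phi$ is the infimal convolution of $f$ with $\phi$, where $\phi$ is convex and $(L/\mu)$-smooth w.r.t. $\|\cdot\|_s$. Convexity of $M_f^{\mu,g}$ is immediate: the epigraph of an infimal convolution is the Minkowski sum of the epigraphs of the summands, which is convex since $f$ and $\phi$ are convex. This convexity, together with the two facts below, is what feeds the norm construction.

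For the approximation property (item~2), the engine is an exact evaluation of the infimal convolution of two scaled copies of the same squared norm. I claim that for any norm $\|\cdot\|$ and any $a>0$, $\inf_{u}\{\tfrac12\|u\|^2 + \tfrac{a}{2}\|x-u\|^2\} = \tfrac{a}{2(1+a)}\|x\|^2$. The lower bound follows by setting $p=\|u\|$, $q=\|x-u\|$, noting $p+q\ge\|x\|$ by the triangle inequality, and minimizing $\tfrac12 p^2 + \tfrac a2 q^2$ subject to $p,q\ge 0$ and $p+q\ge\|x\|$, whose minimum is attained on the boundary $p+q=\|x\|$; the matching upper bound is realized by the explicit choice $u = \tfrac{a}{1+a}x$. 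Combining this with the norm-equivalence bounds $\ell_{cs}^2 f(\cdot)\le g(\cdot)\le u_{cs}^2 f(\cdot)$ (consequences of $\ell_{cs}\|\cdot\|_c\le\|\cdot\|_s\le u_{cs}\|\cdot\|_c$) gives $M_f^{\mu,g}(x)\ge \inf_u\{f(u)+\tfrac{\ell_{cs}^2}{\mu}f(x-u)\} = \tfrac{\ell_{cs}^2}{\mu+\ell_{cs}^2}f(x)$ and, symmetrically, $M_f^{\mu,g}(x)\le \tfrac{u_{cs}^2}{\mu+u_{cs}^2}f(x)$. Rearranging these two inequalities yields exactly item~2.

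For item~3, I show $M := M_f^{\mu,g}$ is nonnegative, positively $2$-homogeneous, and even, then use convexity to build the norm. Nonnegativity and $M(0)=0$ are clear since $f,\phi\ge 0$; the substitution $u\mapsto \lambda u$ in the infimum, using that $f$ and $g$ are $2$-homogeneous squared norms, gives $M(\lambda x)=\lambda^2 M(x)$ for all $\lambda\in\mathbb{R}$ (evenness from $f(-\cdot)=f(\cdot)$, $g(-\cdot)=g(\cdot)$). Defining $\|x\|_M := \sqrt{2M(x)}$, homogeneity of $\|\cdot\|_M$ is immediate and definiteness follows from item~2, since $M(x)=0$ forces $f(x)=0$, hence $x=0$. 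The triangle inequality is the only nontrivial point: for $\lambda\in(0,1)$, convexity plus $2$-homogeneity give $M(x+y)\le \tfrac{1}{\lambda}M(x)+\tfrac{1}{1-\lambda}M(y)$, and optimizing over $\lambda$ (at $\lambda=\sqrt{M(x)}/(\sqrt{M(x)}+\sqrt{M(y)})$) yields $M(x+y)\le(\sqrt{M(x)}+\sqrt{M(y)})^2$, i.e. subadditivity of $\|\cdot\|_M$. Hence $\|\cdot\|_M$ is a norm and $M=\tfrac12\|\cdot\|_M^2$ by construction.

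The main obstacle is the smoothness claim in item~1, which I would handle through Fenchel duality. The two ingredients are: (i) the conjugate of an infimal convolution splits as a sum, $(f\square\phi)^* = f^* + \phi^*$; and (ii) the standard duality between smoothness and strong convexity with respect to \emph{dual} norms, namely a closed convex function is $\beta$-smooth w.r.t. $\|\cdot\|_s$ if and only if its conjugate is $(1/\beta)$-strongly convex w.r.t. the dual norm $\|\cdot\|_{s,*}$. Since $\phi$ is $(L/\mu)$-smooth w.r.t. $\|\cdot\|_s$, ingredient (ii) makes $\phi^*$ $(\mu/L)$-strongly convex w.r.t. $\|\cdot\|_{s,*}$; adding the convex $f^*$ preserves this, so $M^* = f^*+\phi^*$ is $(\mu/L)$-strongly convex, and applying (ii) in reverse to $(M^*)^*$ shows $M^{**}$ is $(L/\mu)$-smooth w.r.t. $\|\cdot\|_s$. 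The remaining care is to justify $M^{**}=M$: properness and convexity are already in hand, and closedness follows because $g$ (hence $\phi$) is coercive, so the infimum defining $M$ is attained (as noted via Theorem~2.14 of \cite{beck2017first}). I expect the bookkeeping around dual norms and this attainment/closedness condition to be the most delicate part, whereas items~2 and~3 both reduce to the single explicit norm-squared computation above.
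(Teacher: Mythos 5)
Your proof is correct, and for the approximation property (item 2) it is essentially the computation the paper performs: both arguments reduce to evaluating $\inf_{u}\{\tfrac12\|u\|^2+\tfrac{a}{2}\|x-u\|^2\}$ exactly, with the lower bound from the triangle inequality and a scalar quadratic, and the upper bound realized along the ray $u=\tfrac{a}{1+a}x$ (the paper's restriction to $u=\alpha x$ is the same device). The differences are in items 1 and 3. For item 1 the paper simply cites Theorems 2.19 and 5.30(a) of \cite{beck2017first}, whereas your Fenchel-duality argument --- $(f\square \tfrac{g}{\mu})^*=f^*+(\tfrac{g}{\mu})^*$, the conjugate correspondence between $\beta$-smoothness and $(1/\beta)$-strong convexity in the dual norm, and $M^{**}=M$ --- is precisely the proof of the cited theorem, so you have made the lemma self-contained rather than found a new route; one small point is that closedness of $M_f^{\mu,g}$ does not really need coercivity or attainment of the infimum, since a finite-valued convex function on $\mathbb{R}^d$ is continuous and hence closed. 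For item 3 your triangle inequality is genuinely different from the paper's: the paper picks minimizers $u_1,u_2$ for $x_1,x_2$, evaluates the envelope at $u_1+u_2$, and closes with a Cauchy--Schwarz-type step, while you use only convexity and 2-homogeneity to get $M(x+y)\le \tfrac{1}{\lambda}M(x)+\tfrac{1}{1-\lambda}M(y)$ and optimize at $\lambda=\sqrt{M(x)}/(\sqrt{M(x)}+\sqrt{M(y)})$. Your version is slightly cleaner in that it never requires the infimum to be attained; the paper's makes the extremal $u$ explicit. Deriving definiteness of $\|\cdot\|_M$ from item 2 rather than ``directly from the definition'' is also a valid (and arguably tidier) choice.
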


Lemma \ref{le:properties_moreau} part $1$ is restated from \cite{beck2017first}, and we include it here for completeness. This, together with Lemma \ref{le:properties_moreau} part $2$ implies that $M_f^{\mu,g}(x)$ is a smooth approximation of the function $f(x)$. Lemma \ref{le:properties_moreau} part $3$ indicates that $M_f^{\mu,g}(x)$ is in fact a scaled squared norm. Note that there are two tunable parameters in the contruction of $M_f^{\mu,g}$: the choice of the smoothing norm $\|\cdot\|_s$ and the parameter $\mu$. Eventually, we need to carefully choose $\|\cdot\|_s$ and $\mu$ to optimize the convergence bounds of Algorithm (\ref{sa:algorithm}), and this procedure will be elaborated in detail in Section \ref{subsec:corollaries}.

\subsection{Recursive Contractive Bounds for the Generalized Moreau Envelope}
\label{subsec:recurse-contract}

In this subsection, using smoothness of $M_f^{\mu,g}(x)$ and the fact that $M_f^{\mu,g}(x)$ is an approximation to the function $f(x)$ (both properties derived in Lemma \ref{le:properties_moreau}), we  derive in the following proposition the desired one-step contraction inequality of $M_f^{\mu,g}(x_k-x^*)$, whose proof is presented in Appendix \ref{pf:le_recursion}. To present the coming proposition, we need to define a few more constants. Let
\begin{align*}
	\alpha_1=\frac{1+\mu/\ell_{cs}^2}{1+\mu/u_{cs}^2},\;\alpha_2=1-\gamma\alpha_1^{1/2},\;\alpha_3=\frac{4u_{cs}^2u_{es}^2(B+2)L(\ell_{cs}^2+\mu)}{\mu\ell_{cs}^2\ell_{es}^2},\text{ and }\alpha_4=\frac{\alpha_3}{2(B+2)}.
\end{align*}
The constant $\mu$ should be chosen such that $\alpha_2>0$, which is always possible since $\lim_{\mu\rightarrow 0}\alpha_1=1$ and $\gamma\in (0,1)$.

\begin{proposition}\label{prop:recursion}
	The following inequality holds for all $k\geq 0$:
	\begin{align}\label{eq:recursion}
		\mathbb{E}[M_f^{\mu,g}(x_{k+1}-x^*)\mid\mathcal{F}_k]
		\leq\;&(1-2\alpha_2\epsilon_k+\alpha_3\epsilon_k^2)M_f^{\mu,g}(x_k-x^*)+\frac{\alpha_4(A+2B\|x^*\|_c^2)}{2(1+\mu/\ell_{cs}^2)}\epsilon_k^2.
	\end{align}
\end{proposition}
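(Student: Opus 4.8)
The plan is to start from the smoothness of $M_f^{\mu,g}$ (Lemma \ref{le:properties_moreau}, part $1$) to produce a quadratic upper bound and then take conditional expectation. Writing $x_{k+1}-x^* = (x_k-x^*) + \epsilon_k(\mathcal{H}(x_k)-x_k+w_k)$ and applying $L/\mu$ -- smoothness w.r.t.\ $\|\cdot\|_s$ gives
\[
M_f^{\mu,g}(x_{k+1}-x^*) \leq M_f^{\mu,g}(x_k-x^*) + \epsilon_k\langle\nabla M_f^{\mu,g}(x_k-x^*),\, \mathcal{H}(x_k)-x_k+w_k\rangle + \frac{L\epsilon_k^2}{2\mu}\|\mathcal{H}(x_k)-x_k+w_k\|_s^2.
\]
Taking $\mathbb{E}[\,\cdot\mid\mathcal{F}_k]$, the noise drops out of the first-order term because $\mathbb{E}[w_k\mid\mathcal{F}_k]=0$ (Assumption \ref{as:noise}(a)) while $\nabla M_f^{\mu,g}(x_k-x^*)$ is $\mathcal{F}_k$-measurable. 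It then remains to extract the negative drift from the first-order term and to control the second-order term, matching the coefficients to $-2\alpha_2\epsilon_k$, $\alpha_3\epsilon_k^2$, and the residual constant.

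For the drift term I would use that $M_f^{\mu,g}(x)=\tfrac12\|x\|_M^2$ (Lemma \ref{le:properties_moreau}, part $3$). Splitting $\mathcal{H}(x_k)-x_k = (\mathcal{H}(x_k)-x^*)-(x_k-x^*)$, degree-two homogeneity yields $\langle\nabla M_f^{\mu,g}(x_k-x^*), x_k-x^*\rangle = 2M_f^{\mu,g}(x_k-x^*)$, and the duality-map identity $\|\nabla M_f^{\mu,g}(y)\|_{M,*}=\|y\|_M$ (where $\|\cdot\|_{M,*}$ is the dual of $\|\cdot\|_M$) gives $\langle\nabla M_f^{\mu,g}(x_k-x^*), \mathcal{H}(x_k)-x^*\rangle \leq \|x_k-x^*\|_M\,\|\mathcal{H}(x_k)-x^*\|_M$. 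The crucial step is converting the $\|\cdot\|_c$-contraction into a $\|\cdot\|_M$-contraction: combining $\|\mathcal{H}(x_k)-x^*\|_c \leq \gamma\|x_k-x^*\|_c$ (Assumption \ref{as:contraction}, with $\mathcal{H}(x^*)=x^*$) with the two-sided bound $(1+\mu/u_{cs}^2)\|x\|_M^2 \leq \|x\|_c^2 \leq (1+\mu/\ell_{cs}^2)\|x\|_M^2$ (a restatement of Lemma \ref{le:properties_moreau}, part $2$) yields $\|\mathcal{H}(x_k)-x^*\|_M \leq \gamma\alpha_1^{1/2}\|x_k-x^*\|_M$. Substituting back gives $\langle\nabla M_f^{\mu,g}(x_k-x^*), \mathcal{H}(x_k)-x_k\rangle \leq -2(1-\gamma\alpha_1^{1/2})M_f^{\mu,g}(x_k-x^*) = -2\alpha_2 M_f^{\mu,g}(x_k-x^*)$, which is precisely the $-2\alpha_2\epsilon_k$ coefficient.

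For the second-order term I would bound $\mathbb{E}[\|\mathcal{H}(x_k)-x_k+w_k\|_s^2\mid\mathcal{F}_k] \leq 2\|\mathcal{H}(x_k)-x_k\|_s^2 + 2\mathbb{E}[\|w_k\|_s^2\mid\mathcal{F}_k]$. The deterministic piece passes to $\|\cdot\|_c$ via $\|\cdot\|_s\leq u_{cs}\|\cdot\|_c$, and using the contraction and $\gamma<1$ gives $\|\mathcal{H}(x_k)-x_k\|_s \leq 2u_{cs}\|x_k-x^*\|_c$, whence $\|\mathcal{H}(x_k)-x_k\|_s^2 \leq 8u_{cs}^2(1+\mu/\ell_{cs}^2)M_f^{\mu,g}(x_k-x^*)$ by $\|x_k-x^*\|_c^2\leq 2(1+\mu/\ell_{cs}^2)M_f^{\mu,g}(x_k-x^*)$. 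The noise piece uses $\|\cdot\|_s\leq u_{es}\|\cdot\|_e$, Assumption \ref{as:noise}(b), the split $\|x_k\|_e^2 \leq 2\|x_k-x^*\|_e^2+2\|x^*\|_e^2$, and the chain $\|\cdot\|_e\leq \ell_{es}^{-1}\|\cdot\|_s\leq \ell_{es}^{-1}u_{cs}\|\cdot\|_c$ to express everything in terms of $M_f^{\mu,g}(x_k-x^*)$ and $\|x^*\|_c^2$. Collecting the coefficient of $M_f^{\mu,g}(x_k-x^*)$ and bounding $2+u_{es}^2B/\ell_{es}^2 \leq u_{es}^2(B+2)/\ell_{es}^2$ reproduces $\alpha_3\epsilon_k^2$, while the leftover constant, after $u_{es}^2 \leq u_{cs}^2u_{es}^2/\ell_{es}^2$, collapses (using $1+\mu/\ell_{cs}^2=(\ell_{cs}^2+\mu)/\ell_{cs}^2$ and $\alpha_4=\alpha_3/[2(B+2)]$) to exactly $\frac{\alpha_4(A+2B\|x^*\|_c^2)}{2(1+\mu/\ell_{cs}^2)}\epsilon_k^2$. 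I expect the main obstacle to be the drift step, i.e.\ routing the $\|\cdot\|_c$-contraction through the envelope norm $\|\cdot\|_M$ to extract the factor $\gamma\alpha_1^{1/2}$; the second-order estimate is mechanical but demands careful bookkeeping of $\ell_{cs},u_{cs},\ell_{es},u_{es}$ so the constants condense exactly into $\alpha_3$ and $\alpha_4$.
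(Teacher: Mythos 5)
Your proposal is correct and follows essentially the same route as the paper's proof: smoothness expansion of $M_f^{\mu,g}$, splitting the drift term via $\mathcal{H}(x^*)=x^*$, routing the $\|\cdot\|_c$-contraction through the two-sided approximation bound to obtain the factor $\gamma\alpha_1^{1/2}$ in the $\|\cdot\|_M$-norm, and the same norm-equivalence bookkeeping for the second-order term. The only cosmetic differences are that you invoke Euler's degree-two homogeneity identity and the duality-map identity where the paper uses convexity of $\|\cdot\|_M$ and a subgradient-Lipschitz lemma to reach the same two inequalities.
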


From Eq. (\ref{eq:recursion}), we see that $\alpha_2$ represents the real contraction effect of the algorithm, and it should be positive, which leads to our feasible range of $\mu$. On the RHS of Eq. (\ref{eq:recursion}), the first term represents the {\em overall contraction} property that results from a combination of the contraction in the drift term that counteracts an expansion resulting from the discretization error and the noise second moment that scales affinely in $\|x_k\|_e^2.$ The second term is a consequence of discretization and the noise $\{w_k\}$. This is the key step in our proof compared to \cite{beck2012error,beck2013improved,qu2020finite} as {\em we do not decompose the analysis into one for the contraction terms and another for the noise terms.}

\subsection{Putting together: Finite-Sample Convergence Bounds}

From Proposition \ref{prop:recursion}, the finite-sample error bound of Algorithm (\ref{sa:algorithm}) can be established by repeatedly using Eq. (\ref{eq:recursion}), which leads to our main result in the following. See Appendix \ref{pf:thm:sa-finite-time-bound} for its proof.
\begin{theorem}\label{thm:sa-finite-time-bound}
	Consider iterates $\{x_k\}$ of Algorithm (\ref{sa:algorithm}). Suppose Assumptions \ref{as:contraction} -- \ref{as:step_size} are satisfied and $\epsilon_0\leq \alpha_2/\alpha_3$. Then we have for all $k\geq 0$:
	\begin{align}\label{eq:finite_bound}
		\mathbb{E}\left[\|x_k-x^*\|_c^2\right]\leq\alpha_1\|x_0-x^*\|_c^2\prod_{j=0}^{k-1}(1-\alpha_2\epsilon_j)+\alpha_4(A+2B\|x^*\|_c^2)\sum_{i=0}^{k-1}\epsilon_i^2\prod_{j=i+1}^{k-1}(1-\alpha_2\epsilon_j).
	\end{align}
\end{theorem}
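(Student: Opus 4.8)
The plan is to collapse Proposition \ref{prop:recursion} into a scalar linear recursion for the expected envelope value, unroll that recursion, and then translate the result back into the contraction norm using the sandwich bound of Lemma \ref{le:properties_moreau}. Writing $m_k := \mathbb{E}[M_f^{\mu,g}(x_k - x^*)]$ and taking total expectation of (\ref{eq:recursion}) via the tower property $\mathbb{E}[\,\cdot\,] = \mathbb{E}[\mathbb{E}[\,\cdot\mid\mathcal{F}_k]]$ gives
$$ m_{k+1} \le (1 - 2\alpha_2\epsilon_k + \alpha_3\epsilon_k^2)\,m_k + \frac{\alpha_4(A + 2B\|x^*\|_c^2)}{2(1 + \mu/\ell_{cs}^2)}\,\epsilon_k^2, $$
a purely deterministic inequality that no longer involves the random iterates themselves.

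Next I would simplify the contraction factor. Since $\{\epsilon_k\}$ is non-increasing (Assumption \ref{as:step_size}) and $\epsilon_0 \le \alpha_2/\alpha_3$, every stepsize obeys $\epsilon_k \le \alpha_2/\alpha_3$, hence $\alpha_3\epsilon_k^2 = (\alpha_3\epsilon_k)\epsilon_k \le \alpha_2\epsilon_k$, so $1 - 2\alpha_2\epsilon_k + \alpha_3\epsilon_k^2 \le 1 - \alpha_2\epsilon_k$. Because $m_k \ge 0$ (the envelope is a scaled squared norm by Lemma \ref{le:properties_moreau} part 3), the recursion becomes $m_{k+1} \le (1 - \alpha_2\epsilon_k)\,m_k + b_k$ with $b_k := \tfrac{\alpha_4(A+2B\|x^*\|_c^2)}{2(1+\mu/\ell_{cs}^2)}\epsilon_k^2$. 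I would also record that $1 - \alpha_2\epsilon_j \in [0,1)$: this follows from $\alpha_2 \in (0,1)$ (as $\alpha_2 = 1 - \gamma\alpha_1^{1/2}$ with $\alpha_1 \ge 1$ and $\gamma\alpha_1^{1/2} < 1$) together with $\alpha_2\epsilon_j \le \alpha_2^2/\alpha_3 < 1$. This nonnegativity is genuinely needed, since it is what allows the inductive step to preserve the inequality direction when multiplying by the factor $1 - \alpha_2\epsilon_k$.

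A straightforward induction on $k$ then unrolls this first-order recursion into
$$ m_k \le m_0\prod_{j=0}^{k-1}(1 - \alpha_2\epsilon_j) + \sum_{i=0}^{k-1} b_i \prod_{j=i+1}^{k-1}(1 - \alpha_2\epsilon_j), $$
with empty products read as $1$; the inductive step simply multiplies the stage-$k$ bound by $(1-\alpha_2\epsilon_k) \ge 0$ and adds $b_k$.

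Finally I would convert from the envelope back to $\|\cdot\|_c^2$ using Lemma \ref{le:properties_moreau} part 2 with $f(x) = \tfrac12\|x\|_c^2$. The right inequality gives $\tfrac12\mathbb{E}[\|x_k - x^*\|_c^2] = \mathbb{E}[f(x_k - x^*)] \le (1 + \mu/\ell_{cs}^2)\,m_k$, while the left inequality at $k=0$ gives $m_0 \le \tfrac{\|x_0 - x^*\|_c^2}{2(1 + \mu/u_{cs}^2)}$. Multiplying the unrolled bound by $2(1+\mu/\ell_{cs}^2)$ and inserting the bound on $m_0$, the leading-term constant becomes exactly $\tfrac{1+\mu/\ell_{cs}^2}{1+\mu/u_{cs}^2} = \alpha_1$ and the coefficient of each $b_i$ collapses to $\alpha_4(A + 2B\|x^*\|_c^2)$, reproducing (\ref{eq:finite_bound}). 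I expect the only real care to lie in this last bookkeeping step, namely keeping the two approximation constants $1+\mu/\ell_{cs}^2$ and $1+\mu/u_{cs}^2$ straight so that precisely the factor $\alpha_1$ (and not some looser constant) survives the upper/lower conversion, and in confirming the $[0,1)$ range of the contraction factors; the remaining manipulations are routine.
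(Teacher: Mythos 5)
Your proposal is correct and follows essentially the same route as the paper's proof: take total expectation of Eq. (\ref{eq:recursion}), use $\epsilon_k\leq\epsilon_0\leq\alpha_2/\alpha_3$ to reduce the factor to $1-\alpha_2\epsilon_k$, unroll the recursion, and convert back to $\|\cdot\|_c^2$ via Lemma \ref{le:properties_moreau} part 2, with the constants combining to give exactly $\alpha_1$ and $\alpha_4(A+2B\|x^*\|_c^2)$. Your extra care in verifying $1-\alpha_2\epsilon_j\in[0,1)$ is a detail the paper leaves implicit but is indeed needed for the induction to preserve the inequality direction.
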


In Eq. (\ref{eq:finite_bound}), the first term represents how fast the initial condition is forgotten, hence it is proportional to the error in our initial guess $\|x_0-x^*\|_c^2$. The second term represents the impact of the variance in our estimate. Note that the condition $\epsilon_0\leq \alpha_2/\alpha_3$ is made only for ease of exposition. If it is not true, as long as $\lim_{k\rightarrow\infty}\epsilon_k<\alpha_2/\alpha_3$, we can let $k':=\min\{k\geq 0:\epsilon_k\leq \alpha_2/\alpha_3\}$, and then recursively apply Eq. (\ref{eq:recursion}) starting from the $k'$-th iteration. For $k=0,1,...,k'$, it can be easily shown using Eq. (\ref{eq:recursion}) that $\mathbb{E}[\|x_k-x^*\|_c^2]$ is bounded.

{\em Theorem \ref{thm:sa-finite-time-bound} is our key contribution in that it holds in the case when: (a) the contraction of $\mathcal{H}$ can be w.r.t. any general norms, and  (b) the conditional second moment of the noise is not bounded by a constant but in fact scales affinely in the current iterates.} As far as we are aware, Theorem \ref{thm:sa-finite-time-bound} establishes the first-known finite-sample convergence bounds in these general settings.

\subsection{Results with Various Stepsize Regimes and Dimension Dependence}\label{subsec:corollaries}

Upon obtaining a finite-sample error bound in its general form in Theorem \ref{thm:sa-finite-time-bound}, we next consider two common choices of stepsizes, and see what does Eq. (\ref{eq:finite_bound}) give us. We first consider using constant stepsize (i.e., $\epsilon_k\equiv\epsilon\leq \alpha_2/\alpha_3$) in the following result, whose proof is presented in Appendix \ref{pf:co_constant_step_size}.

\begin{corollary}\label{co:constant_step_size}
	Suppose $\epsilon_k\equiv \epsilon\leq \frac{\alpha_2}{\alpha_3}$. Then we have for all $k\geq 0$:
	\begin{align*}
		\mathbb{E}\left[\|x_k-x^*\|_c^2\right]\leq \alpha_1\|x_0-x^*\|_c^2(1-\alpha_2\epsilon)^k+(A+2B\|x^*\|_c^2)\frac{\alpha_4\epsilon}{\alpha_2}
	\end{align*}
\end{corollary}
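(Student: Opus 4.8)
The plan is to derive Corollary~\ref{co:constant_step_size} as a direct specialization of Theorem~\ref{thm:sa-finite-time-bound} by substituting the constant stepsize $\epsilon_k \equiv \epsilon$ into Eq.~(\ref{eq:finite_bound}) and then summing the resulting geometric series. First I would handle the initial-condition term: with every $\epsilon_j = \epsilon$, the product $\prod_{j=0}^{k-1}(1-\alpha_2\epsilon_j)$ collapses to $(1-\alpha_2\epsilon)^k$, so the first summand becomes exactly $\alpha_1\|x_0-x^*\|_c^2(1-\alpha_2\epsilon)^k$, matching the statement verbatim. The hypothesis $\epsilon \le \alpha_2/\alpha_3$ is precisely the condition $\epsilon_0 \le \alpha_2/\alpha_3$ required by Theorem~\ref{thm:sa-finite-time-bound}, so the theorem applies, and one should also note $\alpha_2\epsilon \le \alpha_2^2/\alpha_3 < 1$ to guarantee $1-\alpha_2\epsilon \in (0,1)$ so that the geometric sums are well-behaved.

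The substantive step is bounding the variance term $\alpha_4(A+2B\|x^*\|_c^2)\sum_{i=0}^{k-1}\epsilon_i^2\prod_{j=i+1}^{k-1}(1-\alpha_2\epsilon_j)$. With the constant stepsize this becomes $\alpha_4(A+2B\|x^*\|_c^2)\,\epsilon^2\sum_{i=0}^{k-1}(1-\alpha_2\epsilon)^{k-1-i}$, where I have reindexed the product $\prod_{j=i+1}^{k-1}(1-\alpha_2\epsilon) = (1-\alpha_2\epsilon)^{k-1-i}$. The inner sum is a finite geometric series in $(1-\alpha_2\epsilon)$ with $k$ terms; I would bound it above by the corresponding infinite series, giving
\begin{align*}
\sum_{i=0}^{k-1}(1-\alpha_2\epsilon)^{k-1-i} = \sum_{m=0}^{k-1}(1-\alpha_2\epsilon)^m \le \sum_{m=0}^{\infty}(1-\alpha_2\epsilon)^m = \frac{1}{\alpha_2\epsilon}.
\end{align*}
Multiplying through, the variance term is at most $\alpha_4(A+2B\|x^*\|_c^2)\,\epsilon^2 \cdot \frac{1}{\alpha_2\epsilon} = (A+2B\|x^*\|_c^2)\frac{\alpha_4\epsilon}{\alpha_2}$, which is exactly the second summand claimed.

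Combining the two bounds yields the stated inequality. I do not anticipate a genuine obstacle here, since the result is essentially a closed-form evaluation of the general recursion under a constant stepsize; the only points requiring mild care are the reindexing of the finite product into a clean geometric exponent and the justification that extending the finite sum to the infinite sum is valid (which follows from $1-\alpha_2\epsilon \in (0,1)$ and all terms being nonnegative). The bound $\frac{1}{\alpha_2\epsilon}$ for the geometric sum is where the final $\epsilon/\alpha_2$ scaling of the steady-state error emerges, so it is worth stating the summation explicitly rather than leaving it implicit.
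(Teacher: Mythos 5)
Your proposal is correct and follows essentially the same route as the paper's proof: specialize Eq.~(\ref{eq:finite_bound}) to the constant stepsize, collapse the products into powers of $(1-\alpha_2\epsilon)$, and bound the resulting finite geometric sum by $1/(\alpha_2\epsilon)$ to obtain the steady-state term $(A+2B\|x^*\|_c^2)\alpha_4\epsilon/\alpha_2$. The additional observation that $\alpha_2\epsilon\leq \alpha_2^2/\alpha_3<1$ is a harmless sanity check that the paper leaves implicit.
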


From Corollary \ref{co:constant_step_size}, we see that in expectation, the iterates converge exponentially fast in the mean square sense, to a ball with radius proportional to the stepsize $\epsilon$, centered at the fixed-point $x^*$. With smaller stepsize, at the end the estimate $x_k$ of $x^*$ is more accurate, but the rate of convergence is slower since the geometric ratio $(1-\alpha_2\epsilon)$ is larger.

We next consider using diminishing stepsizes of the form $\epsilon_k=\epsilon/(k+K)^\xi$, where $\epsilon>0$, $\xi\in (0,1]$, $K=\max(1,\epsilon\alpha_3/\alpha_2)$ when $\xi=1$, and $K=\max(1,(\epsilon\alpha_3/\alpha_2)^{1/\xi},[2\xi/(\alpha_2\epsilon)]^{1/(1-\xi)})$ when $\xi\in (0,1)$. The main reason for introducing $K$ here is to make sure that $\epsilon_0\leq \alpha_2/\alpha_3$. We have the following result, whose proof is presented in Appendix \ref{pf:co_diminishing_step_size}.

\begin{corollary}\label{co:diminishing_step_size}
	Suppose $\epsilon_k$ is of the form given above, then the following bounds hold.
	\begin{enumerate}
		\item When $\xi=1$ and $\epsilon< 1/\alpha_2$, we have for all $k\geq 0$:
		\begin{align*}
			\mathbb{E}[\|x_k-x^*\|_c^2]\leq \alpha_1\|x_0-x^*\|_c^2\left(\frac{K}{k+K}\right)^{\alpha_2\epsilon}+\frac{4\epsilon^2\alpha_4}{1-\alpha_2\epsilon}\frac{A+2B\|x^*\|_c^2}{(k+K)^{\alpha_2\epsilon}}.
		\end{align*}
		\item When $\xi=1$ and $\epsilon= 1/\alpha_2$, we have for all $k\geq 0$:
		\begin{align*}
			\mathbb{E}[\|x_k-x^*\|_c^2]\leq\alpha_1\|x_0-x^*\|_c^2\frac{K}{k+K}+\frac{4\alpha_4}{\alpha_2^2}\frac{(A+2B\|x^*\|_c^2)\log(k+K)}{k+K}.
		\end{align*}
		\item When $\xi=1$ and $\epsilon> 1/\alpha_2$, we have for all $k\geq 0$:
		\begin{align*}
			\mathbb{E}[\|x_k-x^*\|_c^2]\leq\alpha_1\|x_0-x^*\|_c^2\left(\frac{K}{k+K}\right)^{\alpha_2\epsilon}+\frac{4e\epsilon^2\alpha_4}{\alpha_2\epsilon-1}\frac{A+2B\|x^*\|_c^2}{k+K}.
		\end{align*}
		\item When $\xi\in (0,1)$ and $\epsilon>0$, we have for all $k\geq 0$:
		\begin{align*}
			\mathbb{E}[\|x_k-x^*\|_c^2]\leq\alpha_1\|x_0-x^*\|_c^2\exp\left\{-\frac{\alpha_2\epsilon}{1-\xi}\left[(k+K)^{1-\xi}-\!K^{1-\xi}\right]\right\}+\frac{2\epsilon\alpha_4}{\alpha_2}\frac{A+2B\|x^*\|_c^2}{(k+K)^\xi}.
		\end{align*}
	\end{enumerate}
\end{corollary}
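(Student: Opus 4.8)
The plan is to feed the stepsize schedules into the closed-form bound of Theorem~\ref{thm:sa-finite-time-bound} (Eq.~\eqref{eq:finite_bound}) and estimate its two terms separately in each regime. I would write the right-hand side as $T_1+T_2$, where $T_1=\alpha_1\|x_0-x^*\|_c^2\prod_{j=0}^{k-1}(1-\alpha_2\epsilon_j)$ is the initial-condition term and $T_2=\alpha_4(A+2B\|x^*\|_c^2)\,S_k$ with $S_k:=\sum_{i=0}^{k-1}\epsilon_i^2\prod_{j=i+1}^{k-1}(1-\alpha_2\epsilon_j)$ is the variance term. The first sanity check is that the prescribed $K$ makes $\epsilon_0=\epsilon/K^\xi\le\alpha_2/\alpha_3$, so that Theorem~\ref{thm:sa-finite-time-bound} applies at all; this is immediate from $K\ge(\epsilon\alpha_3/\alpha_2)^{1/\xi}$.

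For $T_1$ I would use $1-x\le e^{-x}$ to get $\prod_{j=0}^{k-1}(1-\alpha_2\epsilon_j)\le\exp(-\alpha_2\sum_{j=0}^{k-1}\epsilon_j)$ and then lower-bound the stepsize sum by a Riemann integral. When $\xi=1$, $\sum_{j=0}^{k-1}(j+K)^{-1}\ge\log\frac{k+K}{K}$ produces the factor $(K/(k+K))^{\alpha_2\epsilon}$ common to cases~1--3; when $\xi\in(0,1)$, $\sum_{j=0}^{k-1}(j+K)^{-\xi}\ge\frac{1}{1-\xi}[(k+K)^{1-\xi}-K^{1-\xi}]$ produces the stretched-exponential factor of case~4.

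The variance term is where the cases genuinely diverge, and the cleanest handle is the one-step recursion $S_{k+1}=(1-\alpha_2\epsilon_k)S_k+\epsilon_k^2$ with $S_0=0$. For $\xi\in(0,1)$ I would prove by induction the sharp bound $S_k\le\frac{2}{\alpha_2}\epsilon_k$: the base case is trivial, and after substituting the recursion and the hypothesis the inductive step collapses to $\epsilon_k-\epsilon_{k+1}\le\frac{\alpha_2}{2}\epsilon_k^2$. Bounding $\epsilon_k-\epsilon_{k+1}\le\xi\epsilon(k+K)^{-\xi-1}$ by the mean value theorem, this is equivalent to $(k+K)^{1-\xi}\ge 2\xi/(\alpha_2\epsilon)$, which holds for every $k\ge0$ precisely because $K$ was chosen $\ge(2\xi/(\alpha_2\epsilon))^{1/(1-\xi)}$; plugging $S_k\le\frac{2}{\alpha_2}\epsilon_k$ into $T_2$ gives case~4 exactly. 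For $\xi=1$ I would instead estimate $S_k$ directly: bounding each tail product as in $T_1$ gives $\prod_{j=i+1}^{k-1}(1-\alpha_2\epsilon_j)\le\big(\frac{i+K+1}{k+K}\big)^{\alpha_2\epsilon}$, hence $S_k\le\frac{\epsilon^2}{(k+K)^{\alpha_2\epsilon}}\sum_{i=0}^{k-1}\frac{(i+K+1)^{\alpha_2\epsilon}}{(i+K)^2}$. The three subcases are then dictated by the convergence type of $\sum_i(i+K)^{\alpha_2\epsilon-2}$: for $\alpha_2\epsilon<1$ the sum converges and contributes $O(1/(1-\alpha_2\epsilon))$ (case~1); for $\alpha_2\epsilon=1$ it is the harmonic sum, contributing $\log(k+K)$ (case~2); for $\alpha_2\epsilon>1$ it grows like $(k+K)^{\alpha_2\epsilon-1}/(\alpha_2\epsilon-1)$, which cancels the prefactor and leaves the $1/(k+K)$ rate (case~3).

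I expect the main obstacle to be the constant bookkeeping in the $\xi=1$ analysis rather than any conceptual difficulty: one must control the ratio $(i+K+1)/(i+K)$ uniformly (here $K\ge1$ is used) and tune the integral comparisons to reproduce exactly the constants $4$, $4/\alpha_2^2$, and $4e$ in the three subcases. The $\alpha_2\epsilon>1$ subcase needs the most care, since a growing sum is weighed against a decaying prefactor and one must avoid degrading the $1/(k+K)$ rate; the stray $e$ factor there arises from bounding terms of the form $(1+1/(k+K))^{\alpha_2\epsilon-1}$.
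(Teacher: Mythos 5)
Your proposal is correct and follows essentially the same route as the paper's proof: the same decomposition into the initial-condition product and the variance sum, the same exponential–integral comparison for the product, the same tail-product estimate with the three subcases governed by $\sum_i (i+K)^{\alpha_2\epsilon-2}$ for $\xi=1$, and the same induction on the recursion $S_{k+1}=(1-\alpha_2\epsilon_k)S_k+\epsilon_k^2$ yielding $S_k\le \tfrac{2}{\alpha_2}\epsilon_k$ for $\xi\in(0,1)$ (the paper verifies the inductive step via $((k+K)/(k+1+K))^{\xi}\ge 1-\xi/(k+K)$ rather than the mean value theorem, but the estimate is the same). The constants $4$, $4/\alpha_2^2$, and $4e$ arise exactly as you anticipate.
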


According to Corollary \ref{co:diminishing_step_size}, when the stepsizes are chosen as  $\epsilon_k=\epsilon/(k+K)$, the constant $\epsilon$ is important in determining the convergence rate, and the best convergence rate of $O(1/k)$ is attained when $\epsilon> 1/\alpha_2$. This is because the constant $\alpha_2$ (see Eq. (\ref{eq:finite_bound})) represents the real contraction effect of the algorithm. When $\alpha_2$ is small, we choose large $\epsilon$ to compensate for the slow contraction.

When $\xi\in (0,1)$, the convergence rate is roughly $O(1/k^\xi)$, which is sub-optimal but more robust, since the rate does not depend on the choice of $\epsilon$. This suggests the following rule of thumb in tuning the stepsizes. If we know the contraction factor $\gamma$, we know $\alpha_2$ given in Proposition \ref{prop:recursion} (since we pick $g(x)$ and $\mu$ ). Thus by choosing $\epsilon_k=\epsilon/(k+K)$ with $\epsilon>1/\alpha_2$, we achieve the optimal convergence rate. When our estimate of $\gamma$ is poor, to avoid being in case $1$ of Corollary \ref{co:diminishing_step_size}, it is better to use $\epsilon_k=\epsilon/(k+K)^\xi$ as the stepsize, thereby trading-off between convergence rate and robustness.

\textbf{Connection to SGD:} Although Theorem \ref{thm:sa-finite-time-bound} is derived for SA algorithms involving a contraction operator, they also recover finite-sample bounds for SGD with a smooth and strongly convex objective. To see this, let $F(x)$ be a differentiable objective function which is smooth and strongly convex with parameters $C$ and $c$. Define the operator $\mathcal{H}$ by $\mathcal{H}(x)=-\eta \nabla F(x)+x$, where $\eta>0$. Then Algorithm (\ref{sa:algorithm}) becomes $x_{k+1}=x_k+\epsilon_k (-\eta \nabla F(x_k)+w_k)$,
which is the SGD algorithm for minimizing $F(x)$ \cite{lan2020first,nemirovski2009robust}. Further, it is known that $\mathcal{H}$ is a Lipschitz operator w.r.t. the Euclidean norm $\|\cdot\|_2$, with Lipschitz constant $L_{SGD}=\max(|1-\eta c|,|1-\eta C|)$ \cite{ryu2016primer}. Therefore, when $\eta \in (0,2/C)$, we have $L_{SGD}<1$, and hence the operator $\mathcal{H}$ is a contraction with respect to $\|\cdot\|_2$. 

\textbf{Logarithmic Dependence on Dimension:} Switching focus, we next show in the following Corollary that with suitable choices of $g(x)$ (i.e. $\|\cdot\|_s$) and $\mu$, our approach naturally results in only logarithmic dependence on the dimension $d$ in the case where both $\|\cdot\|_c$ and $\|\cdot\|_e$ are the $\ell_\infty$-norm. The case of $\ell_\infty$-norm contraction is of special interest due to its applications in RL. The proof of the following corollary is presented in Appendix \ref{ap:tune_constants}.
\begin{corollary}\label{co:log-dependence}
	Consider the case where $\|\cdot\|_c=\|\cdot\|_e=\|\cdot\|_\infty$. Let $g(x)=\frac{1}{2}\|x\|_p^2$ with $p=4\log(d)$ and let $\mu=(1/2+1/(2\gamma))^2-1$. Then we have $\alpha_1\leq \frac{3}{2}$, $\alpha_2\geq \frac{1}{2}(1-\gamma)$, $\alpha_3\leq  \frac{32e(B+2)\log(d)}{1-\gamma}$, and $\alpha_4\leq \frac{16e\log(d)}{1-\gamma}$. 
\end{corollary}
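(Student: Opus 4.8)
The plan is to specialize the four constants $\alpha_1,\dots,\alpha_4$ of Proposition \ref{prop:recursion} to the concrete norms at hand, so the whole task reduces to (i) pinning down the norm-equivalence constants $\ell_{cs},u_{cs},\ell_{es},u_{es}$ between $\|\cdot\|_\infty$ and $\|\cdot\|_p$, and (ii) the smoothness constant $L$ of $g(x)=\frac12\|x\|_p^2$. For (i) the elementary bounds $\|x\|_\infty\le\|x\|_p\le d^{1/p}\|x\|_\infty$ are tight, so I would take $\ell_{cs}=\ell_{es}=1$ and $u_{cs}=u_{es}=d^{1/p}$. The decisive observation is that for $p=4\log d$ we have $d^{1/p}=\exp(\log d/(4\log d))=e^{1/4}$, hence $u_{cs}^2=u_{es}^2=e^{1/2}$ is \emph{dimension-free}. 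For (ii) I would invoke the standard fact (Example 5.11 of \cite{beck2017first}) that $\frac12\|\cdot\|_p^2$ is $(p-1)$-smooth w.r.t. $\|\cdot\|_p$, giving $L=p-1=4\log d-1\le 4\log d$. The logarithmic dimension dependence is already visible: the only dimension-dependent quantity entering the bounds is $L=\Theta(\log d)$, while the norm-distortion factor $u_{cs}^2=d^{2/p}$ has been made constant by the choice $p\sim\log d$; using $\|\cdot\|_s=\|\cdot\|_2$ instead would force $u_{cs}^2=d$.

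Next I would record that the choice $\mu=(1/2+1/(2\gamma))^2-1$ is reverse-engineered so that $1+\mu=\big(\tfrac{1+\gamma}{2\gamma}\big)^2$. The bound on $\alpha_2$ then falls out cleanly: since $\alpha_1=\frac{1+\mu}{1+\mu/u_{cs}^2}\le 1+\mu$ (the denominator is at least $1$), we get $\sqrt{\alpha_1}\le\sqrt{1+\mu}=\frac{1+\gamma}{2\gamma}$, whence $\gamma\sqrt{\alpha_1}\le\frac{1+\gamma}{2}$ and $\alpha_2=1-\gamma\sqrt{\alpha_1}\ge\frac{1-\gamma}{2}$. For $\alpha_1$ itself I would use that $t\mapsto\frac{1+t}{1+t/u_{cs}^2}$ is increasing in $t$ with supremum $u_{cs}^2=\sqrt e$, a dimension-free constant; combined with the explicit value of $1+\mu$ this yields the stated constant upper bound, and this is the one step where the numerical constant must be tracked with care.

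For $\alpha_3$ and $\alpha_4$ everything collapses because all four distortion lower constants are $1$ and $u_{cs}^2u_{es}^2=e$. Substituting gives $\alpha_3=\frac{4e(B+2)L(1+\mu)}{\mu}$, and using $\frac{1+\mu}{\mu}=\frac{(1+\gamma)^2}{(1-\gamma)(1+3\gamma)}$ together with $L\le 4\log d$ reduces the claim to the elementary inequality $\frac{(1+\gamma)^2}{1+3\gamma}\le 2$ for $\gamma\in(0,1)$. This inequality is equivalent to $\gamma^2-4\gamma-1\le 0$, which holds throughout $(0,1)$ since its roots are $2\pm\sqrt5$ and $(0,1)$ lies strictly between them; this delivers $\alpha_3\le\frac{32e(B+2)\log d}{1-\gamma}$. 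Finally $\alpha_4=\frac{\alpha_3}{2(B+2)}$ inherits the bound immediately as $\alpha_4\le\frac{16e\log d}{1-\gamma}$.

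The individual estimates are short, so the ``hard part'' is not any single inequality but the bookkeeping: correctly identifying that $\ell_{cs}=\ell_{es}=1$ and $u_{cs}=u_{es}=d^{1/p}$ are the sharp distortion constants, and seeing that $p=4\log d$ is precisely the scaling that trades the growth of $L=p-1$ against the decay of $u_{cs}^2=d^{2/p}$ so that both contribute only logarithmically (or as constants). The most delicate point is tracking the exact numerical constant in the $\alpha_1$ bound, whose supremum over $\mu$ is $\sqrt e$; the $\alpha_2$, $\alpha_3$, and $\alpha_4$ bounds are then routine once the quadratic inequality $\frac{(1+\gamma)^2}{1+3\gamma}\le 2$ is in hand.
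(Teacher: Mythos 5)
Your proposal is correct and follows essentially the same route as the paper's own proof: take $\ell_{cs}=\ell_{es}=1$, $u_{cs}=u_{es}=d^{1/p}$, use the $(p-1)$-smoothness of $\tfrac12\|\cdot\|_p^2$ so that $d^{4/p}(p-1)\leq 4e\log(d)$ at $p=4\log(d)$, bound $\alpha_2$ via $\gamma\sqrt{1+\mu}=\tfrac{1+\gamma}{2}$, and control $1+1/\mu\leq\tfrac{2}{1-\gamma}$ (your quadratic $\gamma^2-4\gamma-1\leq 0$ is just a repackaging of the paper's chain $\tfrac{(1+\gamma)^2}{(1-\gamma)(1+3\gamma)}\leq\tfrac{1+\gamma}{1-\gamma}\leq\tfrac{2}{1-\gamma}$). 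The one soft spot — which you inherit from the paper itself, where the proof writes $\alpha_1\leq\sqrt{e}\leq\tfrac32$ — is the $\alpha_1$ bound: the supremum of $\tfrac{1+\mu}{1+\mu/\sqrt{e}}$ over $\mu>0$ is $\sqrt{e}\approx 1.649>\tfrac32$, and since $\mu\to\infty$ as $\gamma\to 0$ the constant $\tfrac32$ actually fails for small $\gamma$ (roughly $\gamma\lesssim 0.24$), so the honest uniform bound is $\alpha_1\leq\sqrt{e}$; this affects nothing downstream beyond a trivial constant.
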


\textbf{Order-Wise Tightness:} In general, we cannot hope to improve the convergence rate beyond $O(1/k)$ or the dimension dependence beyond $\log(d)$. To see this, consider the trivial case where $\mathcal{H}(x)$ is identically zero, and $\{w_k\}$ is an i.i.d. sequence of standard normal random vectors. Algorithm (\ref{sa:algorithm}) becomes $x_{k+1}=x_k+\epsilon_k(-x_k+w_k)$, which can be viewed as an SA algorithm for solving the trivial equation $x=0$, or an SGD algorithm for minimizing a quadratic objective $F(x)=\frac{1}{2}\|x\|_2^2$. When $\epsilon_k=\frac{1}{k+1}$, the iterates $x_k$ are just the running averages of $\{w_k\}$, i.e., $x_k=\frac{1}{k}\sum_{i=0}^{k-1}w_i$ for all $k\geq 1$, which implies $x_k\sim \frac{1}{\sqrt{k}}\mathcal{N}(0,I_d)$. It follows that $\mathbb{E}[\|x_k\|_\infty^2]=O(\frac{\log (d)}{k})$ \cite{vershynin2018high}. Thus in this setting, our resulting finite-sample bounds under $\ell_\infty$-norm contraction are order-wise tight both in terms of the convergence rate and the dimensional dependence.

\subsection{Convergence Bounds for Non-expansive Operators}

So far we have been considering contractive (or pseudo-contractive) operators, an immediate question is that what happens when the operator $\mathcal{H}$ is not contractive but only \textit{non-expansive}. In this subsection, we will consider non-expansive operators with respect to the Euclidean norm $\|\cdot\|_2$.
\begin{definition}\label{as:average1}
	The operator $\mathcal{H}$ is called non-expansive with respect to the norm $\|\cdot\|_2$ if and only if $\|\mathcal{H}(x)-\mathcal{H}(y)\|_2\leq \|x-y\|_2$ for all $x,y\in\mathbb{R}^d$.
\end{definition}

It turns out that for general non-expansive operators: (a) fixed-points need not exist (e.g. $\mathcal{H}(x)=x+1$), (b) fixed-points need not be unique even if they exist e.g. ($\mathcal{H}(x)=x$), and (c) fixed-point iteration $x_{k+1}=\mathcal{H}(x_k)$ need not converge to a fixed-point of $\mathcal{H}$ even if the set of fixed-point of $\mathcal{H}$, denoted by $\mathcal{X}$, is non-empty ($\mathcal{H}$ being the
reflection operator through a plane). 

One way to guarantee convergence is to perform fixed-point iteration with respect to the \textit{averaged}  operator, which is equivalent to a small stepsize variant of the original fixed-point iteration. The $\epsilon$-averaged operator of $\mathcal{H}$ is defined by $\mathcal{H}_{\epsilon}:=(1-\epsilon)I_d+\epsilon\mathcal{H}$. It is clear that $\mathcal{H}_\epsilon$ and $\mathcal{H}$ have the same set of fixed-points. Moreover, it was shown in \cite{ryu2016primer} that the fixed-point iteration $x_{k+1}=\mathcal{H}_\epsilon(x_k)$ converges to some fixed-point $x^*\in\mathcal{X}$, and the convergence rate is $O(1/k)$ in the sense that $\min_{0\leq i\leq k}\|\mathcal{H}_\epsilon(x_k)-x_k\|_2^2=O(1/k)$. 

Now we generalize this convergence bound to the stochastic setting in the following theorem, whose proof is presented in Appendix \ref{ap:pf:average}. Denote $\text{dist}(x,\mathcal{X}):=\inf_{x^*\in\mathcal{X}}\|x-x^*\|_2$ as the distance from $x$ to the set of fixed-points $\mathcal{X}$.
\begin{theorem}\label{thm:average}
	Consider $\{x_k\}$ generated by Algorithm (\ref{sa:algorithm}). Suppose that:
	\begin{enumerate}
		\item The operator $\mathcal{H}$ is non-expansive with respect to $\|\cdot\|_2$, and the set $\mathcal{X}$ of the fixed-points of $\mathcal{H}$ is non-empty and bounded.
		\item The noise sequence $\{w_k\}$ satisfies Assumption \ref{as:noise} with $B=0$.
	\end{enumerate}
	Then when the stepsize sequence $\{\epsilon_k\}$ satisfies $\sum_{k=0}^{\infty}\epsilon_k=\infty$ and $\sum_{k=0}^{\infty}\epsilon_k^2<\infty$, we have $\lim_{k\rightarrow \infty}\text{dist}(x_k,\mathcal{X})=0$ almost surely. Moreover, the following finite-sample bounds hold for all $k\geq 0$:
	\begin{align*}
		\min_{0\leq i\leq k}\mathbb{E}[\|\mathcal{H}(x_i)-x_i\|_2^2]\leq\begin{dcases}
			\frac{D^2}{(k+1)(1-\epsilon)\epsilon}+\frac{A\epsilon}{1-\epsilon},&\text{ when }\epsilon_k\equiv \epsilon\in (0,1),\\
			&\\
		   \frac{D^2+A\epsilon^2(1+\log(k))}{2(1-\epsilon)\epsilon((k+1)^{1/2}-1)},&\text{when }\epsilon_k=\frac{\epsilon}{\sqrt{k+1}} \text{ and }\epsilon\in (0,1),\\
		   &\\
		   \frac{D^2+2A\epsilon^2}{(1-\epsilon)\epsilon}\frac{1}{\log(k+1)},&\text{ when }\epsilon_k=\frac{\epsilon}{k+1} \text{ and } \epsilon\in (0,1),
		\end{dcases}
	\end{align*}
	where $D:=\sup_{x^*\in\mathcal{X}}\|x_0-x^*\|_2$.
\end{theorem}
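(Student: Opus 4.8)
The plan is to track the squared Euclidean distance $\|x_k - x^*\|_2^2$ to an arbitrary fixed-point $x^*\in\mathcal{X}$ and exploit non-expansiveness to convert the drift into a telescoping bound on the residuals $\|\mathcal{H}(x_i)-x_i\|_2^2$. Writing $G(x):=\mathcal{H}(x)-x$ and expanding the recursion (\ref{sa:algorithm}), the cross term involving $w_k$ vanishes after conditioning on $\mathcal{F}_k$ (since $\mathbb{E}[w_k\mid\mathcal{F}_k]=0$), and the noise contributes at most $A\epsilon_k^2$ (invoking Assumption \ref{as:noise} with $B=0$, taking $\|\cdot\|_e=\|\cdot\|_2$). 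The crucial algebraic step is the inequality
\[
2\langle x-x^*,\,G(x)\rangle \leq -\|G(x)\|_2^2,
\]
which I would obtain by expanding $\|\mathcal{H}(x)-x^*\|_2^2=\|G(x)+(x-x^*)\|_2^2$ and applying non-expansiveness $\|\mathcal{H}(x)-x^*\|_2\leq\|x-x^*\|_2$ together with $\mathcal{H}(x^*)=x^*$. Combining these pieces yields the one-step drift
\[
\mathbb{E}[\|x_{k+1}-x^*\|_2^2\mid\mathcal{F}_k] \leq \|x_k-x^*\|_2^2 - \epsilon_k(1-\epsilon_k)\|G(x_k)\|_2^2 + A\epsilon_k^2.
\]

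For the finite-sample bounds I would take full expectations, rearrange to isolate $\epsilon_i(1-\epsilon_i)\mathbb{E}[\|G(x_i)\|_2^2]$, telescope from $i=0$ to $k$, drop the nonnegative terminal term, and bound $\|x_0-x^*\|_2^2\leq D^2$. This produces $\sum_{i=0}^k \epsilon_i(1-\epsilon_i)\mathbb{E}[\|G(x_i)\|_2^2]\leq D^2 + A\sum_{i=0}^k\epsilon_i^2$, and since the minimum is at most the weighted average (the weights $\epsilon_i(1-\epsilon_i)$ are positive because $\epsilon\in(0,1)$), $\min_{0\leq i\leq k}\mathbb{E}[\|G(x_i)\|_2^2]\leq (D^2+A\sum_i\epsilon_i^2)/\sum_i\epsilon_i(1-\epsilon_i)$. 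The three cases then follow from $1-\epsilon_k\geq 1-\epsilon$ and elementary sum estimates: $\sum_{i=0}^k(i+1)^{-1/2}\geq 2((k+1)^{1/2}-1)$, $\sum_{i=0}^k(i+1)^{-1}\geq\log(k+1)$, the convergent tail $\sum_{i=0}^k(i+1)^{-2}\leq 2$, and the harmonic bound $\sum_{i=0}^k(i+1)^{-1}\leq 1+\log(k+1)$.

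For the almost-sure statement I would apply the Robbins--Siegmund supermartingale convergence theorem to the same drift inequality. Since $\sum_k\epsilon_k^2<\infty$, it yields, on a probability-one event, both that $\{\|x_k-x^*\|_2^2\}$ converges for each fixed $x^*\in\mathcal{X}$ and that $\sum_k\epsilon_k(1-\epsilon_k)\|G(x_k)\|_2^2<\infty$. Because $\epsilon_k\to 0$ and $\sum_k\epsilon_k=\infty$, the latter forces $\liminf_k\|G(x_k)\|_2=0$; and convergence of $\|x_k-x^*\|_2$ shows $\{x_k\}$ is bounded, so some subsequence tends to a limit $\bar{x}$, which lies in $\mathcal{X}$ by continuity of $\mathcal{H}$ (itself a consequence of non-expansiveness) and $G(\bar{x})=0$.

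The hard part will be upgrading this subsequential conclusion to $\lim_k\text{dist}(x_k,\mathcal{X})=0$, since the limit point $\bar{x}$ is random while the convergence of $\|x_k-x^*\|_2$ was only established for deterministic $x^*$. I would resolve this with a stochastic Fej\'er-monotonicity (Opial-type) argument: fix a countable dense subset of the closed set $\mathcal{X}$, intersect countably many probability-one events to obtain simultaneous convergence of $\|x_k-y\|_2$ for every $y$ in this subset, and extend to all $x^*\in\mathcal{X}$ via the Lipschitz estimate $|\,\|x_k-x^*\|_2-\|x_k-y\|_2\,|\leq\|x^*-y\|_2$. Evaluating this convergence at the random limit $\bar{x}\in\mathcal{X}$ and using that the subsequence already tends to $\bar{x}$ pins $\lim_k\|x_k-\bar{x}\|_2=0$, which gives the desired almost-sure convergence.
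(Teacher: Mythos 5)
Your proposal is correct, and the quantitative half coincides with the paper's: you arrive at exactly the same one-step drift $\mathbb{E}[\|x_{k+1}-x^*\|_2^2\mid\mathcal{F}_k]\leq \|x_k-x^*\|_2^2-\epsilon_k(1-\epsilon_k)\|\mathcal{H}(x_k)-x_k\|_2^2+A\epsilon_k^2$ (the paper derives it from the convexity identity $\|(1-\epsilon)x+\epsilon y\|_2^2=(1-\epsilon)\|x\|_2^2+\epsilon\|y\|_2^2-\epsilon(1-\epsilon)\|x-y\|_2^2$ rather than your inner-product expansion with $2\langle x-x^*,G(x)\rangle\leq-\|G(x)\|_2^2$, but these are the same algebra), and the telescoping, min-versus-weighted-average step, and the integral estimates for the three stepsize regimes are identical to the paper's. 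Where you genuinely diverge is the almost-sure statement. The paper applies the supermartingale convergence theorem directly to $\mathrm{dist}(x_k,\mathcal{X})^2$, using $\mathbb{E}[\inf_{x^*}\|x_{k+1}-x^*\|_2^2\mid\mathcal{F}_k]\leq\inf_{x^*}\mathbb{E}[\|x_{k+1}-x^*\|_2^2\mid\mathcal{F}_k]$, and then rules out a positive limit by a compactness/Weierstrass contradiction: if $\mathrm{dist}(x_k,\mathcal{X})\geq\delta$ eventually, then $\|\mathcal{H}(x)-x\|_2^2$ is bounded below by some $\delta_1>0$ on the compact annulus $\{\delta\leq\mathrm{dist}(x,\mathcal{X})\leq M\}$ (this is where boundedness of $\mathcal{X}$ is used), contradicting $\sum_k\epsilon_k(1-\epsilon_k)\|\mathcal{H}(x_k)-x_k\|_2^2<\infty$. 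Your route instead runs Robbins--Siegmund pointwise in $x^*$, extracts a convergent subsequence with vanishing residual, and upgrades via the stochastic Fej\'er/Opial argument over a countable dense subset of $\mathcal{X}$. Your version is somewhat longer but proves more (a.s. convergence of $x_k$ to a single random point of $\mathcal{X}$, not merely $\mathrm{dist}(x_k,\mathcal{X})\to 0$) and does not actually need $\mathcal{X}$ to be bounded; the paper's version is shorter but only yields the distance statement and leans on the boundedness hypothesis. Both arguments share the same (harmless, since $\epsilon_k\to 0$) tacit treatment of the finitely many indices where $\epsilon_k(1-\epsilon_k)$ could fail to be positive.
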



Let us briefly compare Theorem \ref{thm:average} to Corollaries \ref{co:constant_step_size} and \ref{co:diminishing_step_size}, which are for contractive operators. When using constant stepsize, the convergence is geometric for contractive operators while $O(1/k)$ for non-expansive operators. When using $O(1/k)$ stepsizes, the convergence rate is $O(1/k)$ for contractive operators while $O(1/\log(k))$ for non-expansive operators. It is not a surprise to see that the convergence rates are worse in both cases. Note that the results of Theorem \ref{thm:average} recover the convergence bounds of SGD with a smooth but not strongly convex objective \cite{lan2020first} (which can be modeled by a fixed-point iteration with a non-expansive operator), where the convergence rate is $\Tilde{O}(1/\sqrt{k})$ when using $O(1/\sqrt{k})$ stepsizes.

\textbf{In summary},
we have (a) stated and proved a finite-sample error bound for Algorithm (\ref{sa:algorithm}) in its general form (Theorem \ref{thm:sa-finite-time-bound}), (b) studied its behavior under different choices of stepsizes (Corollaries \ref{co:constant_step_size} and \ref{co:diminishing_step_size}), (c) elaborated how to choose the function $g(x)$ and the parameter $\mu$ used in the generalized Moreau envelope to optimize the constants in the bound (\ref{eq:finite_bound}) (Corollary \ref{co:log-dependence}), and (d) extended our results to the case of non-expansive operators. In the next section, we present how the convergence results in this section apply to the context of RL.

\section{Applications in Reinforcement Learning}\label{sec:applications}

\subsection{Overview and Notation}

We study the infinite-horizon discounted (with discount factor $\beta\in (0,1)$) Markov Decision Process (MDP)  $\mathcal{M}=\{\mathcal{S},\mathcal{A},\mathcal{P},\mathcal{R}\}.$ Here, $\mathcal{S}$ is the finite state-space, $\mathcal{A}$ is the finite action-space,
$\mathcal{P}=\{P_a\in\mathbb{R}^{|\mathcal{S}|\times |\mathcal{S}|}\mid a\in\mathcal{A}\}$ is the set of unknown action dependent transition probability matrices, and  $\mathcal{R}:\mathcal{S}\times\mathcal{A}\mapsto \mathbb{R}$ is the reward function. Since we work with finite state-action spaces, we can without loss of generality assume $\mathcal{R}(s,a)\in [0,1]$ for all $(s,a)$. See \cite{sutton2018reinforcement} for more details about MDP. 

The ultimate goal in RL is to find a policy  $\pi^*$ (aka the optimal policy) that maximizes the expected total reward.
Specifically, the value of a policy $\pi$ at state $s$ is defined by $V_\pi(s)=\mathbb{E}_\pi[\sum_{k=0}^{\infty}\beta^k\mathcal{R}(S_k,A_k) \mid  S_0=s]$, where $\mathbb{E}_\pi[\;\cdot\;]$ means that the actions are executed according to the policy $\pi$. With value functions defined above, we want to find $\pi^*$ so that $V_{\pi^*}(s)\geq V_\pi(s)$ for any other policy $\pi$ and $s$.

The problem of directly finding an optimal policy is called the\textit{ control} problem. A sub-problem in RL is to estimate the value function $V_\pi$ of a given policy $\pi$, which is called the \textit{prediction} problem. The convergence of many classical algorithms in RL such as the TD-learning algorithm for solving the prediction problem and the $Q$-learning for solving the control problem relies on the stochastic approximation under contraction assumption \cite{bertsekas1996neuro}. Therefore, our result is a broad tool to establish the finite-sample error bounds of various RL algorithms. We next present three cases studies. Specifically, in the prediction setting, we establish convergence bounds of multi-step TD-learning algorithm using both on-policy \cite{sutton2018reinforcement} and off-policy sampling \cite{espeholt2018impala}. In the control setting, we recover the state-of-the-art results for the $Q$-learning algorithm \cite{watkins1992q}.

\subsection{The Prediction Problem}

In this section, we will use our results in Section \ref{sec:sa} to study the convergence bounds of TD-learning algorithm for solving the prediction problem. Specifically, in subsection \ref{subsec:vtrace}, we establish the first-known convergence bounds of a multi-step TD-learning algorithm using off-policy sampling (called V-trace \cite{espeholt2018impala}). In section \ref{subsec:TDn}, we derive convergence bounds for the on-policy $n$-step TD-learning algorithm (called TD$(n)$), and provide numerical example to support our results.

\subsubsection{The V-trace Algorithm for Off-Policy TD-Learning}\label{subsec:vtrace}

In off-policy TD-learning algorithms \cite{sutton2018reinforcement}, one uses trajectories generated by a \textit{behavior} policy $\pi'\neq \pi$ to learn the value function of the \textit{target} policy $\pi$. Off-policy methods are used for three important reasons in the TD-setting: (a) It is typically necessary to have an exploration component in the behavior policy $\pi'$ which makes it different from the target policy $\pi$. (b) It is used in multi-agent training where various agents collect rewards using a behavior policy that is lagging with respect to the target policy in an actor-critic framework \cite{espeholt2018impala}. (c) One set of samples can be used more than once to evaluate different target policies, which can leverage acquired data in the past.

Off-policy TD-learning is implemented through importance sampling to obtain an unbiased estimate of $V_\pi$. However, the variance in the estimate can blow up since the importance sampling ratio can be very large \cite{glynn1989importance}. Thus, a well-known and fundamental difficulty in off-policy TD-learning with importance sampling is that of balancing the bias-variance trade-off.

Recently, \cite{espeholt2018impala} proposed an off-policy TD-learning algorithm called the V-trace, where they introduced two truncation levels in the importance sampling weights. Their construction (through two separate clippers) crucially allows the algorithm to control the bias in the limit (through one clipper), while the other clipper mainly controls the variance in the estimate. The V-trace algorithm has had a huge practical impact: it has been implemented in distributed RL architectures and platforms like IMPALA (a Tensorflow implementation) \cite{espeholt2018impala} and TorchBeast (a PyTorch implementation) \cite{kuttler2019torchbeast} for multi-agent training besides being used at scale in a recent Deepmind City Navigation Project “Street Learn” \cite{mirowski2018learning}. Given its impact, a theoretical understanding of the effects of the truncation levels on convergence rate is important for determining how to tune them to improve the performance of V-trace.

In this paper we consider a synchronous version of the V-trace algorithm. Let $\pi'$ be a behavior policy used to collect samples, and let $\pi$ be the target policy whose value function is to be estimated. We first initialize $V_0\in\mathbb{R}^{|\mathcal{S}|}$. Given a fixed horizon $n\geq 1$, in each iteration $k\geq 0$, for each state $s\in\mathcal{S}$, a trajectory $\{S_0^s,A_0^s,...,S_{n-1}^s,A_{n-1}^s,S_n^s\}$ with initial state $S_0^s=s$ is generated using the behavior policy $\pi'$. Then, the corresponding entry of the estimate $V_k$ is updated according to
\begin{align}\label{vtrace_algorithm}
	V_{k+1}(s)=V_k(s)+\epsilon_k\sum_{t=0}^{n-1}\beta^t\left(\prod_{j =0}^{t-1}c_j^s\right)\rho_t^s \left(\mathcal{R}(S_t^s,A_t^s)+\beta V_k(S_{t+1}^s)-V_k(S_t^s)\right),
\end{align}
where $c_t^s=\min\left(\bar{c},\frac{\pi(A_t^s|S_t^s)}{\pi'(A_t^s|S_t^s)}\right)$ and $\rho_t^s=\min\left(\bar{\rho},\frac{\pi(A_t^s|S_t^s)}{\pi'(A_t^s|S_t^s)}\right)$ are truncated importance sampling weights with truncation levels $\bar{\rho}\geq \bar{c}$. Here we use the convention that $c_t^s=\rho_t^s=1$, and $\mathcal{R}(S_t^s,A_t^s)=0$ whenever $t<0$. In the special case where the behavior policy $\pi'$ and the target policy $\pi$ coincide, and $\bar{c}\geq 1$, Algorithm (\ref{vtrace_algorithm}) boils down to the on-policy multi-step TD-learning update \cite{sutton2018reinforcement}, which will be studied in subsection \ref{subsec:TDn}. The following assumption is needed to study Algorithm (\ref{vtrace_algorithm}).
\begin{assumption}\label{as:coverage}
	It holds for any $s\in\mathcal{S}$ that $\{a\mid \pi(a|s)>0\}\subseteq \{a\mid \pi'(a|s)>0\}$.
\end{assumption}
\begin{remark}
	Assumption \ref{as:coverage} is usually referred to as the \textit{coverage} assumption, which ensures that the behavior policy $\pi'$ is at least as explorative as the target policy $\pi$. This assumption is typically necessary in off-policy setting.
\end{remark}

The asymptotic convergence of Algorithm (\ref{vtrace_algorithm}) when $n=\infty$ has been established in \cite{espeholt2018impala} using the convergence results of stochastic approximation under contraction assumptions \cite{bertsekas1996neuro,kushner2010stochastic}. The quality of the V-trace limit as a function of $\bar{\rho}$ and $\bar{c}$ has also been discussed in \cite{espeholt2018impala}. Specifically, $\bar{\rho}$ alone determines the limiting value function, and $\bar{c}$ mainly controls the variance in the estimates $\{V_k\}$. Our goal is to understand the convergence rate of Algorithm (\ref{vtrace_algorithm}) for any choices of $\bar{\rho}$ and $\bar{c}$, which will determine the bias-vs-convergence-rate trade-off. First of all, when $n<\infty$, the following proposition summarizes the properties of the V-trace algorithm, whose proof is presented in Appendix \ref{appendix-vtrace}.
\begin{proposition}\label{prop:vtrace}
	The V-trace algorithm admits the following properties.
	\begin{enumerate}
		\item Algorithm (\ref{vtrace_algorithm}) can be equivalently written as $V_{k+1}=V_k+\epsilon_k(\mathcal{T}_n(V_k)-V_k+w_k)$, where $\mathcal{T}_n$ is an operator and $\{w_k\}$ is the noise sequence.
		\item Let $\kappa_c=\min_{s\in\mathcal{S}}\mathbb{E}_{\pi'}[c_0^s\mid S_0^s=s]$ and $\kappa_\rho=\min_{s\in\mathcal{S}}\mathbb{E}_{\pi'}[\rho_0^s\mid S_0^s=s]$. Then we have $0<\kappa_c\leq \kappa_\rho\leq 1$, and the operator $\mathcal{T}_n$ is a $\gamma$-contraction w.r.t. $\|\cdot\|_\infty$, where $\gamma=1-\frac{(1-\beta)(1-(\beta \kappa_c)^n)\kappa_\rho}{(1-\beta\kappa_c)}$.
		\item The operator $\mathcal{T}_n$ has a unique fixed-point $V_{\pi_{\bar{\rho}}}$, where $\pi_{\bar{\rho}}(a|s)=\frac{\min(\bar{\rho}\pi'(a|s),\pi(a|s))}{\sum_{b\in\mathcal{A}}\min(\bar{\rho}\pi'(b|s),\pi(b|s))}$ for all $(s,a)$, and we have $\|V_{\pi_{\bar{\rho}}}-V_\pi\|_\infty\leq \frac{2}{(1-\beta)^2}\|\pi_{\bar{\rho}}-\pi\|_\infty$.
		\item The noise sequence $\{w_k\}$ satisfies Assumption \ref{as:noise} with $\|\cdot\|_e$ being $\|\cdot\|_\infty$ and
		\begin{align*}
			A=B=\begin{dcases}
				\frac{32\bar{\rho}^2}{(1-\beta\bar{c})^2},&\text{when }\beta\bar{c}<1,\\
				32\bar{\rho}^2n^2,&\text{when }\beta\bar{c}=1,\\
				\frac{32\bar{\rho}^2(\beta\bar{c})^{2n}}{(\beta\bar{c}-1)^2},&\text{when }\beta\bar{c}>1.
			\end{dcases}
		\end{align*}
	\end{enumerate}
\end{proposition}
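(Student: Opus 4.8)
The plan is to handle the four claims in order, taking $\mathcal{T}_n$ to be the conditional expectation of the V-trace increment. Concretely, I define $\mathcal{T}_n(V)(s) = V(s) + \mathbb{E}_{\pi'}[\sum_{t=0}^{n-1}\beta^t(\prod_{j=0}^{t-1}c_j^s)\rho_t^s(\mathcal{R}(S_t^s,A_t^s)+\beta V(S_{t+1}^s)-V(S_t^s)) \mid S_0^s=s]$ and set $w_k$ to be the difference between the true increment in (\ref{vtrace_algorithm}) and $\mathcal{T}_n(V_k)-V_k$; then claim~1 and the zero-mean property $\mathbb{E}[w_k\mid\mathcal{F}_k]=0$ of Assumption~\ref{as:noise} hold by construction. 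For the constants in claim~2, I compute $\mathbb{E}_{\pi'}[\rho_0^s\mid S_0^s=s]=\sum_{a}\min(\bar{\rho}\pi'(a|s),\pi(a|s))$ and the analogue for $c_0^s$; then $\kappa_c\le\kappa_\rho$ follows from $\bar{\rho}\ge\bar{c}$, $\kappa_\rho\le 1$ from $\sum_a\pi(a|s)=1$, and $\kappa_c>0$ from the coverage Assumption~\ref{as:coverage}.

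The contraction is the heart of the argument and the main obstacle. Since $\mathcal{T}_n$ is affine, I expand $\mathcal{T}_n V(s)-\mathcal{T}_n V'(s)$ with $\Delta=V-V'$ and, after telescoping, read off the coefficient of each $\Delta(S_t^s)$: it is $1-\rho_0^s$ at $t=0$, $\beta^t(\prod_{j=0}^{t-2}c_j^s)(\rho_{t-1}^s-c_{t-1}^s\rho_t^s)$ for $1\le t\le n-1$, and $\beta^n(\prod_{j=0}^{n-2}c_j^s)\rho_{n-1}^s$ at $t=n$. These coefficients can be \emph{negative} pointwise because the clipped weights may exceed one, so a direct bound on the sum of absolute values fails. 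Instead I will prove an $\ell_\infty$ contraction through monotonicity plus a shift bound: an affine operator that is order preserving and satisfies $\mathcal{T}_n(V+\lambda\mathbf{1})\le\mathcal{T}_n V+\gamma\lambda\mathbf{1}$ for all $\lambda\ge 0$ is automatically a $\gamma$-contraction in $\|\cdot\|_\infty$. Monotonicity is the delicate step: conditioning the interior term on $\sigma(S_0^s,A_0^s,\dots,S_{t-1}^s)$ and using the Markov property reduces it to $\mathbb{E}[(\rho_{t-1}^s-c_{t-1}^s\rho_t^s)\Delta(S_t^s)\mid S_{t-1}^s]$, and then the two facts $\mathbb{E}[\rho_t^s\mid S_t^s]\le 1$ and $\rho^s\ge c^s$ pointwise give $\mathbb{E}[(\rho_{t-1}^s-c_{t-1}^s\rho_t^s)\Delta(S_t^s)\mid S_{t-1}^s]\ge\mathbb{E}[(\rho_{t-1}^s-c_{t-1}^s)\Delta(S_t^s)\mid S_{t-1}^s]\ge 0$ whenever $\Delta\ge 0$; the boundary coefficients use $\mathbb{E}[\rho_0^s\mid S_0^s=s]\le 1$. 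Applying the coefficient sum to a constant vector telescopes to $1-(1-\beta)\mathbb{E}[\sum_{t=0}^{n-1}\beta^t(\prod_{j<t}c_j^s)\rho_t^s]$, and nested conditioning (with $\mathbb{E}[\rho^s\mid S]\le 1$, $\mathbb{E}[c^s\mid S]\le 1$ above, and $\mathbb{E}[\rho^s\mid S]\ge\kappa_\rho$, $\mathbb{E}[c^s\mid S]\ge\kappa_c$ below) pins this quantity in $[\,1-(1-\beta)\frac{1}{1-\beta},\,\gamma\,]\subseteq[0,\gamma]$ with $\gamma$ exactly as stated, since the lower bound on the expected weight sum is $\kappa_\rho\sum_{t=0}^{n-1}(\beta\kappa_c)^t$.

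For claim~3, the contraction and the Banach fixed-point theorem (as in the remark following Assumption~\ref{as:contraction}) give a unique fixed point, which I then identify with $V_{\pi_{\bar{\rho}}}$ by verifying $\mathcal{T}_n(V_{\pi_{\bar{\rho}}})=V_{\pi_{\bar{\rho}}}$ directly: conditioning the $t$-th summand on $S_t^s=s'$ and writing $Z(s')=\sum_a\min(\bar{\rho}\pi'(a|s'),\pi(a|s'))>0$, the conditional expectation of the $t$-th TD error equals $Z(s')[\mathcal{R}_{\pi_{\bar{\rho}}}(s')+\beta (P_{\pi_{\bar{\rho}}}V_{\pi_{\bar{\rho}}})(s')-V_{\pi_{\bar{\rho}}}(s')]$, which vanishes because $V_{\pi_{\bar{\rho}}}$ solves the Bellman equation for $\pi_{\bar{\rho}}$. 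The bias bound then follows from the resolvent identity $V_{\pi_{\bar{\rho}}}-V_\pi=(I-\beta P_{\pi_{\bar{\rho}}})^{-1}[\mathcal{R}_{\pi_{\bar{\rho}}}-\mathcal{R}_\pi+\beta(P_{\pi_{\bar{\rho}}}-P_\pi)V_\pi]$, combined with $\|(I-\beta P)^{-1}\|_\infty\le 1/(1-\beta)$, $\|V_\pi\|_\infty\le 1/(1-\beta)$, and $|\mathcal{R}|\le 1$; bounding the reward and transition perturbations by $\|\pi_{\bar{\rho}}-\pi\|_\infty$ produces the factor $2/(1-\beta)^2$.

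Finally, for claim~4, let $X_k$ denote the true increment, so $w_k=X_k-\mathbb{E}[X_k\mid\mathcal{F}_k]$ and hence $\mathbb{E}[\|w_k\|_\infty^2\mid\mathcal{F}_k]\le 4\,\mathbb{E}[\|X_k\|_\infty^2\mid\mathcal{F}_k]$ by the triangle and Jensen inequalities. Using $\prod_{j<t}c_j^s\le\bar{c}^{\,t}$, $\rho_t^s\le\bar{\rho}$, and $|\mathcal{R}(S_t^s,A_t^s)+\beta V_k(S_{t+1}^s)-V_k(S_t^s)|\le 1+2\|V_k\|_\infty$, I obtain $\|X_k\|_\infty\le\bar{\rho}\,(1+2\|V_k\|_\infty)\sum_{t=0}^{n-1}(\beta\bar{c})^t$; squaring with $(1+2a)^2\le 2(1+4a^2)$ and evaluating the geometric sum separately in the regimes $\beta\bar{c}<1$, $\beta\bar{c}=1$, and $\beta\bar{c}>1$ yields the bound $A+B\|V_k\|_\infty^2$ with the stated common value of $A=B$ (taking the larger of the two resulting constants). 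I expect claim~2, and specifically the monotonicity argument forced by the unbounded clipped weights, to be the only genuinely nontrivial step.
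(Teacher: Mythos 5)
Your proposal is correct and follows essentially the same route as the paper: the same telescoped coefficient decomposition, the same key inequality $\mathbb{E}_{\pi'}\bigl[c_{0,t-2}^s(\rho_{t-1}^s-c_{t-1}^s\rho_t^s)\mid\mathcal{F}_t\bigr]\ge 0$ (from $\bar{\rho}\ge\bar{c}$ and $\mathbb{E}[\rho_t^s\mid S_t^s]\le 1$), the same telescoping of the coefficient sum to $1-(1-\beta)\kappa_\rho\sum_{t=0}^{n-1}(\beta\kappa_c)^t$, the same Bellman-equation verification of the fixed point with the resolvent-identity Lipschitz bound, and the same geometric-sum noise bound. The only cosmetic difference is that you finish the contraction via the standard monotonicity-plus-constant-shift criterion, whereas the paper substitutes $\|V_1-V_2\|_\infty$ directly inside the expectation and appeals to symmetry; both hinge on exactly the same nonnegativity of the conditional coefficients.
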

\begin{remark}
	Due to the truncation level $\bar{\rho}$ introduced in Algorithm (\ref{vtrace_algorithm}), the fixed-point $V_{\pi_{\bar{\rho}}}$ of $\mathcal{T}_n$ is not necessarily the target value function $V^\pi$. When $\bar{\rho}\geq\rho_{\max}:= \max_{(s,a)}\frac{\pi(a|s)}{\pi'(a|s)}$, we have $\pi_{\bar{\rho}}=\pi$. Otherwise the policy $\pi_{\bar{\rho}}$ is in some sense between the behavior policy $\pi'$ and the target policy $\pi$. 
\end{remark}

\paragraph{Finite-Sample Analysis of the V-trace Algorithm}
From Proposition \ref{prop:vtrace}, we see that our results in Section \ref{sec:sa} are applicable. Moreover, since $\|\cdot\|_c=\|\cdot\|_e=\|\cdot\|_\infty$ in this problem, we can use Corollary \ref{co:log-dependence} for the choices of the smoothing norm $\|\cdot\|_s$ and the parameter $\mu$. For ease of exposition, here we only consider the $O(1/k)$ stepsizes, and pick the parameters to ensure that we fall in case 3 of Corollary \ref{co:diminishing_step_size}, which has the best convergence rate. The finite-sample error bound for other cases can be derived similarly. The proof of the followng theorem is presented in Appendix \ref{pf:thm:vtrace}.
\begin{theorem}\label{thm:vtrace}
	Consider $\{V_k\}$ of Algorithm (\ref{vtrace_algorithm}). Suppose that $\epsilon_k=\frac{\epsilon}{k+K}$ with $	\epsilon=\frac{4}{1-\gamma}$ and $K=\frac{64(A+2)\log |\mathcal{S}|}{(1-\gamma)^3}$. Then we have for all $k\geq 0$:
	\vspace{-0.4em}
	\begin{align}\label{bound_vtrace}
		\mathbb{E}\left[\|V_k-V_{\pi_{\bar{\rho}}}\|_\infty^2\right]
		\leq 1024e^2(\|V_0-V_{\pi_{\bar{\rho}}}\|_\infty^2+2\|V_{\pi_{\bar{\rho}}}\|_\infty^2+1)\frac{ (A+2)\log |\mathcal{S}|}{(1-\gamma)^3}\frac{1}{k+K}.
	\end{align}
\end{theorem}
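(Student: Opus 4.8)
The plan is to recognize Theorem \ref{thm:vtrace} as a direct specialization of the general finite-sample machinery to the V-trace iteration, so that the work is almost entirely in identifying the right constants and doing the bookkeeping. First I would invoke Proposition \ref{prop:vtrace}: parts 1--2 show that the update (\ref{vtrace_algorithm}) is exactly Algorithm (\ref{sa:algorithm}) with $\mathcal{H}=\mathcal{T}_n$ a $\gamma$-contraction in $\|\cdot\|_\infty$ (with $\gamma$ as given there) and fixed point $x^*=V_{\pi_{\bar\rho}}$, while part 4 shows the noise obeys Assumption \ref{as:noise} with $\|\cdot\|_e=\|\cdot\|_\infty$ and $A=B$ equal to the stated expression. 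Hence every hypothesis of Theorem \ref{thm:sa-finite-time-bound} and its corollaries is met with $\|\cdot\|_c=\|\cdot\|_e=\|\cdot\|_\infty$ and dimension $d=|\mathcal{S}|$.

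Since both relevant norms are $\ell_\infty$, I would next apply Corollary \ref{co:log-dependence} with $g(x)=\tfrac12\|x\|_p^2$, $p=4\log|\mathcal{S}|$, and $\mu=(1/2+1/(2\gamma))^2-1$, which supplies the explicit bounds $\alpha_1\le \tfrac32$, $\alpha_2\ge \tfrac12(1-\gamma)$, $\alpha_3\le 32e(B+2)\log|\mathcal{S}|/(1-\gamma)$, and $\alpha_4\le 16e\log|\mathcal{S}|/(1-\gamma)$. With the prescribed $\epsilon=4/(1-\gamma)$ these give $\alpha_2\epsilon\ge 2$, so in particular $\epsilon>1/\alpha_2$ and we land in case 3 of Corollary \ref{co:diminishing_step_size} (with $\xi=1$); along the way I would verify that the prescribed $K$ renders $\epsilon_0=\epsilon/K$ admissible in the sense required by Theorem \ref{thm:sa-finite-time-bound}.

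The remaining step is substitution and simplification. In the case-3 estimate $\mathbb{E}[\|x_k-x^*\|_c^2]\le \alpha_1\|x_0-x^*\|_c^2(K/(k+K))^{\alpha_2\epsilon}+\frac{4e\epsilon^2\alpha_4}{\alpha_2\epsilon-1}\frac{A+2B\|x^*\|_c^2}{k+K}$, I would use $\alpha_2\epsilon\ge 2$ to collapse the exponent via $(K/(k+K))^{\alpha_2\epsilon}\le K/(k+K)$ and then substitute the explicit $K=64(A+2)\log|\mathcal{S}|/(1-\gamma)^3$ to convert the first term into the target $1/(k+K)$ scaling. For the second term I would plug in $\epsilon^2=16/(1-\gamma)^2$, $\alpha_4\le 16e\log|\mathcal{S}|/(1-\gamma)$, $\alpha_2\epsilon-1\ge 1$, and use $A=B$ together with $A\le A+2$ to write $A+2B\|V_{\pi_{\bar\rho}}\|_\infty^2\le (A+2)(1+2\|V_{\pi_{\bar\rho}}\|_\infty^2)$, producing the factor $1024e^2$. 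Collecting both terms over the common denominator and absorbing the smaller numerical constant from the first term into the generous $1024e^2$ budget then yields the stated prefactor $(\|V_0-V_{\pi_{\bar\rho}}\|_\infty^2+2\|V_{\pi_{\bar\rho}}\|_\infty^2+1)$.

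I expect the main obstacle to be the constant bookkeeping rather than any conceptual difficulty. In particular, confirming stepsize admissibility with the prescribed $K$ is delicate: the ratio $\alpha_2/\alpha_3$ carries the factor $e$ coming from the $\ell_\infty$-to-$\ell_p$ norm equivalence, so one must either work with the exact envelope constants $u_{cs},\ell_{cs},u_{es},\ell_{es},L$ (which may be tighter than the Corollary \ref{co:log-dependence} bounds) or invoke the remark following Theorem \ref{thm:sa-finite-time-bound} to begin the recursion from the first admissible index. The only other care point is merging the initial-condition-forgetting term and the variance term into a single clean coefficient while keeping every numerical factor inside the $1024e^2$ allowance.
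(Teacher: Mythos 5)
Your proposal follows essentially the same route as the paper's proof: cast V-trace as Algorithm (\ref{sa:algorithm}) via Proposition \ref{prop:vtrace}, invoke Corollary \ref{co:log-dependence} to get the bounds $\bar{\alpha}_1,\dots,\bar{\alpha}_4$, land in case 3 of Corollary \ref{co:diminishing_step_size} with $\alpha_2\epsilon\geq 2$ so that $(K/(k+K))^{\alpha_2\epsilon}\leq K/(k+K)$, and then collect constants into the $1024e^2$ prefactor. The stepsize-admissibility point you flag is indeed where the argument is most delicate (the paper substitutes the pessimistic bounds $\bar{\alpha}_2,\bar{\alpha}_3$ for $\alpha_2,\alpha_3$ without further comment), but your handling of it is consistent with the intended proof.
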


To better understand the how the parameters $\bar{c}$, $\bar{\rho}$, and $n$ impact the convergence rate. Suppose we want to find the number of iterations so that in expectation the distance between $x_k$ and $x^*$ is less than $\delta$, i.e.,  $k_{\delta}=\min\left\{k\geq 0:\mathbb{E}[\|x_k-x^*\|_\infty^2]\leq \delta\right\}$. Using Eq. (\ref{bound_vtrace}) and we have $k_\delta\geq 1024 e^2  (\|V_0-V_{\pi_{\bar{\rho}}}\|_\infty^2+2\|V_{\pi_{\bar{\rho}}}\|_\infty^2+1)\frac{(A+2) \log|\mathcal{S}|}{\delta(1-\gamma)^3}$. We first note that the dimension dependence of $k_\delta$ is only $\log (|\mathcal{S}|)$. The parameters $\bar{c}$, $\bar{\rho}$, and $n$ impact the convergence rate through the constant $A$. 

From property $(d)$ of Propositon \ref{prop:vtrace}, we see that $A$ is a piecewise function of $\bar{c}$, $\bar{\rho}$, and $n$. In all its cases, $\bar{\rho}$ appears quadratically in $A(\bar{c},\bar{\rho},n)$. The impact of $\bar{c}$ and $n$ is more subtle. When $\beta\bar{c}<1$, $A(\bar{c},\bar{\rho},n)$ is independent of the horizon $n$. However, when $\beta\bar{c}=1$ or $\beta\bar{c}>1$, $A(\bar{c},\bar{\rho},n)$ increases either linearly or exponentially in terms of $n$, which suggests that $\bar{c}<1/\beta$ is a better choice. Such a small  $\bar{c}$ can lead to the contraction factor $\gamma$ being close to unity (Proposition \ref{prop:vtrace} $(b)$), which increases the error in Eq. (\ref{bound_vtrace}). However, since $A$ does not depend on $n$ when $\bar{c}<1/\beta$, this drawback can be avoided by increasing the horizon $n$, which decreases the contraction parameter $\gamma$, albeit at the cost of more samples in each iterate. For a given application, based on the above idea, we can numerically optimize the parameters ($\bar{c}$, $\bar{\rho}$, and $n$) to trade-off between contraction factor and variance.

Though we have analyzed the convergence rate of $V_k$, the limiting value function $V_{\pi_{\bar{\rho}}}$ is not the value function of the target policy $\pi$. Note that this bias can be eliminated by choosing $\bar{\rho}\geq \rho_{\max}$, provided that $\rho_{\max}$ is finite. However, when the number of state-action pairs is infinite, and when we use V-trace algorithm along with function approximation, $\rho_{\max}$ can be infinity. Studying such a scenario is one of our future directions.

\subsubsection{The On-Policy TD($n$) Algorithm}\label{subsec:TDn}
Recall from the previous section that when the behavior policy and the target policy are the same and $\bar{c}=\bar{\rho}\geq 1$, the V-trace algorithm (\ref{vtrace_algorithm}) reduces to the on-policy $n$-step TD-learning algorithm (with synchronous update):
\begin{align}\label{algo:TDn}
	V_{k+1}(s)=V_k(s)+\epsilon_k\left(\sum_{t=0}^{n-1}\beta^{t}\mathcal{R}(S_t^s,A_t^s)+\beta^n V_k(S_{n}^s)-V_k(s)\right).
\end{align}

Note that since the TD$(n)$ algorithm is a special case of the V-trace algorithm, it possesses properties given in Proposition \ref{prop:vtrace} (e.g., $\|\cdot\|_\infty$-norm contraction). Here to derive convergence bounds, we will use a slightly different approach. Specifically, we will show that instead of only being a contraction with respect to the $\ell_\infty$-norm, the corresponding operator of Algorithm (\ref{algo:TDn}) is also a contraction with respect to a weighted $\ell_2$-norm. To establish such property, the following assumption is needed.

\begin{assumption}\label{as:TDn}
	The Markov chain $\{S_k\}$ under policy $\pi$ has a unique stationary distribution $\lambda\in\Delta^{|\mathcal{S}|}$, and $\lambda(s)>0$ for all $s\in\mathcal{S}$.
\end{assumption}
\begin{remark}
	In order to use on-policy sampling, the target policy itself must be explorative, which is a consequence of Assumption \ref{as:TDn}. One condition that ensures Assumption \ref{as:TDn} is that the finite-state Markov chain $\{S_k\}$ is irreducible and aperiodic \cite{levin2017markov}, and is used in many related works \cite{tsitsiklis1997analysis,tsitsiklis1999average}.
\end{remark}

Let $\Lambda\in\mathbb{R}^{|\mathcal{S}|\times|\mathcal{S}|}$ be a diagonal matrix with diagonal entries being $\{\lambda(s)\}_{s\in\mathcal{S}}$. Let $\langle \cdot,\cdot\rangle_{\Lambda}$ be an inner product in $\mathbb{R}^{|\mathcal{S}|}$ defined by $\langle V_1,V_2\rangle_{\Lambda}=V_1^\top \Lambda V_2$. Let $\|\cdot\|_{\Lambda}$ be the norm induced by $\langle \cdot,\cdot\rangle_{\Lambda}$, i.e., $\|V\|_\Lambda=(V^\top \Lambda V)^{1/2}$. We next present the $\|\cdot\|_\Lambda$-norm contraction property of Algorithm (\ref{algo:TDn}) in the following proposition, whose proof is provided in Appendix \ref{ap:pf:proposition_TDn}

\begin{proposition}\label{prop:TDn}
	The following properties hold for the TD$(n)$ algorithm (\ref{algo:TDn}).
	\begin{enumerate}
		\item Algorithm (\ref{algo:TDn}) can be equivalently written  as $V_{k+1}=V_k+\epsilon_k(\mathcal{T}_n(V_k)-V_k+w_k)$, where $\mathcal{T}_n$ is an operator and $w_k$ is the noise.
		\item The operator $\mathcal{T}_n$ is a $\beta^n$-contraction with respect to $\|\cdot\|_\Lambda$.
		\item The random process $\{w_k\}$ satisfies Assumption \ref{as:noise} with $\|\cdot\|_e$ being $\|\cdot\|_\Lambda$, $A=\frac{2(1-\beta^n)^2}{(1-\beta)^2}$, and $B=2\beta^{2n}$.
	\end{enumerate}
\end{proposition}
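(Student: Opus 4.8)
The plan is to identify the operator $\mathcal{T}_n$ as the one-step conditional expectation of the TD$(n)$ update and then verify the three claims in turn, the crux being the non-expansiveness of the policy transition matrix under the stationary-weighted norm. Let $P_\pi$ denote the transition matrix induced by $\pi$, i.e.\ $P_\pi(s,s')=\sum_a\pi(a|s)P_a(s,s')$, and let $R_\pi(s)=\sum_a\pi(a|s)\mathcal{R}(s,a)$. Since the trajectory $\{S_0^s,\dots,S_n^s\}$ is generated fresh at iteration $k$, independently of the past given $V_k$, I would define $\mathcal{T}_n(V)=\sum_{t=0}^{n-1}\beta^tP_\pi^tR_\pi+\beta^nP_\pi^nV$, which equals $\mathbb{E}_\pi[\sum_{t=0}^{n-1}\beta^t\mathcal{R}(S_t,A_t)+\beta^nV(S_n)\mid S_0=s]$ entrywise. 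Setting $w_k(s)$ to be the difference between the random bracketed term in (\ref{algo:TDn}) and $\mathcal{T}_n(V_k)(s)$ makes the rewriting $V_{k+1}=V_k+\epsilon_k(\mathcal{T}_n(V_k)-V_k+w_k)$ immediate and gives part~1; moreover $\mathbb{E}[w_k\mid\mathcal{F}_k]=0$ holds by construction, so Assumption~\ref{as:noise}(a) is satisfied.

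For part~2, observe that $\mathcal{T}_n(V_1)-\mathcal{T}_n(V_2)=\beta^nP_\pi^n(V_1-V_2)$, so the contraction reduces to showing $P_\pi$ is non-expansive in $\|\cdot\|_\Lambda$. This is the \emph{key step}. Writing
\[
\|P_\pi V\|_\Lambda^2=\sum_s\lambda(s)\Big(\sum_{s'}P_\pi(s,s')V(s')\Big)^2
\]
and applying Jensen's inequality to the probability row $P_\pi(s,\cdot)$ yields $\|P_\pi V\|_\Lambda^2\le\sum_s\lambda(s)\sum_{s'}P_\pi(s,s')V(s')^2$; interchanging the order of summation and invoking the stationarity identity $\sum_s\lambda(s)P_\pi(s,s')=\lambda(s')$ (guaranteed by Assumption~\ref{as:TDn}) collapses the right-hand side to $\|V\|_\Lambda^2$. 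Iterating gives $\|P_\pi^nV\|_\Lambda\le\|V\|_\Lambda$, hence the $\beta^n$-contraction.

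For part~3, I would bound $\mathbb{E}[w_k(s)^2\mid\mathcal{F}_k]\le\mathbb{E}[Y_s^2\mid\mathcal{F}_k]$, where $Y_s=R_s+\beta^nV_k(S_n^s)$ is the bracketed random term with $R_s=\sum_{t=0}^{n-1}\beta^t\mathcal{R}(S_t^s,A_t^s)$, using that the centered deviation has second moment at most the raw second moment. The inequality $(a+b)^2\le 2a^2+2b^2$ splits this into two pieces: bounded rewards in $[0,1]$ give $R_s\le(1-\beta^n)/(1-\beta)$, while $\mathbb{E}[V_k(S_n^s)^2\mid\mathcal{F}_k]=\sum_{s'}P_\pi^n(s,s')V_k(s')^2$. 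Aggregating with the weights $\lambda(s)$ and reusing the stationarity identity $\sum_s\lambda(s)P_\pi^n(s,s')=\lambda(s')$ turns the second piece into exactly $2\beta^{2n}\|V_k\|_\Lambda^2$ and the first into the constant $2(1-\beta^n)^2/(1-\beta)^2$, recovering the claimed $A$ and $B$.

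The main obstacle is part~2, namely establishing non-expansiveness of $P_\pi$ in the weighted Euclidean norm; this Jensen-plus-stationarity argument is the classical Tsitsiklis–Van Roy trick, and the very same stationarity identity is what also makes the noise aggregation in part~3 go through cleanly. Everything else is routine once $\mathcal{T}_n$ is correctly identified as a conditional expectation, and since no cross term arises from the $(a+b)^2$ bound, the second-moment estimate avoids any dependence on correlations across states.
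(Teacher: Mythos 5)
Your proposal is correct and follows essentially the same route as the paper: the same identification $\mathcal{T}_n(V)=\sum_{t=0}^{n-1}\beta^tP_\pi^tR_\pi+\beta^nP_\pi^nV$, the same Jensen-plus-stationarity argument for non-expansiveness of $P_\pi$ (the paper applies it once to $P_\pi^n$ rather than iterating, which is equivalent), and the same $(a+b)^2\le 2a^2+2b^2$ split with $\lambda$-weighted aggregation for the noise bound. No gaps.
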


The advantage of $\|\cdot\|_\Lambda$-norm contraction is that the norm squared function $f(x)=\frac{1}{2}\|x\|_\Lambda^2$ itself is a $1$-smooth function with respect to $\|\cdot\|_\Lambda$. Therefore, there is no need to modify $f(x)$, or in other words, when constructing the Lyapunov function $M(x)$, we can simply choose $\mu=1$ and the smoothing norm $\|\cdot\|_s$ to be $\|\cdot\|_\Lambda$. Next, using Theorem \ref{thm:sa-finite-time-bound} on the TD$(n)$ algorithm and we have the following finite-sample bounds. For ease of exposition, we only state the result for the case of constant stepsizes. See Appendix \ref{ap:pf:TDn} for the proof.

\begin{theorem}\label{thm:TDn}
	Consider $\{V_k\}$ generated by the on-policy TD$(n)$ algorithm (\ref{algo:TDn}). Suppose that $\epsilon_k\equiv \epsilon\leq\frac{1-\beta^n}{16(1+\beta^{2n})}$. Then we have for all $k\geq 0$:
	\begin{align}\label{bound_TDn}
		\mathbb{E}[\|V_k-V_\pi\|_\Lambda^2]=\|V_0-V_\pi\|_\Lambda^2(1-(1-\beta^n)\epsilon)^k+\frac{8}{1-\beta^n}\left(\frac{(1-\beta^n)^2}{(1-\beta)^2}+2\beta^{2n}\|V_\pi\|_\Lambda^2\right)\epsilon.
	\end{align}
\end{theorem}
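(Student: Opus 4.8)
The plan is to obtain Theorem \ref{thm:TDn} as a direct specialization of the general constant-stepsize bound in Corollary \ref{co:constant_step_size}, exploiting the fact that the contraction here is with respect to the inner-product norm $\|\cdot\|_\Lambda$. First I would invoke Proposition \ref{prop:TDn} to rewrite the TD$(n)$ recursion (\ref{algo:TDn}) in the abstract form (\ref{sa:algorithm}) with $\mathcal{H}=\mathcal{T}_n$, whose unique fixed-point is $V_\pi$. Proposition \ref{prop:TDn} supplies exactly the three ingredients needed to apply the theory of Section \ref{sec:sa}: the contraction factor $\gamma=\beta^n$ in $\|\cdot\|_\Lambda$ (Assumption \ref{as:contraction}), and the affine noise bound (Assumption \ref{as:noise}) with $\|\cdot\|_e=\|\cdot\|_\Lambda$, $A=\frac{2(1-\beta^n)^2}{(1-\beta)^2}$, and $B=2\beta^{2n}$. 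Since the stepsize is constant, Assumption \ref{as:step_size} holds trivially.

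The crucial simplification is the choice of the smoothing norm. Because $\|\cdot\|_c=\|\cdot\|_e=\|\cdot\|_\Lambda$ is induced by the inner product $\langle\cdot,\cdot\rangle_\Lambda$, I would take $\|\cdot\|_s=\|\cdot\|_\Lambda$ as well and set $\mu=1$. With this choice $g(x)=\frac{1}{2}\|x\|_\Lambda^2$ is $1$-smooth with respect to $\|\cdot\|_\Lambda$ (any squared inner-product norm is $1$-smooth in that norm), so $L=1$, and all four norm-equivalence constants collapse to unity: $\ell_{cs}=u_{cs}=\ell_{es}=u_{es}=1$. Substituting these into the definitions preceding Proposition \ref{prop:recursion} gives $\alpha_1=(1+\mu)/(1+\mu)=1$, hence $\alpha_2=1-\gamma\alpha_1^{1/2}=1-\beta^n>0$; moreover $\alpha_3=8(B+2)=16(1+\beta^{2n})$ and $\alpha_4=\alpha_3/(2(B+2))=4$. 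In particular the admissibility threshold becomes $\alpha_2/\alpha_3=\frac{1-\beta^n}{16(1+\beta^{2n})}$, which is precisely the stepsize constraint assumed in the theorem, so Corollary \ref{co:constant_step_size} applies.

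It then remains to substitute the computed constants into the conclusion of Corollary \ref{co:constant_step_size}, namely
\begin{align*}
	\mathbb{E}[\|V_k-V_\pi\|_\Lambda^2]\leq \alpha_1\|V_0-V_\pi\|_\Lambda^2(1-\alpha_2\epsilon)^k+(A+2B\|V_\pi\|_\Lambda^2)\frac{\alpha_4\epsilon}{\alpha_2}.
\end{align*}
The first term becomes $\|V_0-V_\pi\|_\Lambda^2(1-(1-\beta^n)\epsilon)^k$. For the second term, factoring a $2$ out of $A+2B\|V_\pi\|_\Lambda^2=2\bigl(\frac{(1-\beta^n)^2}{(1-\beta)^2}+2\beta^{2n}\|V_\pi\|_\Lambda^2\bigr)$ and multiplying by $\alpha_4\epsilon/\alpha_2=4\epsilon/(1-\beta^n)$ yields $\frac{8\epsilon}{1-\beta^n}\bigl(\frac{(1-\beta^n)^2}{(1-\beta)^2}+2\beta^{2n}\|V_\pi\|_\Lambda^2\bigr)$, which matches (\ref{bound_TDn}) term for term.

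I do not anticipate any genuine obstacle: the entire argument is careful bookkeeping of constants, and the only conceptual point is the recognition that the inner-product structure of $\|\cdot\|_\Lambda$ lets one choose $\|\cdot\|_s=\|\cdot\|_\Lambda$ and $\mu=1$, thereby trivializing the generalized Moreau envelope (all equivalence constants equal $1$) and making $\alpha_1=1$ independent of $\mu$. The mildest care is needed in verifying that $g$ is $1$-smooth in $\|\cdot\|_\Lambda$ and in tracking the factor-of-two groupings so that the final constants land exactly as in (\ref{bound_TDn}).
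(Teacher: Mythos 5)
Your proposal is correct and follows exactly the paper's own proof: choose $\|\cdot\|_s=\|\cdot\|_\Lambda$, $\mu=1$, $L=1$, so all norm-equivalence constants equal one, yielding $\alpha_1=1$, $\alpha_2=1-\beta^n$, $\alpha_3=16(1+\beta^{2n})$, $\alpha_4=4$, and then specialize Corollary \ref{co:constant_step_size} with the constants from Proposition \ref{prop:TDn}. The arithmetic checks out, including the stepsize threshold $\alpha_2/\alpha_3$ and the factor-of-two grouping in the variance term (and your $\leq$ is the correct reading of the ``$=$'' typo in Eq.\ (\ref{bound_TDn})).
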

\begin{remark}
	It is well-known that TD$(n)$ algorithm can be viewed as a linear stochastic approximation algorithm involving a Hurwitz matrix \cite{bertsekas1996neuro,tsitsiklis1997analysis}. This suggests a different way of deriving convergence bounds, where Lyapunov functions are constructed using Lyapunov equations. However, one can show that the two approaches are similar in a way, and they indeed give same finite-sample convergence bounds (\ref{bound_TDn}).
\end{remark}

Note that according to Theorem \ref{thm:TDn}, it is clear that the convergence rate of the first term on the RHS of Eq. (\ref{bound_TDn}) is in favor of large $n$, while under some mild conditions the second term is in favor of small $n$. This observation indicates that, for large $n$, the bias in the estimate $V_k$ will fastly be eliminated, but the variance is relatively large. However, since the second term on the RHS of Eq. (\ref{bound_TDn}) can be further bounded above by $\frac{32}{(1-\beta)^2}+\frac{16\beta^2}{1-\beta}\|V_\pi\|_{\Lambda}^2$, the impact of the variance is somewhat limited. The following numerical example is used to support our analysis above.

\begin{figure}
\centering
	\includegraphics[width=0.55\linewidth]{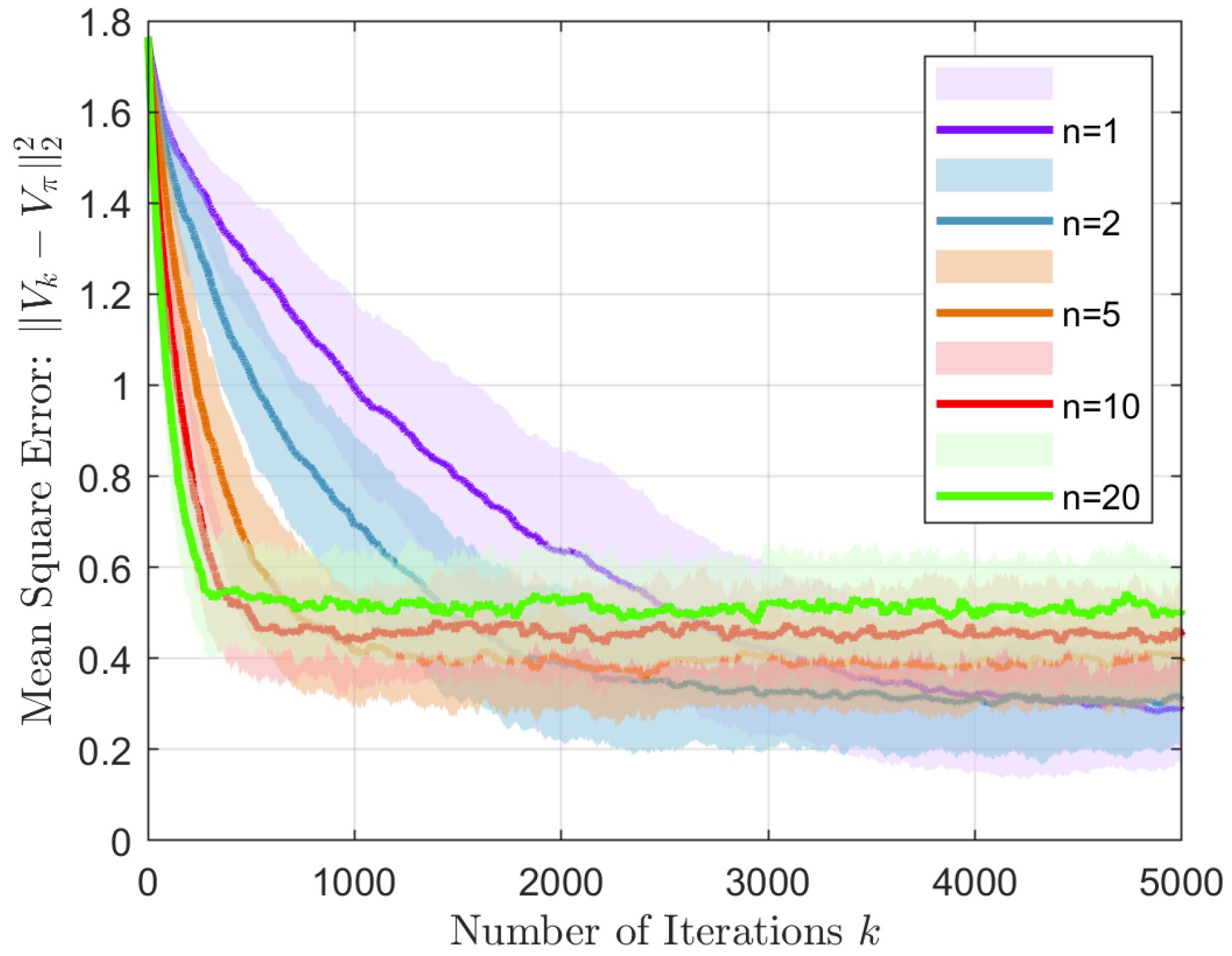}
    \caption{The convergence rates of TD($n$) algorithms\label{fig1}}
\end{figure}

\textbf{Example:} We consider an MDP with $10$ states and $3$ actions. The transition probabilities and the reward function are both randomly generated. The target policy is to take each action with equal probability regardless of the current state. The value function $V_\pi$ is calculated by solving the Bellman's equation. In Figure \ref{fig1}, we plot the mean square distance (with error bar) between $V_k$ and $V_\pi$ as a function of the number of iterations for different values of $n$, where each line is obtained by simulating $100$ sample paths. We see that, in general, when $n$ increases, the initial convergence rate is better, but asymptotically the mean square error stays at a higher value. However, when $n$ is already large, the asymptotic error does not increase much when increasing $n$ (see the plots for $n=10$ and $n=20$).

\subsection{The Control Problem}\label{subsec;Q-learning}

Recall that in the control problem we aim at finding an optimal policy $\pi^*$, and a popular algorithm to achieve that is the $Q$-learning algorithm \cite{watkins1992q}. We begin by defining the $Q$-function associated with an optimal policy $\pi^*$ by: $Q^*(s,a)=\mathbb{E}_{\pi^*}\left[\sum_{k=0}^{\infty}\beta^k\mathcal{R}(S_k,A_k)\;\middle|\;S_0=s,A_0=a\right]$ for all $(s,a)$. It turns out that the following relation holds between the optimal policy $\pi^*$ and the optimal $Q$-function $Q^*$: $\pi^*(s)\in \arg\max_{a\in\mathcal{A}}Q^*(s,a)$ for all state $s$. Moreover, $Q^*$ unique solves the following Bellman's optimality equation: $Q=\mathcal{T}(Q)$, where $\mathcal{T}$ is the Bellman's optimality operator defined by
\begin{align*}
	[\mathcal{T}(Q)](s,a)=\mathcal{R}(s,a)+\beta\mathbb{E}\left[\max_{a'\in\mathcal{A}}Q(S_{k+1},a')\;\middle|\;S_k=s,A_k=a\right],\quad \forall\;(s,a)\in\mathcal{S}\times\mathcal{A}.
\end{align*}
Therefore, solving the control problem reduces to solving the equation $Q=\mathcal{T}(Q)$, which leads to the $Q$-learning algorithm. 

Consider the following $Q$-learning algorithm (in the synchronous setting) of \cite{watkins1992q}: first initialize $Q_0\in\mathbb{R}^{|\mathcal{S}||\mathcal{A}|}$, then at each time step, sample from each state-action pair $(s,a)$ its successor state $s'_{s,a}$, and update the corresponding component of the iterate $Q_k$ according to
\begin{align}\label{algo:Q-learning}
	Q_{k+1}(s,a)=Q_k(s,a)+\epsilon_k\left(\mathcal{R}(s,a)+\beta\max_{a'\in\mathcal{A}}Q_k(s'_{s,a},a')-Q_k(s,a)\right).
\end{align}

To apply our results in Section \ref{sec:sa}, we first summarize the properties of the $Q$-learning algorithm in the following proposition.
\begin{proposition}\label{prop:synchronous_Q}
	The $Q$-learning algorithm (\ref{algo:Q-learning}) admits the following properties.
	\begin{enumerate}
		\item Algorithm (\ref{algo:Q-learning}) can be equivalently written as $Q_{k+1}=Q_k+\epsilon_k(\mathcal{T}(Q_k)-Q_k+w_k)$.
		\item The Bellman's optimality operator $\mathcal{T}$ is a $\beta$ -- contraction mapping w.r.t. $\|\cdot\|_\infty$.
		\item The noise $\{w_k\}$ satisfies Assumption \ref{as:noise} with $\|\cdot\|_e$ being $\|\cdot\|_\infty$ and $A=B=8$.
	\end{enumerate}
\end{proposition}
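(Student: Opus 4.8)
The plan is to verify the three claimed properties in sequence, since they are precisely the three ingredients (SA form, contraction, noise bound) that Assumptions \ref{as:contraction}--\ref{as:noise} require in order to invoke Theorem \ref{thm:sa-finite-time-bound}.

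For the first property I would define the noise as the gap between the single-sample update direction and its conditional mean. Concretely, set
$$w_k(s,a) := \beta\max_{a'\in\mathcal{A}}Q_k(s'_{s,a},a') - \beta\mathbb{E}\Big[\max_{a'\in\mathcal{A}}Q_k(S_{k+1},a')\,\Big|\,S_k=s,A_k=a\Big]$$
for each coordinate $(s,a)$. Since the reward $\mathcal{R}(s,a)$ is deterministic given $(s,a)$, adding and subtracting the conditional expectation of the bootstrapped term rewrites the per-coordinate update in (\ref{algo:Q-learning}) as $Q_{k+1}(s,a)=Q_k(s,a)+\epsilon_k([\mathcal{T}(Q_k)](s,a)-Q_k(s,a)+w_k(s,a))$, which stacks into the vector form of property $1$. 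This step is essentially definitional.

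For the second property I would bound $|[\mathcal{T}(Q_1)](s,a)-[\mathcal{T}(Q_2)](s,a)|$ for arbitrary $Q_1,Q_2$ and any fixed $(s,a)$. The rewards cancel, leaving $\beta$ times the absolute difference of two conditional expectations of max-terms. Pulling the absolute value inside the expectation and applying the elementary inequality $|\max_{a'} f(a') - \max_{a'} g(a')| \leq \max_{a'}|f(a')-g(a')| \leq \|f-g\|_\infty$ bounds the integrand by $\|Q_1-Q_2\|_\infty$ pointwise; taking expectation and then the supremum over $(s,a)$ yields $\|\mathcal{T}(Q_1)-\mathcal{T}(Q_2)\|_\infty\leq\beta\|Q_1-Q_2\|_\infty$. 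This is the standard $\ell_\infty$-contraction argument for the Bellman optimality operator.

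For the third property, the mean-zero claim follows immediately from the definition of $w_k$ as a coordinate-wise conditionally centered random variable. The second-moment bound is the only part demanding care with constants, and I regard it as the main (if modest) obstacle: one must route the $\ell_\infty$-norm through the bounded max-operator and split constants so the affine bound emerges with exactly $A=B=8$. I would use the triangle inequality $\|w_k\|_\infty \leq \|g_k\|_\infty + \|\mathcal{T}(Q_k)\|_\infty$, where $g_k(s,a)=\mathcal{R}(s,a)+\beta\max_{a'}Q_k(s'_{s,a},a')$ is the sampled update. Using $\mathcal{R}(s,a)\in[0,1]$, $\beta<1$, and $|\max_{a'}Q_k(\cdot,a')|\leq\|Q_k\|_\infty$ gives $\|g_k\|_\infty,\|\mathcal{T}(Q_k)\|_\infty \leq 1+\|Q_k\|_\infty$, hence $\|w_k\|_\infty\leq 2(1+\|Q_k\|_\infty)$. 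Squaring and applying $(1+t)^2\leq 2(1+t^2)$ yields the surely-holding pointwise bound $\|w_k\|_\infty^2\leq 8+8\|Q_k\|_\infty^2$, which passes through the conditional expectation unchanged (the right-hand side being $\mathcal{F}_k$-measurable) and establishes Assumption \ref{as:noise} with $A=B=8$. Notably, this last bound is deterministic in the samples, so no conditional-independence across coordinates is needed.
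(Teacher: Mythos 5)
Your proposal is correct and follows essentially the same route as the paper: the same definitional decomposition of the update into $\mathcal{T}(Q_k)-Q_k+w_k$, the same standard $\ell_\infty$-contraction argument via $|\max_{a'}Q_1-\max_{a'}Q_2|\leq\max_{a'}|Q_1-Q_2|$, and the same deterministic bound $\|w_k\|_\infty\leq 2(1+\|Q_k\|_\infty)$ squared via $(1+t)^2\leq 2(1+t^2)$ to obtain $A=B=8$. The only cosmetic difference is that the paper bounds $|w_k(s,a)|$ directly by $2\beta\|Q_k\|_\infty$ (exploiting the cancellation of the rewards) before loosening to $2(1+\|Q_k\|_\infty)$, while you go through the triangle inequality on $g_k$ and $\mathcal{T}(Q_k)$; both land on identical constants.
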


Proposition \ref{prop:synchronous_Q} indicates that Theorem \ref{thm:sa-finite-time-bound} is applicable for $Q$-learning. To compare our result with existing literature \cite{beck2012error,wainwright2019stochastic}, we will apply Corollary \ref{co:constant_step_size} and Corollary \ref{co:diminishing_step_size} case (c) (which gives the optimal asymptotic rate) to obtain the finite-sample error bounds for $Q$-learning.
\begin{theorem}\label{thm:Q-learning}
	Consider the iterates $\{Q_k\}$ generated by the $Q$-learning Algorithm (\ref{algo:Q-learning}). 
	\begin{enumerate}
		\item Suppose that $\epsilon_k=\epsilon\leq \frac{(1-\beta)^2}{640e\log (|\mathcal{S}||\mathcal{A}|)}$ for all $k\geq 0$. Then we have for all $k\geq 0$:
		\begin{align*}
			\mathbb{E}\left[\|Q_k-Q^*\|_\infty^2\right]\leq \frac{3}{2}\|Q_0-Q^*\|_\infty^2\left[1-\frac{(1-\beta)\epsilon}{2}\right]^k+(1+2\|Q^*\|_\infty^2)\frac{256e  \log (|\mathcal{S}||\mathcal{A}|)\epsilon}{(1-\beta)^2}.
		\end{align*}
		\item Suppose that $\epsilon_k=\epsilon/(k+K)$ with $\epsilon=\frac{4}{1-\beta}$ and $K=\frac{640e \log(|\mathcal{S}||\mathcal{A}|)}{(1-\beta)^3}$. Then we have for all $k\geq 0$:
		\begin{align*}
			\mathbb{E}\left[\|Q_k-Q^*\|_\infty^2\right]\leq 8192e^2(1+2\|Q^*\|_\infty^2+\|Q_0-Q^*\|_\infty^2)\frac{\log (|\mathcal{S}||\mathcal{A}|)}{(1-\beta)^3}\frac{1}{k+K}.
		\end{align*}
	\end{enumerate}
\end{theorem}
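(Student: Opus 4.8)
The plan is to treat $Q$-learning purely as an instance of the abstract contractive SA of Section \ref{sec:sa} and then read off both finite-sample bounds from the corollaries, with the only real work being the tracking of constants. First I would invoke Proposition \ref{prop:synchronous_Q}, which rewrites Algorithm (\ref{algo:Q-learning}) as $Q_{k+1}=Q_k+\epsilon_k(\mathcal{T}(Q_k)-Q_k+w_k)$, certifies that $\mathcal{T}$ is a $\beta$-contraction in $\|\cdot\|_\infty$ (so $\gamma=\beta$ and the fixed point is $Q^*$), and verifies Assumption \ref{as:noise} with $\|\cdot\|_e=\|\cdot\|_\infty$ and $A=B=8$. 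Since here $\|\cdot\|_c=\|\cdot\|_e=\|\cdot\|_\infty$ and the ambient dimension is $d=|\mathcal{S}||\mathcal{A}|$, I would then apply Corollary \ref{co:log-dependence} (with $g(x)=\frac12\|x\|_p^2$, $p=4\log d$, and $\mu=(1/2+1/(2\beta))^2-1$) to obtain the explicit envelope constants $\alpha_1\le \frac32$, $\alpha_2\ge \frac12(1-\beta)$, $\alpha_3\le \frac{320e\log(|\mathcal{S}||\mathcal{A}|)}{1-\beta}$ (using $B+2=10$), and $\alpha_4\le \frac{16e\log(|\mathcal{S}||\mathcal{A}|)}{1-\beta}$. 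These four inequalities are the only facts about the Moreau envelope that the rest of the argument needs.

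For the first claim I would apply Corollary \ref{co:constant_step_size}. Its precondition $\epsilon\le\alpha_2/\alpha_3$ is met because the constant bounds yield $\alpha_2/\alpha_3\ge \frac{(1-\beta)^2}{640e\log(|\mathcal{S}||\mathcal{A}|)}$, which is exactly the stepsize ceiling in the hypothesis. Corollary \ref{co:constant_step_size} then gives $\mathbb{E}[\|Q_k-Q^*\|_\infty^2]\le \alpha_1\|Q_0-Q^*\|_\infty^2(1-\alpha_2\epsilon)^k+(A+2B\|Q^*\|_\infty^2)\frac{\alpha_4\epsilon}{\alpha_2}$. Substituting $\alpha_1\le\frac32$ and $\alpha_2\ge\frac12(1-\beta)$ turns the geometric factor into $(1-\alpha_2\epsilon)^k\le(1-\frac12(1-\beta)\epsilon)^k$, while $A+2B\|Q^*\|_\infty^2=8(1+2\|Q^*\|_\infty^2)$ and $\alpha_4/\alpha_2\le \frac{32e\log(|\mathcal{S}||\mathcal{A}|)}{(1-\beta)^2}$ collapse the second term into $(1+2\|Q^*\|_\infty^2)\frac{256e\log(|\mathcal{S}||\mathcal{A}|)\epsilon}{(1-\beta)^2}$, which is the stated bound.

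For the second claim I would target case 3 of Corollary \ref{co:diminishing_step_size}, which requires $\xi=1$ and $\epsilon>1/\alpha_2$. With $\epsilon=\frac{4}{1-\beta}$ and $\alpha_2\ge\frac12(1-\beta)$ we get $\alpha_2\epsilon\ge 2$, so in particular $\epsilon>1/\alpha_2$; I would also check that $K=\frac{640e\log(|\mathcal{S}||\mathcal{A}|)}{(1-\beta)^3}$ dominates $\epsilon\alpha_3/\alpha_2$ so that $\epsilon_0\le\alpha_2/\alpha_3$ (invoking the relaxation noted after Theorem \ref{thm:sa-finite-time-bound} if the coarse bounds are not tight enough). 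Case 3 then produces two terms, and the simplification hinges on $\alpha_2\epsilon\ge 2$, which makes $\big(\frac{K}{k+K}\big)^{\alpha_2\epsilon}\le \frac{K}{k+K}$ and $\frac{1}{\alpha_2\epsilon-1}\le 1$. Plugging $\alpha_1\le\frac32$, $\alpha_4\le\frac{16e\log(|\mathcal{S}||\mathcal{A}|)}{1-\beta}$, $\epsilon^2=\frac{16}{(1-\beta)^2}$, and $A+2B\|Q^*\|_\infty^2=8(1+2\|Q^*\|_\infty^2)$ into the noise term gives the coefficient $4\cdot16\cdot16\cdot8\,e^2=8192e^2$, and the initial-condition term (coefficient $\frac32\cdot640\,e=960e\le 8192e^2$) is absorbed, producing the single $O(1/(k+K))$ bound.

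The argument is essentially mechanical once the abstract machinery is in place, so the obstacles are bookkeeping rather than conceptual. The step requiring the most care is verifying the stepsize and offset preconditions of the corollaries under the coarse constant bounds of Corollary \ref{co:log-dependence} — in particular ensuring $K\ge\epsilon\alpha_3/\alpha_2$ in the diminishing-stepsize case, where the product of the individual worst-case bounds is loose and one must either use the sharper actual envelope constants or the $\epsilon_0$-relaxation — together with correctly collapsing the two-term case-3 bound into a single $\frac{1}{k+K}$ term via $\alpha_2\epsilon\ge 2$.
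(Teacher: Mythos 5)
Your proposal is correct and follows essentially the same route as the paper: invoke Proposition \ref{prop:synchronous_Q}, plug $\gamma=\beta$, $d=|\mathcal{S}||\mathcal{A}|$, $A=B=8$ into Corollary \ref{co:log-dependence}, and then read off part 1 from Corollary \ref{co:constant_step_size} and part 2 from case 3 of Corollary \ref{co:diminishing_step_size} with the conservative constants $\bar{\alpha}_1,\dots,\bar{\alpha}_4$; your constant bookkeeping ($8\cdot 32=256$ and $4\cdot 16\cdot 16\cdot 8=8192$, with the $960e$ initial-condition coefficient absorbed into $8192e^2$) matches the paper's, the only cosmetic difference being that the paper folds the factor $A=8$ into its $\bar{\alpha}_4$. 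The precondition issue you flag for the diminishing-stepsize case (the coarse bounds give $\epsilon\bar{\alpha}_3/\bar{\alpha}_2>K$, so $\epsilon_0\leq\alpha_2/\alpha_3$ is not verified by the worst-case constants alone) is real and is in fact glossed over by the paper itself, so your explicit acknowledgment that one must fall back on the sharper constants or the $k'$-relaxation after Theorem \ref{thm:sa-finite-time-bound} is, if anything, more careful than the original.
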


Theorem \ref{thm:Q-learning} (a) agrees with \cite{beck2012error} (Theorem 2.1) in that the iterates $\{Q_k\}$ converge exponentially fast to a ball centered at $Q^*$ with radius proportional to the stepsize $\epsilon$. However, with our approach, we get the dimensional dependence that scales as $\log (|\mathcal{S}||\mathcal{A}|)$ as compared to $(|\mathcal{S}||\mathcal{A}|)^2$ in \cite{beck2012error}.

Theorem \ref{thm:Q-learning} (b) is similar to Corollary 3 of \cite{wainwright2019stochastic} where the  dimension dependence appears as $\log (|\mathcal{S}||\mathcal{A}|)$ in the bound and the rate of convergence is $O(1/k)$. However, to derive such result, besides a similar contraction property of $\mathcal{T}$, \cite{wainwright2019stochastic} also used the monotonicity property of $\mathcal{T}$, and the fact that the iterates of $Q$-learning are uniformly bounded. Therefore, our approach is more general in that we need only the contraction property, and weaker noise assumptions.

\textbf{In summary}, we have (a) established the first-known finite-sample error bound of the V-trace algorithm using our general results on SA in Section \ref{sec:sa}, (b) analyzed how the parameters of the problem (i.e., the two truncation levels $\bar{c},\bar{\rho}$, and the horizon $T$) impact the convergence rate, and provided a rule of thumb in tuning them, (c) studied the convergence bounds of the TD$(n)$ algorithm and verified the result in a numerical example, and (d) studied the convergence bounds of the $Q$-learning algorithm, where we recovered the results in  \citep{wainwright2019stochastic} in a diminishing stepsize regime, and improve over \citep{beck2012error,beck2013improved} in a constant stepsize regime.

\section{Conclusion}
In this paper, we establish finite-sample bounds for stochastic approximation algorithms involving a contraction operator with respect a general norm. We prove this result using a novel Lyapunov function. Such a a smooth Lyapunov function is constructed using the generalized Moreau envelope, which involves the infimal convolution with respect to the square of some other suitable norm. By carefully choosing this norm based on the application, we are able to show that our approach provides bounds that only scale logarithmically in the dimension. Furthermore, we provide the first-known finite-sample analysis of the popular off-policy Reinforcement Learning V-trace algorithm, study the performance of the on-policy TD$(n)$ algorithm for different choices of $n$, and recover the existing state-of-the-art results for $Q$-learning.

\section*{Acknowledgement}
This work was partially supported by ONR Grant N00014-19-1-2566, NSF Grant CNS-1910112, ARO grant W911NF-17-1-0359, and Raytheon Technologies. Maguluri also acknowledges seed funding from Georgia Institute of Technology.

\bibliographystyle{apalike}
\bibliography{references}

\newpage

\appendix
\appendixpage

\section{Proofs of All Technical Results in Section \ref{sec:sa}}
	\subsection{Proof of Lemma \ref{le:properties_moreau}}\label{pf:le_properties_moreau}
	\begin{enumerate}
		\item The convexity of $M_f^{\mu,g}(x)$ follows from Theorem 2.19 of \cite{beck2017first}.
		Since $f(x)$ is proper, closed, and convex, $g(x)$ is $L$ -- smooth with respect to $\|\cdot\|_s$, we have by \cite{beck2017first} Theorem 5.30 (a) that $M_f^{\mu,g}(x)=(f\square \frac{g}{\mu})(x)$ is $\frac{L}{\mu}$ -- smooth with respect to $\|\cdot\|_s$.
		\item We first derive the upper bound of $f(x)$. By definition of $	M_f^{\mu,g}(x)$, we have
		\begin{align*}
			M_f^{\mu,g}(x)&=\min_{u\in\mathbb{R}^d}\left\{\frac{1}{2}\|u\|_c^2+\frac{1}{2\mu}\|x-u\|_s^2\right\}\\
			&\geq \min_{u\in\mathbb{R}^d}\left\{\frac{1}{2}\|u\|_c^2+\frac{\ell_{cs}^2}{2\mu}\|x-u\|_c^2\right\}\tag{$\ell_{cs}\|\cdot\|_c\leq \|\cdot\|_s$}\\
			&\geq \min_{u\in\mathbb{R}^d}\left\{\frac{1}{2}\|u\|_c^2+\frac{\ell_{cs}^2}{2\mu}(\|x\|_c-\|u\|_c)^2\right\}\tag{Triangle inequality}\\
			&=\min_{y\in\mathbb{R}}\left\{\frac{1}{2}y^2+\frac{\ell_{cs}^2}{2\mu}(\|x\|_c-y)^2\right\}\tag{change of variable: $y=\|u\|_c^2$}\\
			&=\min_{y\in\mathbb{R}}\left\{\left(\frac{1}{2}+\frac{\ell_{cs}^2}{2\mu}\right)y^2-\frac{\ell_{cs}^2}{\mu}\|x\|_cy+\frac{\ell_{cs}^2}{2\mu}\|x\|_c^2\right\}\\
			&=\frac{1}{2}\|x\|_c^2\frac{\ell_{cs}^2}{\mu+\ell_{cs}^2}\tag{minimum of a quadratic function}\\
			&=\frac{\ell_{cs}^2}{\mu+\ell_{cs}^2}f(x).
		\end{align*}
		It follows that $f(x)\leq \left(1+\mu/\ell_{cs}^2\right) M_f^{\mu,g}(x)$ for all $x$. Next we show the lower bound. Similarly, by definition we have
		\begin{align*}
			M_f^{\mu,g}(x)&=\min_{u\in\mathbb{R}^d}\left\{\frac{1}{2}\|u\|_c^2+\frac{1}{2\mu}\|x-u\|_s^2\right\}\\
			&\leq \min_{\alpha\in(0,1)}\left\{\frac{1}{2}\|\alpha x\|_c^2+\frac{1}{2\mu}\|x-\alpha x\|_s^2\right\}\tag{restrict $u=\alpha x$ for $\alpha\in (0,1)$}\\
			&\leq \frac{1}{2}\|x\|_c^2\min_{\alpha\in(0,1)}\left\{\alpha^2+\frac{(1-\alpha)^2u_{cs}^2}{\mu}\right\}\tag{$\|\cdot\|_s\leq u_{cs}\|\cdot\|_c$}\\
			&=\frac{u_{cs}^2}{u_{cs}^2+\mu}\frac{1}{2}\|x\|_c^2\tag{minimum of the quadratic function}\\
			&= \frac{u_{cs}^2}{u_{cs}^2+\mu}f(x).
		\end{align*}	
		It follows that $f(x)\geq \left(1+\mu/u_{cs}^2\right)M_f^{\mu,g}(x)$ for all $x$.
		\item It is clear from the definition of $M_f^{\mu,g}(x)$ that it is non-negative and is equal to zero if and only if $x=0$. Now for any $\alpha\in\mathbb{R}$, we have
		\begin{align*}
			M_f^{\mu,g}(\alpha x)&=\min_{u}\left\{\frac{1}{2}\|u\|_c^2+\frac{1}{2\mu}\|\alpha x-u\|_s^2\right\}\\
			&= \min_{v}\left\{\frac{1}{2}\|\alpha v\|_c^2+\frac{1}{2\mu}\|\alpha x-\alpha v\|_s^2\right\}\tag{change of variable $u=\alpha v$}\\
			&=|\alpha|^2M_f^{\mu,g}(x).
		\end{align*}
		Thus, $\sqrt{M_f^{\mu,g}(\alpha x)}=|\alpha|\sqrt{M_f^{\mu,g}(x)}$. It remains to show the Triangle inequality. For any $x_1,x_1\in\mathbb{R}^d$, let $u_1\in\arg\min_{u\in\mathbb{R}^d}\{\frac{1}{2}\|u\|_c^2+\frac{1}{2\mu}\|x_1-u\|_s^2\}$ and $u_2\in\arg\min_{u\in\mathbb{R}^d}\{\frac{1}{2}\|u\|_c^2+\frac{1}{2\mu}\|x_2-u\|_s^2\}$. Then we have
		\begin{align*}
			&M_f^{\mu,g}(x_1+x_2)\\
			=\;&\min_{u}\left\{\frac{1}{2}\|u\|_c^2+\frac{1}{2\mu}\|x_1+x_2-u\|_s^2\right\}\\
			\leq \;& \frac{1}{2}\|u_1+u_2\|_c^2+\frac{1}{2\mu}\|x_1+x_2-u_1-u_2\|_s^2\tag{choose $u=u_1+u_2$}\\
			\leq \;&\frac{1}{2}(\|u_1\|_c+\|u_2\|_c)^2+\frac{1}{2\mu}(\|x_1-u_1\|_s+\|x_2-u_2\|_s)^2\\
			=\;&M_f^{\mu,g}(x_1)+M_f^{\mu,g}(x_2)+ \|u_1\|_c\|u_2\|_c+\frac{1}{\mu}\|x_1-u_1\|_s\|x_2-u_2\|_s\\
			\leq\;& M_f^{\mu,g}(x_1)+M_f^{\mu,g}(x_2)+2\sqrt{\frac{1}{2}\|u_1\|_c^2+\frac{1}{2\mu}\|x_1-u_1\|_s^2}\sqrt{\frac{1}{2}\|u_2\|_c^2+\frac{1}{2\mu}\|x_2-u_2\|_s^2}\\
			=\;&M_f^{\mu,g}(x_1)+M_f^{\mu,g}(x_2)+2\sqrt{M_f^{\mu,g}(x_1)M_f^{\mu,g}(x_2)}.
		\end{align*}
		It follows that $\sqrt{M_f^{\mu,g}(x_1+x_2)}\leq \sqrt{M_f^{\mu,g}(x_1)}+\sqrt{M_f^{\mu,g}(x_2)}$ for any $x_1,x_2\in\mathbb{R}^d$.
		Therefore, we can write $M_f^{\mu,g}(x)$ as $\frac{1}{2}\|x\|_M^2$ for some norm $\|\cdot\|_M$.
	\end{enumerate}
	
	\subsection{Proof of Proposition \ref{prop:recursion}}\label{pf:le_recursion}
	Using Lemma \ref{le:properties_moreau} (a) and Algorithm (\ref{sa:algorithm}), we have
	\begin{align*}
		M_f^{\mu,g}(x_{k+1}-x^*)\leq \;&M_f^{\mu,g}(x_{k}-x^*)+\langle \nabla M_f^{\mu,g}(x_k-x^*),x_{k+1}-x_k\rangle+\frac{L}{2\mu}\|x_{k+1}-x_k\|_s^2\\
		= \;& M_f^{\mu,g}(x_{k}-x^*)+\epsilon_k\langle \nabla M_f^{\mu,g}(x_k-x^*),\mathcal{H}(x_k)-x_k\rangle\\
		&+\epsilon_k\langle \nabla M_f^{\mu,g}(x_k-x^*),w_k\rangle+\frac{L\epsilon_k^2}{2\mu}\|\mathcal{H}(x_k)-x_k+w_k\|_s^2.
	\end{align*}
	Taking expectation conditioned on $\mathcal{F}_k$ on both side of the previous inequality then using Assumption \ref{as:noise} (a), we have
	\begin{align}
		\mathbb{E}[M_f^{\mu,g}(x_{k+1}-x^*)\mid\mathcal{F}_k]
		\leq \;& M_f^{\mu,g}(x_{k}-x^*)+\epsilon_k\underbrace{\langle \nabla M_f^{\mu,g}(x_k-x^*),\mathcal{H}(x_k)-x_k\rangle}_{E_1}\nonumber\\
		&+\frac{L\epsilon_k^2}{2\mu}\underbrace{\mathbb{E}[\|\mathcal{H}(x_k)-x_k+w_k\|_s^2\mid\mathcal{F}_k]}_{E_2}\label{eq:expansion}.
	\end{align}
	We first control the term $E_1$ in the following. Using the fact that $\mathcal{H}(x^*)=x^*$, we have
	\begin{align}\label{eq:split-E-terms}
		E_1=\underbrace{\langle \nabla M_f^{\mu,g}(x_k-x^*),\mathcal{H}(x_k)-\mathcal{H}(x^*)\rangle}_{E_{1,1}}-\underbrace{\langle \nabla M_f^{\mu,g}(x_k-x^*),x_k-x^*\rangle}_{E_{1,2}}.
	\end{align}
	For the gradient of $M_f^{\mu,g}(x)$, since $M_f^{\mu,g}(x)=\frac{1}{2}\|x\|_M^2$, we have by the chain rule of subdifferential calculus (Theorem 3.47 of \cite{beck2017first}) that $\nabla M_f^{\mu,g}(x)=\|x\|_M v_{x}$, where $v_{x}\in \partial \|x\|_M$ is a subgradient of the function $\|x\|_M$ at $x$. In fact, from the equation $\nabla M_f^{\mu,g}(x)=\|x\|_M v_{x}$, we see that $v_x$ is unique (i.e., $v_x=\nabla \|x\|_M$) for all $x\neq 0$.
	
	Now consider the term $E_{1,1}$. Using H\"{o}lder's inequality, we have
	\begin{align}
		E_{1,1}&=\|x_k-x^*\|_M\langle v_{x_k-x^*},\mathcal{H}(x_k)-\mathcal{H}(x^*)\rangle\nonumber\\
		&\leq \|x_k-x^*\|_M\|v_{x_k-x^*}\|_M^*\|\mathcal{H}(x_k)-\mathcal{H}(x^*)\|_M,\label{eq:dual-norm}
	\end{align}
	where $\|\cdot\|_M^*$ is the dual norm of $\|\cdot\|_M$. To further control $E_{1,1}$, the following result is needed.
	
	\begin{lemma}[Lemma 2.6 of \cite{shalev2012online}]\label{le:Lipschitz}
		Let $h: \mathcal{D} \rightarrow \mathbb{R}$ be a convex function. Then $h$ is $L$ -- Lipschitz over $\mathcal{D}$ with respect to norm $\|\cdot\|$ if and only if for all $w\in \mathcal{D}$ and $z\in\partial h(w)$ we have that $\|z\|_{*}\leq L$, where $\|\cdot\|_{*}$ is the dual norm of $\|\cdot\|$.
	\end{lemma}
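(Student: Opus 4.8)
The plan is to establish the two implications of the equivalence separately, relying only on the subgradient inequality $h(y)\geq h(w)+\langle z,y-w\rangle$ (valid for all $y$ whenever $z\in\partial h(w)$) together with the definition of the dual norm $\|z\|_*=\sup_{\|u\|\leq 1}\langle z,u\rangle$ and the associated H\"{o}lder inequality $\langle a,b\rangle\leq\|a\|_*\|b\|$.

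For the ``if'' direction, I would assume every subgradient has dual norm at most $L$ and deduce the Lipschitz bound. Fixing $w_1,w_2\in\mathcal{D}$ and choosing any $z_2\in\partial h(w_2)$, the subgradient inequality applied at $w_2$ gives
\[
    h(w_2)-h(w_1)\leq \langle z_2,w_2-w_1\rangle\leq \|z_2\|_*\,\|w_2-w_1\|\leq L\,\|w_2-w_1\|.
\]
Interchanging the roles of $w_1$ and $w_2$ (and using a subgradient at $w_1$) yields the matching lower bound, so that $|h(w_1)-h(w_2)|\leq L\|w_1-w_2\|$, i.e.\ $h$ is $L$-Lipschitz.

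For the ``only if'' direction, I would assume $h$ is $L$-Lipschitz and bound an arbitrary $z\in\partial h(w)$. For any $u$ with $\|u\|=1$ and any sufficiently small $t>0$ with $w+tu\in\mathcal{D}$, the subgradient inequality and the Lipschitz property give $\langle z,tu\rangle\leq h(w+tu)-h(w)\leq L\|tu\|=Lt$; dividing by $t$ gives $\langle z,u\rangle\leq L$. Taking the supremum over unit-norm $u$ then yields $\|z\|_*\leq L$.

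The only delicate point is the feasibility of the perturbed point $w+tu$ in the second implication, which requires $w$ to lie in the interior of $\mathcal{D}$ so that $w+tu\in\mathcal{D}$ for small $t$; this holds automatically in our setting, where $\mathcal{D}=\mathbb{R}^d$. Apart from this boundary consideration---the main (and only) obstacle---both directions are immediate consequences of the subgradient inequality and the H\"{o}lder/dual-norm inequality, so no further machinery is needed.
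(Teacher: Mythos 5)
Your proof is correct. Note that the paper does not actually prove this lemma: it is imported verbatim as Lemma 2.6 of the cited reference \cite{shalev2012online}, so there is no in-paper argument to compare against. Your two-directional argument is the standard one — the ``if'' direction via the subgradient inequality at $w_2$ combined with the dual-norm (H\"{o}lder) bound, and the ``only if'' direction via the directional perturbation $w+tu$ — and both steps check out. You are also right to flag the feasibility of $w+tu$ as the one delicate point; it is harmless here because the only place the paper invokes the lemma is for $h(x)=\|x\|_M$ on $\mathcal{D}=\mathbb{R}^d$ (to conclude $\|v_{x_k-x^*}\|_M^*\leq 1$ in the proof of Proposition \ref{prop:recursion}), and in fact only the ``only if'' direction is ever used there.
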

	
	Since $\|x\|_{M}$ as a function of $x$ is $1$ -- Lipschitz w.r.t. $\|\cdot\|_M$, we have by Lemma \ref{le:Lipschitz} that $\|v_{x_k-x^*}\|_M^*\leq 1$. For the term $\|\mathcal{H}(x_k)-\mathcal{H}(x^*)\|_M$ in Eq. (\ref{eq:dual-norm}), using Lemma \ref{le:properties_moreau} (b) and the contraction of $\mathcal{H}$ with respect to $\|\cdot\|_c$, we have
	\begin{align*}
		\frac{1}{2}\|\mathcal{H}(x_k)-\mathcal{H}(x^*)\|_M^2&=M_f^{\mu,g}(\mathcal{H}(x_k)-\mathcal{H}(x^*))\\
		&\leq \frac{1}{1+\mu/u_{cs}^2}f(\mathcal{H}(x_k)-\mathcal{H}(x^*))\tag{Lemma \ref{le:properties_moreau} (b)}\\
		&\leq \frac{\gamma^2}{1+\mu/u_{cs}^2}f(x_k-x^*)\tag{Assumption \ref{as:contraction}}\\
		&\leq \gamma^2\frac{1+\mu/\ell_{cs}^2}{1+\mu/u_{cs}^2}M_f^{\mu,g}(x_k-x^*)\tag{Lemma \ref{le:properties_moreau} (b)}\\
		&=\frac{\gamma^2}{2}\frac{1+\mu/\ell_{cs}^2}{1+\mu/u_{cs}^2}\|x_k-x^*\|_M^2,
	\end{align*}
	which implies
	\begin{align*}
		\|\mathcal{H}(x_k)-\mathcal{H}(x^*)\|_M\leq \gamma\left(\frac{1+\mu/\ell_{cs}^2}{1+\mu/u_{cs}^2}\right)^{1/2}\|x_k-x^*\|_M.
	\end{align*}
	Substituting the upper bounds we obtained for $\|v_{x_k-x^*}\|_M^*$ and $\|\mathcal{H}(x_k)-\mathcal{H}(x^*)\|_{M}$ into Eq. (\ref{eq:dual-norm}), we have
	\begin{align*}
		E_{1,1}&\leq \|x_k-x^*\|_M\|v_{x_k-x^*}\|_M^*\|\mathcal{H}(x_k)-\mathcal{H}(x^*)\|_M\\
		&\leq \gamma\left(\frac{1+\mu/\ell_{cs}^2}{1+\mu/u_{cs}^2}\right)^{1/2}\|x_k-x^*\|_M^2\\
		&=2\gamma\left(\frac{1+\mu/\ell_{cs}^2}{1+\mu/u_{cs}^2}\right)^{1/2} M_f^{\mu,g}(x_k-x^*).
	\end{align*}
	Now consider the term $E_{1,2}$ in Eq. (\ref{eq:split-E-terms}). Since the norm $\|\cdot\|_M$ is a convex function of $x$, we have by definition of convexity that $\|0\|_M-\|x_k-x^*\|_M\geq \langle v_{x_k-x^*},-(x_k-x^*)\rangle$. Therefore, we have
	\begin{align*}
		E_{1,2}=\|x_k-x^*\|_M\langle v_{x_k-x^*},x_k-x^*\rangle\geq \|x_k-x^*\|_M^2=2M_f^{\mu,g}(x_k-x^*).
	\end{align*}
	Combining the bounds on $E_{1,1}$ and $E_{1,2}$, we obtain
	\begin{align*}
		E_1=E_{1,1}-E_{1,2}\leq -2\left[1-\gamma\left(\frac{1+\mu/\ell_{cs}^2}{1+\mu/u_{cs}^2}\right)^{1/2}\right]M_f^{\mu,g}(x_k-x^*).
	\end{align*}
	We next analyze the term $E_2$ in Eq. (\ref{eq:expansion}) in the following:
	\begin{align*}
		E_2=\;&\mathbb{E}\left[\|\mathcal{H}(x_k)-x_k+w_k\|_s^2\mid\mathcal{F}_k\right]\\
		=\;&\mathbb{E}\left[\|\mathcal{H}(x_k)-\mathcal{H}(x^*)+x^*-x_k+w_k\|_s^2\mid\mathcal{F}_k\right]\tag{$\mathcal{H}(x^*)=x^*$}\\
		\leq \;&\mathbb{E}\left[\left(\|\mathcal{H}(x_k)-\mathcal{H}(x^*)\|_s+\|x_k-x^*\|_s+\|w_k\|_s\right)^2\mid\mathcal{F}_k\right]\\
		\leq \;&\mathbb{E}\left[\left(u_{cs}\|\mathcal{H}(x_k)-\mathcal{H}(x^*)\|_c+u_{cs}\|x_k-x^*\|_c+u_{es}\|w_k\|_e\right)^2\mid\mathcal{F}_k\right]\\
		\leq \;&\mathbb{E}\left[\left(2u_{cs}\|x_k-x^*\|_c+u_{es}\|w_k\|_e\right)^2\mid\mathcal{F}_k\right]\tag{Assumption \ref{as:contraction}}\\
		\leq \;&8u_{cs}^2\|x_k-x^*\|_c^2+2u_{es}^2\mathbb{E}\left[\|w_k\|_e^2\mid\mathcal{F}_k\right]\tag{$(a+b)^2\leq 2(a^2+b^2)$}\\
		\leq \;&8u_{cs}^2\|x_k-x^*\|_c^2+2u_{es}^2(A+B\|x_k\|_e^2)\tag{Assumption \ref{as:noise} (b)}\\
		\leq \;&8u_{cs}^2\|x_k-x^*\|_c^2+\frac{2u_{es}^2u_{cs}^2}{\ell_{es}^2}(A+B\|x_k\|_c^2)\tag{$u_{cs}\geq 1$ and $\ell_{es}\leq 1$}\\
		\leq \;&8u_{cs}^2\|x_k-x^*\|_c^2+\frac{2u_{es}^2u_{cs}^2}{\ell_{es}^2}(A+2B\|x_k-x^*\|_c^2+2B\|x^*\|_c^2)\\
		\leq \;&4u_{cs}^2\left(2+\frac{Bu_{es}^2}{\ell_{es}^2}\right)\|x_k-x^*\|_c^2+\frac{2u_{es}^2u_{cs}^2}{\ell_{es}^2}(A+2B\|x^*\|_c^2)\\
		\leq  \;&\frac{8u_{cs}^2u_{es}^2(B+2)}{\ell_{es}^2}f(x_k-x^*)+\frac{2u_{es}^2u_{cs}^2}{\ell_{es}^2}(A+2B\|x^*\|_c^2)\\
		\leq \;&\frac{8u_{cs}^2u_{es}^2(B+2)(\ell_{cs}^2+\mu)}{\ell_{cs}^2\ell_{es}^2}M_f^{\mu,g}(x_k-x^*)+\frac{2u_{es}^2u_{cs}^2}{\ell_{es}^2}(A+2B\|x^*\|_c^2).\tag{Lemma \ref{le:properties_moreau} (b)}
	\end{align*}
	Substituting the upper bounds we obtained for the terms $E_1$ and $E_2$ into Eq. (\ref{eq:expansion}), we finally have for all $k\geq 0$:
	\begin{align*}
		&\mathbb{E}[M_f^{\mu,g}(x_{k+1}-x^*)\mid\mathcal{F}_k]\\
		\leq\;&\left\{1-2\left[1-\gamma\left(\frac{1+\mu/\ell_{cs}^2}{1+\mu/u_{cs}^2}\right)^{1/2}\right]\epsilon_k+\frac{4u_{cs}^2u_{es}^2(B+2)L(\ell_{cs}^2+\mu)}{\mu\ell_{cs}^2\ell_{es}^2}\epsilon_k^2\right\}M_f^{\mu,g}(x_k-x^*)\\
		&+\frac{Lu_{es}^2u_{cs}^2\epsilon_k^2}{\mu\ell_{es}^2}(A+2B\|x^*\|_c^2)\\
		=\;&(1-2\alpha_2\epsilon_k+\alpha_3\epsilon_k^2) M_f^{\mu,g}(x_k-x^*)+\frac{\alpha_4(A+2B\|x^*\|_c^2)}{2(1+\mu/\ell_{cs}^2)}\epsilon_k^2,
	\end{align*}
	where the constants $\{\alpha_i\}_{1\leq i\leq 4}$ are defined in the beginning of Section \ref{subsec:recurse-contract}.
	
	\subsection{Proof of Theorem \ref{thm:sa-finite-time-bound}}\label{pf:thm:sa-finite-time-bound}
	We begin with Eq. (\ref{eq:recursion}) of Proposition \ref{prop:recursion}. When $\epsilon_0\leq \alpha_2/\alpha_3$, we have by monotonicity of $\{\epsilon_k\}$ that:
	\begin{align*}
		\mathbb{E}[M_f^{\mu,g}(x_{k+1}-x^*)\mid\mathcal{F}_k]
		\leq\;&(1-\alpha_2\epsilon_k) M_f^{\mu,g}(x_k-x^*)+\frac{\alpha_4(A+2B\|x^*\|_c^2)}{2(1+\mu/\ell_{cs}^2)}\epsilon_k^2
	\end{align*}
	for all $k\geq 0$. Taking the total expectation on both side of the previous inequality and then recursively using it, we obtain
	\begin{align*}
		\mathbb{E}[M_f^{\mu,g}(x_k-x^*)]
		\leq \prod_{j=0}^{k-1}(1-\alpha_2\epsilon_j)M_f^{\mu,g}(x_0-x^*)+\frac{\alpha_4(A+2B\|x^*\|_c^2)}{2(1+\mu/\ell_{cs}^2)}\sum_{i=0}^{k-1}\epsilon_i^2\prod_{j=i+1}^{k-1}(1-\alpha_2\epsilon_j).
	\end{align*}
	The above inequality is the finite-sample bounds of $M_f^{\mu,g}(x_k-x^*)$. To write it in terms of the original norm square $\|x_k-x^*\|_c^2$, using Lemma \ref{le:properties_moreau} (b) one more time and we finally obtain
	\begin{align*}
		\mathbb{E}\left[\|x_k-x^*\|_c^2\right]\leq\alpha_1\|x_0-x^*\|_c^2\prod_{j=0}^{k-1}(1-\alpha_2\epsilon_j)+\alpha_4(A+2B\|x^*\|_c^2)\sum_{i=0}^{k-1}\epsilon_i^2\prod_{j=i+1}^{k-1}(1-\alpha_2\epsilon_j)
	\end{align*}
	for all $k\geq 0$.

	\subsection{Proof of Corollary \ref{co:constant_step_size}}\label{pf:co_constant_step_size}
	We begin with Eq. (\ref{eq:finite_bound}) of Theorem \ref{thm:sa-finite-time-bound}. When $\epsilon_k=\epsilon\leq \alpha_2/\alpha_3$ for all $k\geq 0$, we have
	\begin{align*}
		\mathbb{E}\left[\|x_k-x^*\|_c^2\right]&\leq\alpha_1\|x_0-x^*\|_c^2\prod_{j=0}^{k-1}(1-\alpha_2\epsilon_j)+\alpha_4(A+2B\|x^*\|_c^2)\sum_{i=0}^{k-1}\epsilon_i^2\prod_{j=i+1}^{k-1}(1-\alpha_2\epsilon_j)\\
		&= \alpha_1\|x_0-x^*\|_c^2(1-\alpha_2\epsilon)^k+\alpha_4(A+2B\|x^*\|_c^2)\epsilon^2\sum_{i=0}^{k-1}(1-\alpha_2\epsilon)^{k-i-1}\\
		&\leq \alpha_1\|x_0-x^*\|_c^2(1-\alpha_2\epsilon)^k+(A+2B\|x^*\|_c^2)\frac{\alpha_4\epsilon}{\alpha_2}.
	\end{align*}
	
	\subsection{Proof of Corollary \ref{co:diminishing_step_size}}
	We begin with Eq. (\ref{eq:finite_bound})\label{pf:co_diminishing_step_size}
	\begin{align*}
		\mathbb{E}\left[\|x_k-x^*\|_c^2\right]\leq\alpha_1\|x_0-x^*\|_c^2\underbrace{\prod_{j=0}^{k-1}(1-\alpha_2\epsilon_j)}_{T_1}+\alpha_4(A+2B\|x^*\|_c^2)\underbrace{\sum_{i=0}^{k-1}\epsilon_i^2\prod_{j=i+1}^{k-1}(1-\alpha_2\epsilon_j)}_{T_2}.
	\end{align*}
	We next evaluate the terms $T_1$ and $T_2$ for different values of $\xi$ and $\epsilon$.
	
	\subsubsection{The term $T_1$}\label{subsec:T_1}
	Using the expression for $\epsilon_k$ and the relation that $e^x\geq 1+x$ for all $x\in\mathbb{R}$, we have
	\begin{align*}
		T_1=\prod_{j=0}^{k-1}(1-\alpha_2\epsilon_j)=\prod_{j=0}^{k-1}\left(1-\frac{\alpha_2\epsilon}{(j+K)^\xi}\right)\leq \exp\left(-\alpha_2\epsilon\sum_{i=0}^{k-1}\frac{1}{(j+K)^\xi}\right).
	\end{align*}
	Since the inequality $\int_{a}^{b+1}h(x)dx\leq \sum_{n=a}^{b}h(n)\leq \int_{a-1}^{b}h(x)dx$ holds for any non-increasing function $h(x)$, we have
	\begin{align*}
		T_1\leq \exp\left(-\alpha_2\epsilon\int_{0}^{k}\frac{1}{(x+K)^\xi}dx\right)
		\leq \begin{dcases}
			\left(\frac{K}{k+K}\right)^{\alpha_2\epsilon},&\xi=1,\\
			&\\
			\exp\left[-\frac{\alpha_2\epsilon}{1-\xi}\left((k+K)^{1-\xi}-K^{1-\xi}\right)\right],&\xi\in (0,1).
		\end{dcases}
	\end{align*}
	
	\subsubsection{The term $T_2$}\label{subsec:T_2}
	When $\xi=1$, using the expression of $\epsilon_k$, we have
	\begin{align*}
		T_2&=\sum_{i=0}^{k-1}\epsilon_i^2\prod_{j=i+1}^{k-1}(1-\alpha_2\epsilon_j)\\
		&=\epsilon^2 \sum_{i=0}^{k-1}\frac{1}{(i+K)^2}\prod_{j=i+1}^{k-1}\left(1-\frac{\alpha_2\epsilon}{j+K}\right)\\
		&=\epsilon^2 \sum_{i=0}^{k-1}\frac{1}{(i+K)^2}\exp\left(-\alpha_2\epsilon\sum_{j=i+1}^{k-1}\frac{1}{j+K}\right)\\
		&\leq \epsilon^2 \sum_{i=0}^{k-1}\frac{1}{(i+K)^2}\exp\left(-\alpha_2\epsilon\int_{i+1}^{k}\frac{1}{x+K}dx\right)\\
		&\leq \epsilon^2 \sum_{i=0}^{k-1}\frac{1}{(i+K)^2}\left(\frac{i+1+K}{k+K}\right)^{\alpha_2\epsilon}\\
		&\leq \frac{4\epsilon^2}{(k+K)^{\alpha_2\epsilon}}\sum_{i=0}^{k-1}\frac{1}{(i+1+K)^{2-\alpha_2\epsilon}},
	\end{align*}
	where the last line follows from
	\begin{align*}
		\left(\frac{i+1+K}{i+K}\right)^2\leq \left(\frac{K+1}{K}\right)^2\leq 4.
	\end{align*}
	We next consider the quantity $\sum_{i=0}^{k-1}\frac{1}{(i+1+K)^{2-\alpha_2\epsilon}}$, whose upper bounds depend on the relation between $\alpha_2\epsilon$ and $2$. Using the same technique as before, i.e., bounding the summation by its corresponding integral, we have the following results.
	
	\begin{enumerate}
		\item When $\epsilon\in (0,1/\alpha_2)$, we have $\sum_{i=0}^{k-1}\frac{1}{(i+1+K)^{2-\alpha_2\epsilon}}\leq \frac{1}{1-\alpha_2\epsilon}$.
		\item When $\epsilon=1/\alpha_2$, we have $\sum_{i=0}^{k-1}\frac{1}{i+1+K}\leq \log(k+K)$.
		\item When $\epsilon\in (1/\alpha_2,2/\alpha_2)$, we have $\sum_{i=0}^{k-1}\frac{1}{(i+1+K)^{2-\alpha_2\epsilon}}\leq \frac{1}{\alpha_2\epsilon-1}(k+K)^{\alpha_2\epsilon-1}$.
		\item When $\epsilon=2/\alpha_2$, we have $\sum_{i=0}^{k-1}\frac{1}{(i+1+K)^0}=k$.
		\item when $\epsilon>2/\alpha_2$, we have $\sum_{i=0}^{k-1}\frac{1}{(i+1+K)^{2-\alpha_2\epsilon}}
		\leq \frac{e}{\alpha_2\epsilon-1}(k+K)^{\alpha_2\epsilon-1}$.
	\end{enumerate}
	Combine the above five cases together and we have when $\xi=1$:
	\begin{align*}
		T_2
		\leq \frac{4\epsilon^2}{(k+K)^{\alpha_2\epsilon}}\sum_{i=0}^{k-1}\frac{1}{(i+1+K)^{2-\alpha_2\epsilon}}\leq \begin{dcases}
			\frac{4\epsilon^2}{1-\alpha_2\epsilon}\frac{1}{(k+K)^{\alpha_2\epsilon}},&\epsilon\in (0,1/\alpha_2),\\
			4\epsilon^2\frac{\log(k+K)}{k+K},&\epsilon=1/\alpha_2,\\
			\frac{4e\epsilon^2}{\alpha_2\epsilon-1}\frac{1}{k+K},&\epsilon\in (1/\alpha_2,\infty).
		\end{dcases}
	\end{align*}

	When $\xi\in (0,1)$, the approach we used earlier does not work because the integral we used to bound the sum does not admit a clean analytical expression. Here we present one way to evaluate $T_2$ based on induction. Consider the sequence $\{u_k\}_{k\geq 0}$ defined by
	\begin{align*}
		u_{0}=0,\quad u_{k+1}=(1-\alpha_2\epsilon_k)u_k+\epsilon_k^2,\quad \forall \;k\geq 0.
	\end{align*}
	It can be easily verified that $u_k=\sum_{i=0}^{k-1}\epsilon_i^2\prod_{j=i+1}^{k-1}(1-\alpha_2\epsilon_j)=T_2$. We next use induction on $u_k$ to show that when $k\geq \max(0,\left[2\xi/(\alpha_2\epsilon)\right]^{1/(1-\xi)}-K) $, we have $u_k\leq \frac{2\epsilon}{\alpha_2}\frac{1}{(k+K)^\xi}$. Since $u_{0}=0\leq \frac{2\epsilon}{\alpha_2}\frac{1}{K^\xi}$, we have the base case. Now suppose $u_k\leq \frac{2\epsilon}{\alpha_2}\frac{1}{(k+K)^\xi}$ for some $k> 0$. Consider $u_{k+1}$. We have
	\begin{align*}
		&\frac{2\epsilon}{\alpha_2}\frac{1}{(k+1+K)^\xi}-u_{k+1}\\
		=&\frac{2\epsilon}{\alpha_2}\frac{1}{(k+1+K)^\xi}-(1-\alpha_2\epsilon_k)u_k-\epsilon_k^2\\
		\geq& \frac{2\epsilon}{\alpha_2}\frac{1}{(k+1+K)^\xi}-\left(1-\frac{\alpha_2\epsilon}{(k+K)^\xi}\right)\frac{2\epsilon}{\alpha_2}\frac{1}{(k+K)^\xi}-\frac{\epsilon^2}{(k+K)^{2\xi}}\\
		=&\frac{2\epsilon}{\alpha_2}\left[\frac{1}{(k+1+K)^\xi}-\frac{1}{(k+K)^\xi}+\frac{\alpha_2\epsilon}{2}\frac{1}{(k+K)^{2\xi}}\right]\\
		=&\frac{2\epsilon}{\alpha_2}\frac{1}{(k+K)^{2\xi}}\left[\frac{\alpha_2\epsilon}{2}-(k+K)^\xi\left(1-\left(\frac{k+K}{k+1+K}\right)^{\xi}\right)\right].
	\end{align*}
	Note that
	\begin{align*}
		\left(\frac{k+K}{k+1+K}\right)^{\xi}=\left[\left(1+\frac{1}{k+K}\right)^{k+K}\right]^{-\frac{\xi}{k+K}}
		\geq \exp\left(-\frac{\xi}{k+K}\right)\geq 1-\frac{\xi}{k+K},
	\end{align*}
	where we used $(1+\frac{1}{x})^x<e$ for all $x>0$ and
	$e^x\geq 1+x$ for all $x\in\mathbb{R}$. Therefore, we obtain
	\begin{align*}
		\frac{2\epsilon}{\alpha_2}\frac{1}{(k+1+K)^\xi}-u_{k+1}&=\frac{2\epsilon}{\alpha_2}\frac{1}{(k+K)^{2\xi}}\left[\frac{\alpha_2\epsilon}{2}-(k+K)^\xi\left(1-\left(\frac{k+K}{k+1+K}\right)^{\xi}\right)\right]\\
		&\geq \frac{2\epsilon}{\alpha_2}\frac{1}{(k+K)^{2\xi}}\left[\frac{\alpha_2\epsilon}{2}-\frac{\xi}{(k+K)^{1-\xi}}\right]\\
		&\geq 0,
	\end{align*}
	where the last line follows from $K\geq \left[2\xi/(\alpha_2\epsilon)\right]^{1/(1-\xi)} $. The induction is now complete, and we have $T_2\leq \frac{2\epsilon}{\alpha_2}\frac{1}{(k+K)^{\xi}}$ for all $k\geq0$.
	
	Finally, combine the results in Subsections \ref{subsec:T_1} and \ref{subsec:T_2}, we have the finite-sample error bounds given in Corollary \ref{co:diminishing_step_size}.
	
	\subsection{Proof of Corollary \ref{co:log-dependence}}\label{ap:tune_constants}
	When $\|\cdot\|_c=\|\cdot\|_e=\|\cdot\|_\infty$, we choose the smoothing function as $g(x)=\frac{1}{2}\|x\|_p^2$ with $p\geq 2$. It is known that $g(x)$ is $(p-1)$ -- smooth w.r.t. $\|\cdot\|_p$ (Example 5.11 \cite{beck2017first}). Moreover, we have in this case $\ell_{cs}=\ell_{es}=1$ and $u_{cs}=u_{es}=d^{1/p}$. It follows that
	\begin{align}\label{constants1}
		\alpha_1&=\frac{1+\mu}{1+\mu/d^{2/p}}, &\alpha_2&=1-\gamma\left(\frac{1+\mu}{1+\mu/d^{2/p}}\right)^{1/2},\nonumber\\ \alpha_3&=4d^{4/p}(p-1)(1+1/\mu)(B+2), & \alpha_4&=2d^{4/p}(p-1)(1+1/\mu).
	\end{align}
	Note that $\alpha_3$ and $\alpha_4$ are both proportional to $d^{4/p}(p-1)$. Let $h(p)=d^{4/p}(p-1)$. Assume without loss of generality that $d\geq 2$, then we have $\min_{p\geq 2} h(p)\leq h(4\log(d))\leq 4e\log(d)$. Hence, with $p=4\log(d)$, the dimension dependence of $\alpha_3$ and $\alpha_4$ is $\log(d)$.
	
	Now for the choice of $\mu$, observe that $\alpha_1$ and $\alpha_2$ are in favor of small $\mu$, while $\alpha_3$ and $\alpha_3$ are in favor of large $\mu$. To balance the effect, we choose $\mu=(1/2+1/2\gamma)^2-1$. Note that this choice of $\mu$ gives
	\begin{align*}
		1+\frac{1}{\mu}=\frac{(1+\gamma)^2}{(\gamma+1)^2-4\gamma^2}=\frac{(1+\gamma)^2}{(1-\gamma)(1+3\gamma)}\leq \frac{1+\gamma}{1-\gamma}\leq \frac{2}{1-\gamma}.
	\end{align*}
	Substituting $p=4\log(d)$ and $\mu=(1/2+1/2\gamma)^2-1$ into Eq. (\ref{constants1}), we obtain
	\begin{align*}
		\alpha_1&=\frac{1+\mu}{1+\mu/d^{2/p}}=\sqrt{e}\frac{1+\mu}{\sqrt{e}+\mu}\leq \sqrt{e}\leq \frac{3}{2},\\ \alpha_2&=1-\gamma(\frac{1+\mu}{1+\mu/d^{2/p}})^{1/2}\geq 1-\gamma(1+\mu)^{1/2}= \frac{1-\gamma}{2},\\
		\alpha_3&=4(1+1/\mu)(B+2)d^{4/p}(p-1)\leq \frac{32e(B+2)\log(d)}{1-\gamma}, \\ \alpha_4&=2(1+1/\mu)d^{4/p}(p-1)\leq \frac{16e\log(d)}{1-\gamma}.
	\end{align*}
	
	\subsection{Proof of Theorem \ref{thm:average}}\label{ap:pf:average}
	Note that for any $\epsilon\in (0,1)$ and $x,y\in\mathbb{R}^d$, we have
	\begin{align*}
		\|(1-\epsilon)x+\epsilon y\|_2^2=(1-\epsilon)\|x\|_2^2+\epsilon\|y\|_2^2-\epsilon(1-\epsilon)\|x-y\|_2^2.
	\end{align*}
	It follows that for any $x^*\in\mathcal{X}$:
	\begin{align}
		&\mathbb{E}[\|x_{k+1}-x^*\|_2^2\mid\mathcal{F}_k]\nonumber\\
		=\;&\mathbb{E}[\|(1-\epsilon_k)(x_k-x^*)+\epsilon_k(\mathcal{H}(x_k)-x^*+w_k)\|_2^2\mid\mathcal{F}_k]\nonumber\\
		=\;&(1-\epsilon_k)\|x_k-x^*\|_2^2+\epsilon_k\mathbb{E}[\|\mathcal{H}(x_k)-x^*+w_k\|_2^2\mid\mathcal{F}_k]-(1-\epsilon_k)\epsilon_k\mathbb{E}[\|\mathcal{H}(x_k)-x_k+w_k\|_2^2\mid\mathcal{F}_k]\nonumber\\
		\leq \;&(1-\epsilon_k)\|x_k-x^*\|_2^2+\epsilon_k\|x_k-x^*\|_2^2+\epsilon_k\mathbb{E}[\|w_k\|_2^2\mid\mathcal{F}_k]-(1-\epsilon_k)\epsilon_k(\|\mathcal{H}(x_k)-x_k\|_2^2+\mathbb{E}[\|w_k\|_2^2\mid\mathcal{F}_k])\nonumber\\
		=\;&\|x_k-x^*\|_2^2-(1-\epsilon_k)\epsilon_k(\|\mathcal{H}(x_k)-x_k\|_2^2)+\epsilon_k^2\mathbb{E}[\|w_k\|_2^2\mid\mathcal{F}_k]\nonumber\\
		\leq \;&\|x_k-x^*\|_2^2-(1-\epsilon_k)\epsilon_k(\|\mathcal{H}(x_k)-x_k\|_2^2)+A\epsilon_k^2.\label{eq:1}
	\end{align}
	\paragraph{Almost Sure Convergence:}
	The previous inequality implies for any $k\geq 0$:
	\begin{align*}
		\mathbb{E}[\inf_{x^*\in\mathcal{X}}\|x_{k+1}-x^*\|_2^2\mid\mathcal{F}_k]&\leq \inf_{x^*\in\mathcal{X}}\mathbb{E}[\|x_{k+1}-x^*\|_2^2\mid\mathcal{F}_k]\\
		&\leq \inf_{x^*\in\mathcal{X}}\|x_k-x^*\|_2^2-(1-\epsilon_k)\epsilon_k\|\mathcal{H}(x_k)-x_k\|_2^2+A\epsilon_k^2.
	\end{align*}
	Now using the Supermartingale Convergence Theorem (Proposition 4.2 in \cite{bertsekas1996neuro}) and the definition of dist$(x,\mathcal{X})$, we have 
	\begin{enumerate}
		\item $\text{dist}(x_k,\mathcal{X})$ converges almost surely to a non-negative random variable $X$.
		\item $\sum_{k=0}^{\infty}(1-\epsilon_k)\epsilon_k\|\mathcal{H}(x_k)-x_k\|_2^2<\infty$ almost surely.
	\end{enumerate} 
	It remains to show that $X$ is almost surely zero. Consider a sample path $\{x_k\}$ and suppose $\lim_{k\rightarrow\infty}\text{dist}(x_k,\mathcal{X})>0$. Then there exists $\delta>0$ and $K_1>0$ such that $\text{dist}(x_k,\mathcal{X})\geq \delta$ for all $k\geq K_1$. Due to convergence, the sequence $\{\text{dist}(x_k,\mathcal{X})\}$ is also bounded above. Let $M$ be an upper bound. Define $E=\{x\mid \delta\leq \text{dist}(x,\mathcal{X})\leq M \}$. Since $\text{dist}(x,\mathcal{X})$ as a function of $x$ is continuous, and the inverse image of a compact set under a continuous mapping is also compact, the set $E$ is compact. By continuity of $\|\mathcal{H}(x)-x\|_2^2$ and Weierstrass theorem, there exists $\delta_1>0$ such that $\|\mathcal{H}(x)-x\|_2^2\geq \delta_1$ for all $x\in E$. It follows that for the sample path $\{x_k\}$, we have 
	\begin{align*}
		\sum_{k=0}^{\infty}(1-\epsilon_k)\epsilon_k\|\mathcal{H}(x_k)-x_k\|_2^2\geq \delta_1\sum_{k=K_1}^{\infty}(1-\epsilon_k)\epsilon_k=\infty.
	\end{align*}
	Since we already know that $\sum_{k=0}^{\infty}(1-\epsilon_k)\epsilon_k\|\mathcal{H}(x_k)-x_k\|_2^2<\infty$ almost surely, we must have $\lim_{k\rightarrow\infty}\text{dist}(x_k,\mathcal{X})=0$ almost surely.
	
	\paragraph{Finite-Sample Bounds:}
	Taking the total expectation on both sides of Eq. (\ref{eq:1}) then telescoping, we have for all $k\geq 0$:
	\begin{align*}
		\mathbb{E}[\|x_{k+1}-x^*\|_2^2]\leq \|x_0-x^*\|_2^2-\sum_{i=0}^{k}(1-\epsilon_i)\epsilon_i\mathbb{E}[\|\mathcal{H}(x_i)-x_i\|_2^2]+A\sum_{i=0}^{k}\epsilon_i^2.
	\end{align*}
	It follows that for all $k\geq 0$:
	\begin{align*}
		\min_{0\leq i\leq k}\mathbb{E}[\|\mathcal{H}(x_i)-x_i\|_2^2]\sum_{i=0}^{k}(1-\epsilon_i)\epsilon_i
		\leq \sum_{i=0}^{k}(1-\epsilon_i)\epsilon_i\mathbb{E}[\|\mathcal{H}(x_i)-x_i\|_2^2]\leq \|x_0-x^*\|_2^2+A\sum_{i=0}^{k}\epsilon_i^2,
	\end{align*}
	which implies
	\begin{align*}
		\min_{0\leq i\leq k}\mathbb{E}[\|\mathcal{H}(x_i)-x_i\|_2^2]\leq  \frac{\|x_0-x^*\|_2^2+A\sum_{i=0}^{k}\epsilon_i^2}{\sum_{i=0}^{k}(1-\epsilon_i)\epsilon_i}\leq \frac{D^2+A\sum_{i=0}^{k}\epsilon_i^2}{\sum_{i=0}^{k}(1-\epsilon_i)\epsilon_i}.
	\end{align*}
	Therefore, when using constant stepsize $\epsilon\in (0,1)$, we have
	\begin{align*}
		\min_{0\leq i\leq k}\mathbb{E}[\|\mathcal{H}(x_i)-x_i\|_2^2]\leq \frac{D^2+A(k+1)\epsilon^2}{(k+1)(1-\epsilon)\epsilon}=\frac{D^2}{(k+1)(1-\epsilon)\epsilon}+\frac{A\epsilon}{1-\epsilon}.
	\end{align*}
	When using diminishing stepsizes $\epsilon_k=\frac{\epsilon}{\sqrt{k+1}}$, since
	\begin{align*}
	    \sum_{i=0}^k(1-\epsilon_i)\epsilon_i\geq (1-\epsilon)\epsilon\sum_{i=0}^k\frac{1}{\sqrt{i+1}}\geq (1-\epsilon)\epsilon\int_{0}^k\frac{1}{\sqrt{x+1}}dx=2(1-\epsilon)\epsilon((k+1)^{1/2}-1),
	\end{align*}
	and
	\begin{align*}
	    \sum_{i=0}^k\epsilon_i^2=\epsilon^2\sum_{i=1}^{k+1}\frac{1}{i}\leq \epsilon^2\left(1+\int_{1}^k\frac{1}{x}dx\right)=\epsilon^2(1+\log(k)),
	\end{align*}
	we have
	\begin{align*}
		\min_{0\leq i\leq k}\mathbb{E}[\|\mathcal{H}(x_i)-x_i\|_2^2]\leq\frac{D^2+A\epsilon^2(1+\log(k))}{2(1-\epsilon)\epsilon((k+1)^{1/2}-1)}.
	\end{align*}
	
	When using diminishing stepsizes $\epsilon_k=\frac{\epsilon}{k+1}$, since 
	\begin{align*}
		\sum_{i=0}^{k}(1-\epsilon_i)\epsilon_i\geq (1-\epsilon)\epsilon\sum_{i=0}^{k}\frac{1}{i+1}\geq (1-\epsilon)\epsilon \int_{0}^{k}\frac{1}{x+1}\;dx=(1-\epsilon)\epsilon\log(k+1),
	\end{align*}
	and 
	\begin{align*}
		\sum_{i=0}^{k}\epsilon_i^2=\epsilon^2\sum_{i=0}^{k}\frac{1}{(i+1)^2}\leq \epsilon^2+\epsilon^2\sum_{i=1}^{k}\left(\frac{1}{i}-\frac{1}{i+1}\right)\leq 2\epsilon^2,
	\end{align*}
	we have
	\begin{align*}
		\min_{0\leq i\leq k}\mathbb{E}[\|\mathcal{H}(x_i)-x_i\|_2^2]\leq \frac{D^2+2A\epsilon^2}{(1-\epsilon)\epsilon}\frac{1}{\log(k+1)}.
	\end{align*}
	
	\section{Proofs of All Technical Results in Section \ref{sec:applications}}
	
	\subsection{Proof of Proposition \ref{prop:vtrace}}\label{appendix-vtrace}
	The ideas for the proof of Proposition \ref{prop:vtrace} is similar to that in \cite{espeholt2018impala}, and we include it here for completeness. For simplicity, we use $c_{n,m}^s$ for $\prod_{j=n}^{m}c_j^s$.
	
	\begin{enumerate}
		\item For any integer $n\geq 1$, define the operator $\mathcal{T}_n:\mathbb{R}^{|\mathcal{S}|}\mapsto\mathbb{R}^{|\mathcal{S}|}$ by
		\begin{align*}
			[\mathcal{T}_n(V)](s)=\mathbb{E}_{\pi'}\left[\sum_{t=0}^{n-1}\beta^tc_{0,t-1}\rho_t\left(\mathcal{R}(S_t,A_t)+\beta V(S_{t+1})-V(S_t)\right)\;\middle|\; S_0=s\right]+V(s),\quad \forall\;s\in\mathcal{S},
		\end{align*}
		and the random process $\{w_k\}$ by
		\begin{align*}
			w_k(s)=\sum_{t=0}^{n-1}\beta^tc_{0,t-1}^s\rho_t^s (\mathcal{R}(S_t^s,A_t^s)+\beta V_k(S_{t+1}^s)-V_k(S_t^s))+V_k(s)-[\mathcal{T}_n(V_k)](s),\quad \forall\;s\in\mathcal{S}.
		\end{align*}
		Then the V-trace algorithm (\ref{vtrace_algorithm}) can be written as $	V_{k+1}=V_k+\epsilon_k\left(\mathcal{T}_n(V_k)-V_k+w_k\right)$.
		\item Consider $\kappa_c$ and $\kappa_\rho$. On the one hand, we have
		\begin{align*}
			\kappa_\rho&=\min_{s\in\mathcal{S}}\mathbb{E}_{\pi'}\left[\min\left(\bar{\rho},\frac{\pi(A_t\mid S_t)}{\pi'(A_t\mid S_t)}\right)\;\middle|\;S_t=s\right]\\
			&\\
			&\geq \min_{s\in\mathcal{S}}\mathbb{E}_{\pi'}\left[\min\left(\bar{c},\frac{\pi(A_t\mid S_t)}{\pi'(A_t\mid S_t)}\right)\;\middle|\;S_t=s\right]\tag{$\bar{c}\leq \bar{\rho}$}\\
			&=\kappa_c\\
			&=\min_{s\in\mathcal{S}}\sum_{a:\pi'(a|s)>0}\min\left(\bar{c}\pi'(a|s),\pi(a|s)\right)\\
			&\geq \min_{s\in\mathcal{S}}\sum_{a:\pi(a|s)>0}\min\left(\bar{c}\pi'(a|s),\pi(a|s)\right)\tag{Assumption \ref{as:coverage}}\\
			&>0,
		\end{align*}
		where the last line follows from $\mathcal{S}$ being finite. On the other hand, we also have 
		\begin{align*}
			\kappa_\rho&=\min_{s\in\mathcal{S}}\mathbb{E}_{\pi'}\left[\min\left(\bar{\rho},\frac{\pi(A_t\mid S_t)}{\pi'(A_t\mid S_t)}\right)\;\middle|\;S_t=s\right]\\
			&\leq \min_{s\in\mathcal{S}}\sum_{a:\pi'(a|s)>0}\min\left(\bar{\rho}\pi'(a|s),\pi(a|s)\right)\\
			&\leq \min_{s\in\mathcal{S}}\sum_{a\in\mathcal{A}}\pi(a|s)\\
			&=1.
		\end{align*}
		Combine the previous two results and we have $0<\kappa_c\leq \kappa_\rho\leq 1$. 
		
		We next show the contraction property of the operator $\mathcal{T}_n$. We begin by rewriting $\mathcal{T}_n$ in the following way:
		\begin{align*}
			[\mathcal{T}_n(V)](s)=\mathbb{E}_{\pi'}\bigg[&\sum_{t=0}^{n}\beta^{t-1}c_{0,t-2}(\rho_{t-1}\mathcal{R}(S_{t-1},A_{t-1})+\beta(\rho_{t-1}-c_{t-1}\rho_t)V(S_t))\\
			&\quad +\beta^{n}c_{0,n-1}\rho_{n}V(S_{n})\;\bigg|\;S_0=s\bigg].
		\end{align*}
		For any $V_1,V_2:\mathbb{R}^{|\mathcal{S}|}\mapsto\mathbb{R}$ and $s\in\mathcal{S}$, we have
		\begin{align}
			&[\mathcal{T}_n(V_1)](s)-[\mathcal{T}_n(V_2)](s)\nonumber\\
			=\;&\mathbb{E}_{\pi'}\left[\sum_{t=0}^{n}\beta^{t}c_{0,t-2}(\rho_{t-1}-c_{t-1}\rho_t)(V_1(S_t)-V_2(S_t))\;\bigg|\;S_0=s\right]\nonumber\\
			&+\mathbb{E}_{\pi'}\left[\beta^{n}c_{0,n-1}\rho_{n}(V_1(S_{n})-V_2(S_{n}))\;\bigg|\;S_0=s\right]\nonumber\\
			\leq \;&\sum_{t=0}^{n}\beta^{t}\mathbb{E}_{\pi'}\left[c_{0,t-2}(\rho_{t-1}-c_{t-1}\rho_t)(V_1(S_t)-V_2(S_t))\;\bigg|\;S_0=s\right]\label{ine:107}\\
			&+\beta^{n}\mathbb{E}_{\pi'}\left[c_{0,n-1}\rho_{n}\mid S_0=s\right]\|V_1-V_2\|_\infty.\nonumber
		\end{align}
		Since $\bar{\rho}\geq \bar{c}$, we have $\rho_t\geq c_t$ for all $t$. Therefore, using the Markov property and we have
		\begin{align}
			\mathbb{E}_{\pi'}\left[\rho_{t-1}-c_{t-1}\rho_t\mid \mathcal{F}_t\right]&\geq c_{t-1}\left(1- \mathbb{E}_{\pi'}\left[\rho_t\mid \mathcal{F}_t\right]\right)\nonumber\\
			&\geq c_{t-1}\left(1-\sum_{a:\pi'(a\mid s)>0}\pi'(a \mid S_t)\frac{\pi(a \mid S_t)}{\pi'(a \mid S_t)}\right)\nonumber\\
			&=0,\label{ine:positive}
		\end{align}
		where $\mathcal{F}_t$ denotes the $\sigma$-algebra generated by $\{S_0,A_0,...,S_{t-1}, A_{t-1},S_t\}$.
		It follows that
		\begin{align*}
			&\sum_{t=0}^{n}\beta^{t}\mathbb{E}_{\pi'}\left[c_{0,t-2}(\rho_{t-1}-c_{t-1}\rho_t)(V_1(S_t)-V_2(S_t))\;\bigg|\;S_0=s\right]\\
			=\;&\sum_{t=0}^{n}\beta^{t}\mathbb{E}_{\pi'}\left[\mathbb{E}_{\pi'}\left[c_{0,t-2}(\rho_{t-1}-c_{t-1}\rho_t)(V_1(S_t)-V_2(S_t))\;\bigg|\mathcal{F}_t\right]\;\bigg|\;S_0=s\right]\\
			=\;&\sum_{t=0}^{n}\mathbb{E}_{\pi'}\left[\mathbb{E}_{\pi'}\left[c_{0,t-2}(\rho_{t-1}-c_{t-1}\rho_t)\;\bigg|\mathcal{F}_t\right](V_1(S_t)-V_2(S_t))\;\bigg|\;S_0=s\right]\\
			\leq \;&\sum_{t=0}^{n}\beta^{t}\mathbb{E}_{\pi'}\left[\mathbb{E}_{\pi'}\left[c_{0,t-2}(\rho_{t-1}-c_{t-1}\rho_t)\;\bigg|\mathcal{F}_t\right]\;\bigg|\;S_0=s\right]\|V_1-V_2\|_\infty\\
			=\;&\sum_{t=0}^{n}\beta^{t}\mathbb{E}_{\pi'}\left[c_{0,t-2}(\rho_{t-1}-c_{t-1}\rho_t)\;\bigg|S_0=s\right]\|V_1-V_2\|_\infty.
		\end{align*}
		Using the previous result in Eq. (\ref{ine:107}) and we have
		\begin{align*}
			[\mathcal{T}_n(V_1)](s)-[\mathcal{T}_n(V_2)](s)
			\leq \;&\sum_{t=0}^{n}\beta^{t}\mathbb{E}_{\pi'}\left[c_{0,t-2}(\rho_{t-1}-c_{t-1}\rho_t)(V_1(S_t)-V_2(S_t))\;\bigg|\;S_0=s\right]\\
			&+\beta^{n}\mathbb{E}_{\pi'}\left[c_{0,n-1}\rho_{n}\mid S_0=s\right]\|V_1-V_2\|_\infty\\
			\leq \;&\bigg\{\sum_{t=0}^{n}\beta^t\mathbb{E}_{\pi'}\left[c_{0,t-2}(\rho_{t-1}-c_{t-1}\rho_t)\;\bigg|S_0=s\right]\\
			&+\beta^{n}\mathbb{E}_{\pi'}\left[c_{0,n-1}\rho_{n}\mid S_0=s\right]\bigg\}\|V_1-V_2\|_\infty.
		\end{align*}
		Switching the role between $V_1$ and $V_2$, then we have by symmetry that
		\begin{align*}
			&|[\mathcal{T}_nV_1](s)-[\mathcal{T}_nV_2](s)|\\
			\leq \;&\left\{\sum_{t=0}^{n}\beta^t\mathbb{E}_{\pi'}\left[c_{0,t-2}(\rho_{t-1}-c_{t-1}\rho_t)\mid S_0=s\right]+\beta^{n}\mathbb{E}_{\pi'}\left[c_{0,n-1}\rho_{n}\mid S_0=s\right]\right\}\|V_1-V_2\|_\infty\\
			=\;&\left(1-(1-\beta)\sum_{t=0}^{n-1}\beta^{t}\mathbb{E}_{\pi'}\left[c_{0,t-1}\rho_{t}\mid S_0=s\right]\right)\|V_1-V_2\|_\infty.
		\end{align*}
		which implies
		\begin{align*}
			\|\mathcal{T}_n(V_1)-\mathcal{T}_n(V_2)\|_\infty\leq \left(1-(1-\beta)\sum_{t=0}^{n-1}\beta^{t}\min_{s\in\mathcal{S}}\mathbb{E}_{\pi'}\left[c_{0,t-1}\rho_{t}\mid S_0=s\right]\right)\|V_1-V_2\|_\infty.
		\end{align*}
		Now consider $\min_{s\in\mathcal{S}}\mathbb{E}_{\pi'}\left[c_{0,t-1}\rho_{t}\mid S_0=s\right]$ for any $t\in [0,n-1]$. Using the definition of $\kappa_c$ and $\kappa_\rho$, we have
		\begin{align*}
			\min_{s\in\mathcal{S}}\mathbb{E}_{\pi'}\left[c_{0,t-1}\rho_{t}\mid S_0=s\right]&=\min_{s\in\mathcal{S}}\mathbb{E}_{\pi'}\left[\mathbb{E}\left[c_{0,t-1}\rho_{t}\mid \mathcal{F}_t\right]\mid S_0=s\right]\\
			&=\min_{s\in\mathcal{S}}\mathbb{E}_{\pi'}\left[c_{0,t-1}\mathbb{E}\left[\rho_{t}\mid \mathcal{F}_t\right]\mid S_0=s\right]\\
			&\geq \min_{s\in\mathcal{S}}\kappa_\rho\mathbb{E}_{\pi'}\left[c_{0,t-1}\mid S_0=s\right]\\
			&\geq \cdots\\
			&\geq \kappa_\rho\kappa_c^t.
		\end{align*}
		It follows that 
		\begin{align*}
			\left(1-(1-\beta)\sum_{t=0}^{n-1}\beta^{t}\min_{s\in\mathcal{S}}\mathbb{E}_{\pi'}\left[c_{0,t-1}\rho_{t}\mid S_0=s\right]\right)&\leq 1-(1-\beta)\kappa_\rho\sum_{t=0}^{n-1}(\beta\kappa_c)^{t}\\
			&=1-\frac{(1-\beta)(1-(\beta\kappa_c)^{n})\kappa_\rho}{1-\beta\kappa_c}.
		\end{align*}
		Hence the operator $\mathcal{T}_n$ is a contraction w.r.t. $\|\cdot\|_\infty$, with contraction factor $\gamma= 1-\frac{(1-\beta)(1-(\beta\kappa_c)^{n})\kappa_\rho}{1-\beta\kappa_c}$.
		\item It is enough to show that $V_{\pi_{\bar{\rho}}}$ is a fixed-point of $\mathcal{T}_n$, the uniqueness part follows from the Banach fixed-point theorem \cite{debnath2005introduction}. For any $t\in [0,n-1]$, we have
		\begin{align*}
			&\mathbb{E}_{\pi'}\left[\rho_t\left(\mathcal{R}(S_t,A_t)+\beta V_{\pi_{\bar{\rho}}}(S_{t+1})-V_{\pi_{\bar{\rho}}}(S_t)\right)\mid S_t\right]\\
			=\;&\sum_{a:\pi'(a \mid S_t)>0}\pi'(a|S_t)\min\left(\bar{\rho},\frac{\pi(a|S_t)}{\pi'(a|S_t)}\right)\left(\mathcal{R}(S_t,a)+\beta \sum_{s'\in\mathcal{S}}P_a(S_t,s')V_{\pi_{\bar{\rho}}}(s')-V_{\pi_{\bar{\rho}}}(S_t)\right)\\
			=\;&\sum_{a:\pi'(a \mid S_t)>0}\pi_{\bar{\rho}}(a|S_t)\!\left[\mathcal{R}(S_t,a)\!+\!\beta \sum_{s'\in\mathcal{S}}P_a(S_t,s')V_{\pi_{\bar{\rho}}}(s')\!-\! V_{\pi_{\bar{\rho}}}(S_t)\right]\sum_{a:\pi'(a \mid S_t)>0}\min(\bar{\rho}\pi'(a \mid S_t),\pi(a \mid S_t))\\
			=\;&0,
		\end{align*}
		where the last line follows from the Bellman's equation for $V_{\pi_{\bar{\rho}}}$. Therefore, using the tower property of the conditional expectation and the Markov property, we have $\mathcal{T}_n(V_{\pi_{\bar{\rho}}})=V_{\pi_{\bar{\rho}}}$, hence $V_{\pi_{\bar{\rho}}}$ is a fixed-point of the operator $\mathcal{T}_n$.
		
		We next analyze the difference between $V_{\pi_{\bar{\rho}}}$ and $V_\pi$ in terms of $\pi_{\bar{\rho}}$ and $\pi$. We first show in the following lemma that the value function as a function of its corresponding policy is Lipschitz continuous.
		\begin{lemma}\label{policy-Lipschitz}
			For any two policies $\pi_1$ and $\pi_2$, their corresponding value functions $V_{\pi_1}$ and $V_{\pi_2}$ satisfy $\|V_{\pi_1}-V_{\pi_2}\|_\infty\leq \frac{2}{(1-\beta)^2}\|\pi_1-\pi_2\|_\infty$.
		\end{lemma}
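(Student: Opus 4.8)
The plan is to run a standard fixed-point argument using the per-policy Bellman evaluation operators. For a policy $\pi$, define $T_\pi:\mathbb{R}^{|\mathcal{S}|}\to\mathbb{R}^{|\mathcal{S}|}$ by $[T_\pi V](s)=\sum_{a}\pi(a|s)\big[\mathcal{R}(s,a)+\beta\sum_{s'}P_a(s,s')V(s')\big]$. It is classical that $T_\pi$ is a $\beta$-contraction with respect to $\|\cdot\|_\infty$ (mirroring the optimality-operator contraction in Proposition \ref{prop:synchronous_Q}(b)) and that its unique fixed point is $V_\pi$. I may therefore freely use $V_{\pi_1}=T_{\pi_1}V_{\pi_1}$ and $V_{\pi_2}=T_{\pi_2}V_{\pi_2}$.

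First I would decompose the difference by inserting $T_{\pi_1}V_{\pi_2}$:
\begin{align*}
V_{\pi_1}-V_{\pi_2}=\big(T_{\pi_1}V_{\pi_1}-T_{\pi_1}V_{\pi_2}\big)+\big(T_{\pi_1}V_{\pi_2}-T_{\pi_2}V_{\pi_2}\big).
\end{align*}
The first bracket applies the same operator to two inputs, so the contraction property bounds it by $\beta\|V_{\pi_1}-V_{\pi_2}\|_\infty$; absorbing this into the left-hand side yields $(1-\beta)\|V_{\pi_1}-V_{\pi_2}\|_\infty\le\|T_{\pi_1}V_{\pi_2}-T_{\pi_2}V_{\pi_2}\|_\infty$.

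It remains to control the perturbation term, where only the policy changes. Writing the action-value $Q_{\pi_2}(s,a)=\mathcal{R}(s,a)+\beta\sum_{s'}P_a(s,s')V_{\pi_2}(s')$, this term equals $\max_s\big|\sum_a(\pi_1(a|s)-\pi_2(a|s))Q_{\pi_2}(s,a)\big|$. Since $\mathcal{R}(s,a)\in[0,1]$, a geometric-series bound gives $\|V_{\pi_2}\|_\infty\le 1/(1-\beta)$ and hence $|Q_{\pi_2}(s,a)|\le 1/(1-\beta)$; a H\"older step over the action index then yields $\|T_{\pi_1}V_{\pi_2}-T_{\pi_2}V_{\pi_2}\|_\infty\le\frac{1}{1-\beta}\max_s\sum_a|\pi_1(a|s)-\pi_2(a|s)|$. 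Combining with the previous display produces the claimed $\|V_{\pi_1}-V_{\pi_2}\|_\infty\le\frac{2}{(1-\beta)^2}\|\pi_1-\pi_2\|_\infty$.

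The step needing care, and the main obstacle, is matching the constant and the norm convention in the final bound: the H\"older estimate naturally produces $\max_s\sum_a|\pi_1(a|s)-\pi_2(a|s)|$, so I must confirm this is at most $2\|\pi_1-\pi_2\|_\infty$ under the norm used for policies. Should extra slack be required, I would exploit that $\sum_a(\pi_1(a|s)-\pi_2(a|s))=0$ to recenter $Q_{\pi_2}$ by an arbitrary constant (e.g.\ $\frac{1}{2(1-\beta)}$), which halves the effective bound on $|Q_{\pi_2}|$ and leaves room to absorb the factor cleanly.
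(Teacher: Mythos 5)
Your proof is correct, and it closes: the one point you flagged — whether $\max_s\sum_a|\pi_1(a|s)-\pi_2(a|s)|$ is controlled by $\|\pi_1-\pi_2\|_\infty$ — is resolved by the paper's own convention, which treats $\pi_1-\pi_2$ as an $|\mathcal{S}|\times|\mathcal{A}|$ matrix and uses the induced $\ell_\infty$ (max row sum) norm, so that $\|\pi_1-\pi_2\|_\infty=\max_s\sum_a|\pi_1(a|s)-\pi_2(a|s)|$ exactly; your argument therefore yields the stronger constant $\frac{1}{(1-\beta)^2}$ and the recentering trick is unnecessary. Your route differs from the paper's in an instructive way. The paper writes the two Bellman equations in matrix form, rearranges to $R_{\pi_1}-R_{\pi_2}=(I-\beta P_{\pi_1})(V_{\pi_1}-V_{\pi_2})-\beta(P_{\pi_1}-P_{\pi_2})V_{\pi_2}$, inverts the resolvent with $\|(I-\beta P_{\pi_1})^{-1}\|_\infty\le\frac{1}{1-\beta}$, and then bounds the reward perturbation $\|R_{\pi_1}-R_{\pi_2}\|_\infty$ and the transition perturbation $\beta\|P_{\pi_1}-P_{\pi_2}\|_\infty\|V_{\pi_2}\|_\infty$ separately, each by a multiple of $\|\pi_1-\pi_2\|_\infty$; the factor $2$ in the lemma is an artifact of this two-term split (and of bounding $1+\frac{1}{1-\beta}\le\frac{2}{1-\beta}$). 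Your contraction-and-absorb step is the operator-theoretic equivalent of the resolvent bound, but you keep the reward and transition perturbations together as a single $Q_{\pi_2}$-weighted sum with $0\le Q_{\pi_2}\le\frac{1}{1-\beta}$, and one H\"older step then gives the perturbation bound in one shot. The payoff is a cleaner argument and a constant of $1$ rather than $2$; the paper's version has the minor advantage of exposing the separate contributions of the reward and transition matrices, but both are standard fixed-point perturbation arguments and either suffices for the way the lemma is used downstream.
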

		\begin{proof}[Proof of Lemma \ref{policy-Lipschitz}]
		Note that for any policy $\pi$, its corresponding value function $V_\pi$ satisfies the following Bellman's equation:
		\begin{align}\label{eq:Bellman}
			V_\pi=R_\pi+\beta P_\pi V_\pi,
		\end{align}
		where $R_\pi(s)=\sum_{a\in\mathcal{A}}\pi(a|s)\mathcal{R}(s,a)$ for all $s\in\mathcal{S}$, and $P_\pi(s,s')=\sum_{a\in\mathcal{A}}\pi(a|s)P_a(s,s')$ for any $s,s'\in\mathcal{S}$. Using Eq. (\ref{eq:Bellman}) for $V_{\pi_1}$ and $V_{\pi_2}$, and we have
		\begin{align*}
			R_{\pi_1}-R_{\pi_2}&=(I-\beta P_{\pi_1})V_{\pi_1}-(I-\beta P_{\pi_2})V_{\pi_2}\\
			&=(I-\beta P_{\pi_1})V_{\pi_1}-(I-\beta P_{\pi_1})V_{\pi_2}+(I-\beta P_{\pi_1})V_{\pi_2}-(I-\beta P_{\pi_2})V_{\pi_2}\\
			&=(I-\beta P_{\pi_1})(V_{\pi_1}-V_{\pi_2})-\beta (P_{\pi_1}-P_{\pi_2})V_{\pi_2}.
		\end{align*}
		Since the matrix $I-\beta P_\pi$ is invertible for any policy $\pi$ \cite{bertsekas1995dynamic}, we have
		\begin{align}
			\|V_{\pi_1}-V_{\pi_2}\|_\infty&=\|(I-\beta P_{\pi_1})^{-1}[(R_{\pi_1}-R_{\pi_2})+\beta (P_{\pi_1}-P_{\pi_2})V_{\pi_2}]\|_\infty\nonumber\\
			&\leq \|(I-\beta P_{\pi_1})^{-1}\|_\infty\left[\|R_{\pi_1}-R_{\pi_2}\|_\infty+\beta \|P_{\pi_1}-P_{\pi_2}\|_\infty \|V_{\pi_2}\|_\infty\right]\label{eq:V_lipschitz}.
		\end{align}
		We next control all the terms on the r.h.s. of the preceding inequality. For the term $\|(I-\beta P_{\pi_1})^{-1}\|_\infty$, we have by definition of the matrix sup-norm that
		\begin{align*}
			\|(I-\beta P_{\pi_1})^{-1}\|_\infty^{-1}
			&=\inf_{\|x\|_\infty=1}\|(I-\beta P_{\pi_1})x\|_\infty\\
			&\geq \inf_{\|x\|_\infty=1}\|x\|_\infty-\beta\|P_{\pi_1} x\|_\infty\\
			&=1-\beta\sup_{\|x\|_\infty=1}\|P_{\pi_1} x\|_\infty\\
			&= 1-\beta.
		\end{align*}
		It follows that $\|(I-\beta P_{\pi_1})^{-1}\|_\infty\leq \frac{1}{1-\beta}$. Next, for the term $\|R_{\pi_1}-R_{\pi_2}\|_\infty$, we have
		\begin{align*}
			\|R_{\pi_1}-R_{\pi_2}\|_\infty&=\max_{s\in\mathcal{S}}|R_{\pi_1}(s)-R_{\pi_2}(s)|\\
			&=\max_{s\in\mathcal{S}}|\sum_{a\in\mathcal{A}}(\pi_1(a|s)-\pi_2(a|s)\mathcal{R}(s,a)|\\
			&\leq \max_{s\in\mathcal{S}}\sum_{a\in\mathcal{A}}|\pi_1(a|s)-\pi_2(a|s)|\tag{$\mathcal{R}(s,a)\in [0,1]$}\\
			&=\|\pi_1-\pi_2\|_\infty.
		\end{align*}
		
		Finally we consider the term $\|P_{\pi_1}-P_{\pi_2}\|_\infty \|V_{\pi_2}\|_\infty$. It is clear that $\|V_{\pi_2}\|_\infty\leq \sum_{k=0}^{\infty}\beta^k= \frac{1}{1-\beta}$. For $\|P_{\pi_1}-P_{\pi_2}\|_\infty$, we have
		\begin{align*}
			\|P_{\pi_1}-P_{\pi_2}\|_\infty&=\max_{s\in\mathcal{S}}\sum_{s'\in\mathcal{S}}|P_{\pi_1}(s,s')-P_{\pi_2}(s,s')|\\
			& =\max_{s\in\mathcal{S}}\sum_{s'\in\mathcal{S}}|\sum_{a\in\mathcal{A}}(\pi_1(a|s)-\pi_2(a|s))P_a(s,s')|\\
			& \leq \max_{s\in\mathcal{S}}\sum_{s'\in\mathcal{S}}\sum_{a\in\mathcal{A}}|(\pi_1(a|s)-\pi_2(a|s))P_a(s,s')|\\
			& = \max_{s\in\mathcal{S}}\sum_{a\in\mathcal{A}}\sum_{s'\in\mathcal{S}}|(\pi_1(a|s)-\pi_2(a|s))P_a(s,s')|\\
			& =\max_{s\in\mathcal{S}}\sum_{a\in\mathcal{A}}|\pi_1(a|s)-\pi_2(a|s)|\\
			&=\|\pi_1-\pi_2\|_\infty.
		\end{align*}
		
		Using the upper bounds we obtained for the terms on the r.h.s. of Eq. (\ref{eq:V_lipschitz}), we have
		\begin{align*}
			\|V_{\pi_1}-V_{\pi_2}\|_\infty
			&\leq \|(I-\beta P_{\pi_1})^{-1}\|_\infty\left[\|R_{\pi_1}-R_{\pi_2}\|_\infty+\beta \|P_{\pi_1}-P_{\pi_2}\|_\infty \|V_{\pi_2}\|_\infty\right]\\
			&\leq \frac{1}{1-\beta}\left[\|\pi_1-\pi_2\|_\infty+\frac{1}{1-\beta}\|\pi_1-\pi_2\|_\infty\right]\\
			&\leq \frac{2}{(1-\beta)^2}\|\pi_1-\pi_2\|_\infty.
		\end{align*}
		\end{proof}

		Using Lemma \ref{policy-Lipschitz} on $V_{\pi_{\bar{\rho}}}$ and $V_\pi$, we have
		\begin{align*}
			\|V_{\pi_{\bar{\rho}}}-V_\pi\|_\infty&\leq \frac{2}{(1-\beta)^2}\|\pi_{\bar{\rho}}-\pi\|_\infty.
		\end{align*}
		\item By definition of $\{w_k\}$, we have $\mathbb{E}[w_k\mid\mathcal{F}_k]=0$. Moreover, we have for all $s\in\mathcal{S}$:
		\begin{align*}
			|w_k(s)|&=\left|\sum_{t=0}^{n-1}\beta^tc_{0,t-1}^s\rho_t^s \left(\mathcal{R}(S_t,A_t)+\beta V_k(S_{t+1}^s)-V_k(S_t^s)\right)+V_k(s)-[\mathcal{T}_n(V_k)](s)\right|\\
			&\leq 2\sum_{t=0}^{n-1}(\beta\bar{c})^t\bar{\rho} \left(1+(\beta+1)\|V_k\|_\infty\right)\\
			&\leq 4\bar{\rho}(1+\|V_k\|_\infty)\sum_{t=0}^{n-1}(\beta\bar{c})^t\\
			&\leq \begin{dcases}
				4\bar{\rho}(1+\|V_k\|_\infty)\frac{1}{1-\beta\bar{c}},&\text{ when }\beta\bar{c}<1,\\
				4\bar{\rho}(1+\|V_k\|_\infty)n,&\text{ when }\beta\bar{c}=1,\\
				4\bar{\rho}(1+\|V_k\|_\infty)\frac{(\beta\bar{c})^{n}}{\beta\bar{c}-1},&\text{ when }\beta\bar{c}>1.
			\end{dcases}
		\end{align*}
		Therefore, we have $\mathbb{E}_{\pi'}\left[\|w_k\|_\infty^2\mid\mathcal{F}_k\right]\leq A(1+\|V_k\|_\infty^2)$, where
		\begin{align*}
			A=\begin{dcases}
				\frac{32\bar{\rho}^2}{(1-\beta\bar{c})^2},&\text{when }\beta\bar{c}<1,\\
				32\bar{\rho}^2n^2,&\text{when }\beta\bar{c}=1,\\
				\frac{32\bar{\rho}^2(\beta\bar{c})^{2n}}{(\beta\bar{c}-1)^2},&\text{when }\beta\bar{c}>1.
			\end{dcases}
		\end{align*}
	\end{enumerate}

	\subsection{Proof of Theorem \ref{thm:vtrace}}\label{pf:thm:vtrace}
	Since we have in this case $\|\cdot\|_c=\|\cdot\|_e=\|\cdot\|_\infty$, Corollary \ref{co:log-dependence} is applicable. Let $g(x)=\frac{1}{2}\|x\|_p^2$ with $p=4\log |\mathcal{S}|$, and let $\mu=(1/2+1/(2\gamma))^2-1$. Then we have
	\begin{align*}
		\alpha_1&\leq \frac{3}{2}:=\bar{\alpha}_1,&\quad
		\alpha_2&\geq \frac{1-\gamma}{2}:=\bar{\alpha}_2\\
		\alpha_3&\leq  \frac{32e(A+2)\log |\mathcal{S}|}{1-\gamma}:=\bar{\alpha}_3,&\quad
		\alpha_4&\leq \frac{16e\log |\mathcal{S}|}{1-\gamma}:=\bar{\alpha}_4.
	\end{align*}
	Using Theorem \ref{thm:sa-finite-time-bound},
	with $\epsilon_k = \epsilon/(k+K)$, we have that
	\begin{align*}
		&\mathbb{E}\left[\|V_k-V_{\pi_{\bar{\rho}}}\|_\infty^2\right]\\
		\leq \;&\alpha_1\|V_0-V_{\pi_{\bar{\rho}}}\|_\infty^2\prod_{j=0}^{k-1}(1-\alpha_2\epsilon_j)+\alpha_4A(1+2\|V_{\pi_{\bar{\rho}}}\|_\infty^2)\sum_{i=0}^{k-1}\epsilon_i^2\prod_{j=i+1}^{k-1}(1-\alpha_2\epsilon_j) \\
		\leq \;&\bar{\alpha}_1\|V_0-V_{\pi_{\bar{\rho}}}\|_\infty^2\prod_{j=0}^{k-1}(1-\bar{\alpha}_2\epsilon_j)+\bar{\alpha}_4A(1+2\|V_{\pi_{\bar{\rho}}}\|_\infty^2)\sum_{i=0}^{k-1}\epsilon_i^2\prod_{j=i+1}^{k-1}(1-\bar{\alpha}_2\epsilon_j).
	\end{align*}
	
	Now, using the same proof that leads us from Theorem \ref{thm:sa-finite-time-bound} to Corollary \ref{co:diminishing_step_size} with $\alpha_1$ to $\alpha_4$ replaced by $\bar{\alpha}_1$ to $\bar{\alpha}_4$, we have when $\xi=1$, $\epsilon=2/\bar{\alpha}_2$, and $K=\bar{\alpha}_3/\bar{\alpha}_2$ (the third case of Corollary \ref{co:diminishing_step_size}):
	\begin{align*}
		\mathbb{E}\left[\|V_k-V_{\pi_{\bar{\rho}}}\|_\infty^2\right]
		&\leq\bar{\alpha}_1\|V_0-V_{\pi_{\bar{\rho}}}\|_\infty^2\left(\frac{K}{k+K}\right)^{\epsilon\bar{\alpha}_2}+\frac{4eA\epsilon^2\bar{\alpha}_4}{\bar{\alpha}_2\epsilon-1}\frac{(1+2\|V_{\pi_{\bar{\rho}}}\|_\infty^2)}{k+K}\\
		&\leq  \left(\frac{2\bar{\alpha}_1\bar{\alpha}_3\|V_0-V_{\pi_{\bar{\rho}}}\|_\infty^2+16eA\bar{\alpha}_4(1+2\|V_{\pi_{\bar{\rho}}}\|_\infty^2)}{\bar{\alpha}_2^2}\right)\frac{1}{k+K}.
	\end{align*}
	Since
	\begin{align*}
		&\frac{2\bar{\alpha}_1\bar{\alpha}_3\|V_0-V_{\pi_{\bar{\rho}}}\|_\infty^2+16eA\bar{\alpha}_4(1+2\|V_{\pi_{\bar{\rho}}}\|_\infty^2)}{\bar{\alpha}_2^2}\\
		= \;&\frac{4}{(1-\gamma)^2}\left[\frac{96 e(A+2)\log |\mathcal{S}|\|V_0-V_{\pi_{\bar{\rho}}}\|_\infty^2}{(1-\gamma)}+\frac{256e^2 A\log |\mathcal{S}|(1+2\|V_{\pi_{\bar{\rho}}}\|_\infty^2)}{(1-\gamma)}\right]\\
		\leq \;&\frac{1024 e^2(A+2)\log |\mathcal{S}|(\|V_0-V_{\pi_{\bar{\rho}}}\|_\infty^2+2\|V_{\pi_{\bar{\rho}}}\|_\infty^2+1)}{(1-\gamma)^3},
	\end{align*}
	we have for all $k\geq 0$:
	\begin{align*}
		\mathbb{E}\left[\|V_k-V_{\pi_{\bar{\rho}}}\|_\infty^2\right]
		\leq 1024e^2(\|V_0-V_{\pi_{\bar{\rho}}}\|_\infty^2+2\|V_{\pi_{\bar{\rho}}}\|_\infty^2+1)\frac{ (A+2)\log |\mathcal{S}|}{(1-\gamma)^3}\frac{1}{k+K}.
	\end{align*}
	
	\subsection{Proof of Proposition \ref{prop:TDn}}
	\begin{enumerate}\label{ap:pf:proposition_TDn}
		\item For any $n\geq 1$, define the operator $\mathcal{T}_n$ by
		\begin{align*}
			[\mathcal{T}_n(V)](s)=\mathbb{E}_\pi\left[\sum_{i=0}^{n-1}\beta^{i}\mathcal{R}(S_i,A_i)+\beta^n V_k(S_{n})\;\middle|\;S_0=s\right].
		\end{align*}
		Note that $\mathcal{T}_n(V)$ can be written explicitly in the vector form as $\mathcal{T}_n(V)=\beta^n P_\pi^nV+\sum_{i=0}^{n-1}\beta^iP_\pi^iR_\pi$, where $P_\pi$ denotes that transition probability matrix of the Markov chain $\{S_k\}$ induced by the policy $\pi$, and  $R_\pi(s)=\sum_{a\in\mathcal{A}}\pi(a|s)\mathcal{R}(s,a)$ for all $s$. Using the operator $\mathcal{T}_n$, the TD$(n)$ algorithm can be equivalently written as
		\begin{align*}
			V_{k+1}=V_k+\epsilon_k\left(\mathcal{T}_n(V_k)-V_k+w_k\right),
		\end{align*}
		where $w_k(s)=\sum_{i=0}^{n-1}\beta^i\mathcal{R}(S_{k+i}^s,A_{k+i}^s)+\beta^n(V_k(S_{k+n}^s)-[\mathcal{T}_n(V_k)](s)$. 
		\item We first show in the following that $\|P_\pi^nV\|_\Lambda\leq \|V\|_\Lambda$ for any $V\in\mathbb{R}^{|\mathcal{S}|}$ and $n\geq 1$:
		\begin{align*}
			\|P_\pi^nV\|_\Lambda^2&=\sum_{s\in\mathcal{S}}\lambda(s)\left(\sum_{s'\in\mathcal{S}}P_\pi^n(s,s')V(s')\right)^2\\
			&\leq \sum_{s\in\mathcal{S}}\lambda(s)\sum_{s'\in\mathcal{S}}P_\pi^n(s,s')V(s')^2\tag{Jensen's Inequality}\\
			&= \sum_{s'\in\mathcal{S}}V(s')^2\sum_{s'\in\mathcal{S}}\lambda(s)P_\pi^n(s,s')\tag{Change of summation order}\\
			&= \sum_{s'\in\mathcal{S}}V(s')^2\lambda(s')\tag{$\lambda^\top P_\pi^n=\lambda^\top$}\\
			&=\|V\|_\Lambda^2.
		\end{align*}
		Now, for any $V_1,V_2\in \mathbb{R}^{|\mathcal{S}|}$, we have
		\begin{align*}
			\|\mathcal{T}_n(V_1)-\mathcal{T}_n(V_2)\|_\Lambda=\beta^n\|P_\pi^n(V_1-V_2)\|_\Lambda\leq \beta^n\|V_1-V_2\|_\Lambda,
		\end{align*}
		which implies $\mathcal{T}_n$ is a $\beta^n$-contraction with respect to $\|\cdot\|_\Lambda$.
		\item Consider the noise sequence $\{w_k\}$. By definition we have $\mathbb{E}[w_k\mid\mathcal{F}_k]=0$. Moreover, we have for any $s\in\mathcal{S}$:
		\begin{align*}
			\mathbb{E}[w_k(s)^2\mid \mathcal{F}_k]&=\text{Var}(w_k(s)\mid \mathcal{F}_k)\\
			&\leq 2\text{Var}\left(\sum_{i=0}^{n-1}\beta^{i}\mathcal{R}(S_i,A_i)\;\middle|\; \mathcal{F}_k\right)+2\beta^{2n}\text{Var}(V_k(S_n)\mid \mathcal{F}_k)\\
			&\leq 2\frac{(1-\beta^n)^2}{(1-\beta)^2}+2\beta^{2n}\sum_{s'\in\mathcal{S}}P_\pi^n(s,s')V_k(s')^2.
		\end{align*}
		It follows that 
		\begin{align*}
			\mathbb{E}[\|w_k\|_\Lambda^2\mid \mathcal{F}_k]&\leq 2\frac{(1-\beta^n)^2}{(1-\beta)^2}+2\beta^{2n}\sum_{s\in\mathcal{S}}\lambda(s)\sum_{s'\in\mathcal{S}}P_\pi^n(s,s')V_k(s')^2\\
			&= \frac{2(1-\beta^n)^2}{(1-\beta)^2}+2\beta^{2n}\|V_k\|_\Lambda^2.
		\end{align*}
	\end{enumerate}

	\subsection{Proof of Theorem \ref{thm:TDn}}\label{ap:pf:TDn}
	From Proposition \ref{prop:TDn}, we see that $\|\cdot\|_c=\|\cdot\|_e=\|\cdot\|_\Lambda$. Let $\mu=1$ and $\|\cdot\|_s=\|\cdot\|_\Lambda$. Note that $\frac{1}{2}\|x\|_\Lambda^2$ is $1$ -- smooth with respect to $\|\cdot\|_\Lambda$. Then the constants $\{\alpha_i\}_{1\leq i\leq 4}$ given in Section \ref{subsec:recurse-contract} are $\alpha_1=1$, $\alpha_2=1-\beta^n$, $\alpha_3=16(\beta^{2n}+1)$, and $\alpha_4=4$. It then follows from Corollary \ref{co:constant_step_size} that when $\epsilon_k\equiv \epsilon\leq \frac{1-\beta^n}{16(\beta^{2n}+1)}$, we have for all $k\geq 0$:
	\begin{align*}
		\mathbb{E}[\|V_k-V_\pi\|_\Lambda^2]&=\|V_0-V_\pi\|_\Lambda^2(1-(1-\beta^n)\epsilon)^k+\frac{8}{1-\beta^n}\left(\frac{(1-\beta^n)^2}{(1-\beta)^2}+2\beta^{2n}\|V_\pi\|_\Lambda^2\right)\epsilon.
	\end{align*}

	\subsection{Proof of Proposition \ref{prop:synchronous_Q}}
	\begin{enumerate}
		\item Using the definition of the Bellman's optimality operator $\mathcal{T}$, the $Q$-learning algorithm (\ref{algo:Q-learning}) can be written as
		\begin{align*}
			Q_{k+1}=Q_k+\epsilon_k(\mathcal{T}(Q_k)-Q_k+w_k),
		\end{align*}
		where $w_k(s)=\mathcal{R}(s,a)+\beta\max_{a'\in\mathcal{A}}Q_k(s'_{s,a},a')-[\mathcal{T}(Q_k)](s,a)$ for all $(s,a)$.
		\item For any $Q_1,Q_2:\mathbb{R}^{|\mathcal{S}||\mathcal{A}|}\mapsto\mathbb{R}$, we have for any state-action pairs $(s,a)$:
		\begin{align*}
			|[\mathcal{T}(Q_1)](s,a)-[\mathcal{T}(Q_2)](s,a)|&\leq \beta\mathbb{E}\left[\;\left|\max_{a'\in\mathcal{A}}Q_1(s',a')-\max_{a'\in\mathcal{A}}Q_2(s',a')\right|\;\bigg|\;s,a\right]\\
			&\leq \beta\mathbb{E}\left[\max_{a'\in\mathcal{A}}|Q_1(s',a')-Q_2(s',a')|\;\bigg|\;s,a\right]\\
			&\leq \beta\|Q_1-Q_2\|_\infty.
		\end{align*}
		Hence $\mathcal{T}$ is a $\beta$-contraction w.r.t. $\|\cdot\|_\infty$.
		\item Due to the Markov property we have $\mathbb{E}\left[w_k\mid\mathcal{F}_k\right]=0$. Moreover, since $|w_k(s,a)|\leq2\beta\|Q_k\|_\infty\leq 2(1+\|Q_k\|_\infty)$, we have $\mathbb{E}\left[\|w_k\|_\infty^2\mid\mathcal{F}_k\right]\leq 8(1+\|Q_k\|_\infty^2)$.
	\end{enumerate}

	\subsection{Proof of Theorem \ref{thm:Q-learning}}
	\begin{enumerate}
		\item Using Corollary \ref{co:log-dependence}, letting $g(x)=\frac{1}{2}\|x\|_p^2$ with $p=4\log (|\mathcal{S}||\mathcal{A}|)$ and $\mu=(1/2+1/(2\beta))^2-1$, we have in this problem
		\begin{align*}
			\alpha_1&\leq \frac{3}{2}:=\bar{\alpha}_1,&\quad
			\alpha_2&\geq \frac{1-\beta}{2}:=\bar{\alpha}_2\\
			\alpha_3&\leq  \frac{320e\log (|\mathcal{S}||\mathcal{A}|)}{1-\beta}:=\bar{\alpha}_3,&\quad
			\alpha_4&\leq \frac{128e\log (|\mathcal{S}||\mathcal{A}|)}{1-\beta}:=\bar{\alpha}_4.
		\end{align*}
		Applying Corollary \ref{co:constant_step_size} with $\epsilon\leq \bar{\alpha}_2/\bar{\alpha}_3\leq \alpha_2/\alpha_3$, we have
		\begin{align*}
			\mathbb{E}\left[\|Q_k-Q^*\|_\infty^2\right]&\leq{\alpha}_1\|Q_0-Q^*\|_\infty^2(1-\alpha_2\epsilon)^k+\alpha_4\epsilon(1+2\|Q^*\|_\infty^2)/\alpha_2\\
			&\leq\bar{\alpha}_1\|Q_0-Q^*\|_\infty^2(1-\bar{\alpha}_2\epsilon)^k+\bar{\alpha}_4\epsilon(1+2\|Q^*\|_\infty^2)/\bar{\alpha}_2\\
			&= \frac{3}{2}\|Q_0-Q^*\|_\infty^2\left[1-\frac{1}{2}\left(1-\beta\right)\epsilon\right]^k+\frac{256e  \log (|\mathcal{S}||\mathcal{A}|)(1+2\|Q^*\|_\infty^2)}{(1-\beta)^2}\epsilon.
		\end{align*}
		\item When $\epsilon_k=\epsilon/(k+K)$ with $\epsilon=\frac{2}{\bar{\alpha}_2}=\frac{4}{1-\beta}$ and $K=\frac{\bar{\alpha}_3}{\bar{\alpha}_2}=\frac{640e \log|\mathcal{S}||\mathcal{A}|}{(1-\beta)^3}$, we have by Corollary \ref{co:diminishing_step_size} (c) that
		\begin{align*}
			\mathbb{E}\left[\|Q_k-Q^*\|_\infty^2\right]&\leq \alpha_1\|Q_0-Q^*\|_\infty^2\left(\frac{K}{k+K}\right)^{\alpha_2\epsilon}+\frac{4e\epsilon^2\alpha_4(1+2\|Q^*\|_\infty^2)}{\alpha_2\epsilon-1}\frac{1}{k+K}\\
			&\leq \bar{\alpha}_1\|Q_0-Q^*\|_\infty^2\left(\frac{K}{k+K}\right)^{\bar{\alpha}_2\epsilon}+\frac{4e\epsilon^2\bar{\alpha}_4(1+2\|Q^*\|_\infty^2)}{\bar{\alpha}_2\epsilon-1}\frac{1}{k+K}\\
			&\leq 8192e^2(1+2\|Q^*\|_\infty^2+\|Q_0-Q^*\|_\infty^2)\frac{\log (|\mathcal{S}||\mathcal{A}|)}{(1-\beta)^3}\frac{1}{k+K}.
		\end{align*}
	\end{enumerate}

\end{document}